\newcommand*{\colorboxed}{}
\def\colorboxed#1#{
	\colorboxedAux{#1}
}
\newcommand*{\colorboxedAux}[3]{
	\begingroup
	\colorlet{cb@saved}{.}
	\color#1{#2}
	\boxed{
		\color{cb@saved}
		#3
	}
	\endgroup
}
\definecolor{DarkBlue}{RGB}{22,54,93}
\theoremstyle{plain}
\newtheorem{theorem}{Theorem}[section]
\newtheorem{lemma}[theorem]{Lemma}
\newtheorem{corollary}[theorem]{Corollary}
\theoremstyle{definition}
\newtheorem{definition}[theorem]{Definition}
\theoremstyle{remark}
\date{}
\author
{
        Songtao Feng\thanks{\small Department of Electrical and Computer Engineering, The University of Florida, FL 32603, USA; e-mail: {\tt   sfeng1@ufl.edu}}
        ,~~~Ming Yin\thanks{\small Department of Computer Science, UC Santa Barbara, CA 93106, USA; e-mail: {\tt  ming\_yin@ucsb.edu}}
        ,~~~Yu-Xiang Wang\thanks{\small Department of Computer Science, UC Santa Barbara, CA 93106, USA; e-mail: {\tt   yuxiangw@cs.ucsb.edu}}
	,~~~Jing Yang\thanks{\small School of Electrical Engineering and Computer Science, The Pennsylvania State University, University Park, PA 16802, USA; e-mail: {\tt  yangjing@psu.edu }}
	,~~~Yingbin Liang\thanks{\small Department of Electrical and Computer Engineering, The Ohio State University, OH 43210, USA; e-mail: {\tt   liang.889@osu.edu}}
}
 \definecolor{DarkBlue}{RGB}{22,54,93}
 \theoremstyle{plain}
\crefname{assumption}{assumption}{assumptions}
 \newcommand{\Pb}{\mathbb{P}}
 \newcommand{\Rb}{\mathbb{R}}
 \newcommand{\Eb}{\mathbb{E}}
 \newcommand{\Db}{\mathbb{D}}
 \newcommand{\Sc}{\mathcal{S}}
 \newcommand{\Ac}{\mathcal{A}}
 \newcommand{\Bc}{\mathcal{B}}
\newcommand{\Lc}{\mathcal{L}}
 \newcommand{\URM}[1]{\left(\mathrm{\uppercase\expandafter{\romannumeral#1}}\right)}
 \newcommand{\lv}{\mathbf{1}}
\newcommand{\reff}{\mathrm{ref}}
\newcommand{\REFF}{\mathrm{REF}}
\newcommand{\out}{\mathrm{out}}
 \newenvironment{Proof}[1]{\medskip\par\noindent
 	{\bf Proof:\,}\,#1}{{\mbox{\,$\blacksquare$}\par}}
 \newcommand{\norm}[1]{\left\lVert#1\right\rVert}
\newcommand{\uset}[3][0ex]{
  \mathrel{\mathop{#3}\limits_{
    \vbox to#1{\kern -5\ex@
    \hbox{$#2$}\vss}}}}
\title{Improving Sample Efficiency of Model-Free \\Algorithms for Zero-Sum Markov Games}
\begin{document}
\maketitle
\begin{abstract}
The problem of two-player zero-sum Markov games has recently attracted increasing interests in theoretical studies of multi-agent reinforcement learning (RL). In particular, for finite-horizon episodic Markov decision processes (MDPs), it has been shown that model-based algorithms can find an $\epsilon$-optimal Nash Equilibrium (NE) with the sample complexity of $O(H^3SAB/\epsilon^2)$, which is optimal in the dependence of the horizon $H$ and the number of states $S$ (where $A$ and $B$ denote the number of actions of the two players, respectively). However, none of the existing model-free algorithms can achieve such an optimality. In this work, we propose a model-free stage-based Q-learning algorithm and show that it achieves the same sample complexity as the best model-based algorithm, and hence for the first time demonstrate that model-free algorithms can enjoy the same optimality in the $H$ dependence as model-based algorithms. The main improvement of the dependency on $H$ arises by leveraging the popular variance reduction technique based on the reference-advantage decomposition previously used only for single-agent RL. However, such a technique relies on a critical monotonicity property of the value function, which does not hold in Markov games due to the update of the policy via the coarse correlated equilibrium (CCE) oracle. Thus, to extend such a technique to Markov games, our algorithm features a key novel design of updating the reference value functions as the pair of optimistic and pessimistic value functions whose value difference is the smallest in history in order to achieve the desired improvement in the sample efficiency.
\end{abstract}

\section{Introduction}
Multi-agent reinforcement learning (MARL) commonly refers to the sequential decision making framework, in which more than one agent learn to make decisions in an unknown shared environment to maximize their cumulative rewards. MARL has achieved great success in a variety of practical applications, including the game of GO~\cite{silver:2016,Silver:GO:2017}, real-time strategy games involving team play~\cite{Oriol:Starcraft:2019}, autonomous driving~\cite{ShalevShwartz:driving:2016}, and behavior learning in complex social scenarios~\cite{Baker:emergency:2020}. Despite the great empirical success, one major bottleneck for many RL algorithms is that they require enormous samples. For example, in many practical MARL scenarios, a large number of samples are often required to achieve human-like performance due to the necessity of exploration. It is thus important to understand how to design sample-efficient algorithms.

As a prevalent approach to the MARL, model-based methods use the existing visitation data to estimate the model, run a planning algorithm on the estimated model to obtain the policy, and execute the policy in the environment. In two-player zero-sum Markov games, an extensive series of studies \cite{Bai_Yu:2020:2020,Kaiqing_Zhang:game:2020,Qinghua_Liu:ICML:2021} have shown that {\em model-based} algorithms are provably efficient in MARL, and can achieve minimax-optimal sample complexity $O(H^3SAB/\epsilon^2)$ except for the term $AB$ \cite{Kaiqing_Zhang:game:2020,Qinghua_Liu:ICML:2021}, where $H$ denotes the horizon, $S$ denotes the number of states, and $A$ and $B$ denote the numbers of actions of the two players, respectively. On the other hand, {\em model-free} methods directly estimate the (action-)value functions at the equilibrium policies instead of estimating the model. However, none of the existing {\em model-free} algorithms can achieve the aforementioned optimality (attained by model-based algorithms) \cite{Bai_Yu;Neurips:2020,Weichao_Mao:game:2021,Ziran_Song:game:2022,Chi_Jin:vlearning:2022,Wenchao_Mao:ICML:2022}. Specifically, the number of episodes required for model-free algorithms scales sub-optimally in step $H$, which naturally motivates the following open question:
\begin{center}\bf{
Can we design model-free algorithms with the optimal sample dependence on the time horizon \\
for learning two-player zero-sum Markov games?
}
\end{center}

%To this end, there exist a few challenges: (a) We need to incorporate variance reduction technique in our design of the model-free Q-learning algorithm; and (b) We need to develop new techniques for the analysis in the two-player zero-sum Markov game due to the coarse correlated equilibrium (CCE) oracle. 

In this paper, we give an affirmative answer to the above question. We highlight our main contributions as follows.

\noindent{\bf Algorithm design.} We design a new model-free algorithm of Q-learning with {\bf min-gap} based reference-advantage decomposition. In particular, we extend the reference-advantage decomposition technique \cite{Zihan_Zhang:RL:tabular} proposed for single-agent RL to zero-sum Markov games with the following key novel design. Unlike the single-agent scenario, the optimistic (or pessimistic) value function in Markov games does not necessarily preserve the monotone property due to the nature of the CCE oracle. In order to obtain the ``best" optimistic and pessimistic value function pair, we update the reference value functions as the pair of optimistic and pessimistic value functions whose value difference is the smallest (i.e., with the minimal gap) in the history. Moreover, our algorithm relies on the stage-based approach, which simplifies the algorithm design and subsequent analysis. 

\noindent{\bf Sample complexity bound.} We show that our algorithm provably finds an $\epsilon$-optimal Nash equilibrium for the two-player zero-sum Markov game in $\widetilde{O}(H^3SAB/\epsilon^2)$ episodes, which improves upon the sample complexity of all existing model-free algorithms for zero-sum Markov game. Further, comparison to the existing lower bound shows that it is minimax-optimal on the dependence of $H$, $S$ and $\epsilon$. This is the first result that establishes such optimality for model-free algorithms, although model-based algorithms have been shown to achieve such optimality in the past \cite{Qinghua_Liu:ICML:2021}. 

\noindent{\bf Technical analysis.} We establish a few new properties on the cumulative occurrence of the large V-gap and the cumulative bonus term to enable the upper-bounding of several new error terms arising due to the incorporation of the new min-gap based reference-advantage decomposition technique. These properties have not been established for the single-agent RL with such a technique, because our properties are established for policies generated by the CCE oracle in zero-sum Markov games.
Further, the analysis of both the optimistic and pessimistic accumulative bonus terms requires a more refined analysis compared to their counterparts in single-agent RL \cite{Zihan_Zhang:RL:tabular}.

%Technically, our new technical development lies in bounding various error terms arising due to incorporating our new min-gap based reference-advantage decomposition technique and the CCE oracle in two-player zero-sum Markov games, including bounding both the occurrence of the large V-gap and the accumulate bonus term. Further, the analysis of both the optimistic and pessimistic accumulative bonus terms requires a more refined analysis compared to their counterparts in single-agent RL \cite{Zihan_Zhang:RL:tabular}.

\subsection{Related Work}
\noindent{\bf Markov games.} The Markov game, also known as the stochastic game, was first proposed in \cite{Shapley:game:1953} to model the multi-agent RL. Early attempts to find the Nash equilibra of Markov games include \cite{Littman:game:1994,Junling_Hu:game:2003,Hansen:game:2013,Chung-Wei_Lee:game:2020}. However, they often relied on strong assumptions such as known transition matrix and reward, or focused on the asymptotic setting. Thus, these results do not apply to the non-asymptotic setting where the transition and reward are unknown and only limited data is available. 

There is a line of works focusing on non-asymptotic guarantees with certain reachability assumptions. A popular approach is to assume access to simulators, which enables the agent to sample transition and reward directly for any state-action pair \cite{Zeyu_Jia:game:2019,sidford:game:2020,Kaiqing_Zhang:game:2020,Gen_Li:game:2022}. Alternatively, \cite{Chen-Yu_Wei:game:2017} studied the Markov game under the assumption that one player can always reach all states by playing certain policy no matter what strategy the other player sticks to.

\noindent{\bf Two-player zero-sum games.} \cite{Bai_Yu:2020:2020,Qiaomin_Xie:MG:2020} initialized the study of non-asymptotic guarantee for two-player zero-sum Markov games without reachability assumptions. \cite{Bai_Yu:2020:2020} proposed a model-based algorithm for tabular Markov game while \cite{Qiaomin_Xie:MG:2020} considered linear function approximation in game and adopted a model-free approach. \cite{Qinghua_Liu:ICML:2021} proposed a model-based algorithm which achieves the minimax-optimal samples complexity $O(H^3SAB/\epsilon)$ except for the $AB$ term. For the discounted setting and having access to a generative model, \cite{Kaiqing_Zhang:game:2020} developed a model-based algorithm that achieves the minimax-optimal sample complexity except for the $AB$ term. Then, model-free Nash Q-learning and Nash V-learning were proposed in \cite{Bai_Yu;Neurips:2020} for two-player zero-sum game to achieve optimal dependence on actions (i.e., $(A+B)$ instead of $AB$). Further, \cite{Zixiang_Chen:game_func_approx:2022,Baihe_Huang:game_func_approx:2022} studied the two-player zero-sum game under linear and general function approximation.

\noindent{\bf Multi-player general-sum games.} \cite{Qinghua_Liu:ICML:2021} developed model-free algorithm in episodic setting, which suffers from the curse of multi-agent. To alleviate this issue, \cite{Weichao_Mao:game:2021,Ziran_Song:game:2022,Chi_Jin:vlearning:2022,Wenchao_Mao:ICML:2022} proposed V-learning algorithm, coupled with the adversarial bandit subroutine, to break the curse of multi-agent. \cite{Weichao_Mao:game:2021} considered learning an $\epsilon$-optimal CCE and used V-learning with stabilized online mirror descent as the adversarial bandit subroutine. Both \cite{Ziran_Song:game:2022,Chi_Jin:vlearning:2022} utilized the weighted follow the regularized leader (FTRL) algorithm as the adversarial subroutine, and considered $\epsilon$-optimal CCE and $\epsilon$-optimal correalted equilibrium (CE). The work \cite{Wenchao_Mao:ICML:2022} featured the standard uniform weighted FTRL and staged-based design, both of which simplifies the algorithm design and the corresponding analysis. While the V-learning algorithms generate non-Markov, history dependent policies, \cite{Daskalakis:game:2022, Yuanhao_Wang:game:2023} learned an approximate CCEs that is guaranteed to be Markov.

\noindent{\bf Markov games with function approximation.} Recently, a few works considered learning in Markov games with linear function approximation~\cite{Qiaomin_Xie:MG:2020,Zixiang_Chen:game_func_approx:2022} and general function approximation \cite{Chi_Jin:Game_func_approx:2022,Baihe_Huang:game_func_approx:2022,Wenhao_Zhan:game_func_appro:2023,Wei_Xiong:game:2022,Fan_Chen:game:2022,Chengzhuo_Ni:game:2023}. While all of the works require centralized function classes and suffer from the curse of multi-agency, \cite{Qiwen:Cui:game_func_approx:2023,Yuanhao_Wang:game:2023} proposed decentralized MARL algorithms to resolve the issue under linear and general function approximation.

\noindent{\bf Single-agent RL.} Broadly speaking, our work is also related to single-agent RL \cite{Peter_auer:adversarial_MDP:NeurIPS:2008,Osband_adversarial_MDP:2017,Dann:single_agent:2017,Chi_Jin:Q_learning:2018,yin2021near,Zihan_Zhang:RL:tabular}. As a special case of Markov games, only one agent interacts with the environment in single-agent RL. For tabular episodic setting, the minimax-optimal sample complexity is $\widetilde{O}(H^3SA/\epsilon^2)$, achieved by a model-based algorithm in \cite{Osband_adversarial_MDP:2017} and a model-free algorithm in \cite{Zihan_Zhang:RL:tabular}. Technically, the reference-advantage decomposition used in our algorithm is similar to that of \cite{Zihan_Zhang:RL:tabular}, as both employ variance reduction techniques for faster convergence. However, our approaches differ significantly, particularly in the way of handling the interplay between the CCE oracle and the reference-advantage decomposition in the context of two-player zero-sum Markov game.

\section{Preliminaries}
We consider the tabular episodic two-player zero-sum Markov game $\mathrm{MG}(H,\Sc,\Ac,\Bc,P,r)$, where $H$ is the number of steps in each episode, $\Sc$ is the set of states with $|\Sc|=S$, $(\Ac,\Bc)$ are the sets of actions of the max-player and the min-player respectively with $|\Ac|= A$ and $|\Bc|= B$, $P=\{P_h\}_{h\in[H]}$ is the collection of the transition matrices with $P_h:\Sc\times\Ac\times\Bc\mapsto \Sc$, $r=\{r_h\}_{h\in[H]}$ is the collection of deterministic reward functions with $r_h:\Sc\times\Ac\times\Bc\mapsto [0,1]$. Here the reward represents both the gain of the max-player and the loss of the min-player. We assume each episode starts with a fixed initial state $s_1$.

Suppose the max-player and the min-player interact with the environment sequentially captured by the Markov game $\mathrm{MG}(H,\Sc,\Ac,\Bc,P,r)$. At each step $h\in[H]$, both players observe the state $s_h\in\Sc$, take their actions $a_h\in\Ac$ and $b_h\in\Bc$ simultaneously, receive the reward $r_h(s_h,a_h,b_h)$, and then the Markov game evolves into the next state $s_{h+1}\sim P_h(\cdot|s_h,a_h,b_h)$. The episode ends when $s_{H+1}$ is reached.

\noindent{\bf Markov policy, value function.} A Markov policy $\mu$ of the max-player is the collection of the functions $\{\mu_h:\Sc\mapsto\Delta_\Ac\}_{h\in[H]}$, each of which maps from a state to a distribution over actions. Similarly, a policy $\nu$ of the min-player is the collection of functions $\{\nu_h:\Sc\mapsto\Delta_\Bc\}_{h\in[H]}$. We use $\mu_h(a|s)$ and $\nu_h(b|s)$ to denote the probability of taking actions $a$ and $b$ given the state $s$ under the Markov policies $\mu$ and $\nu$ at step $h$, respectively. 

Given a max-player policy $\mu$, a min-player policy $\nu$, and a state $s$ at step $h$, the value function is defined as
\begin{align*}
V_h^{\mu,\nu}(s)=\underset{{(s_{h'},a_{h'},b_{h'})\sim(\mu,\nu)}}{\Eb}\left[\sum_{h'=h}^H r_{h'}(s_{h'},a_{h'},b_{h'})\middle| s_h=s\right].
\end{align*}
For a given $(s,a,b)\in\Sc\times\Ac\times\Bc$ under a max-player policy $\mu$ and a min-player policy $\nu$ at step $h$, we define 
\begin{align*}
Q_h^{\mu,\nu}(s,a,b)=\underset{{(s_{h'},a_{h'},b_{h'})\sim(\mu,\nu)}}{\Eb}\left[\sum_{h'=h}^H r_{h'}(s_{h'},a_{h'},b_{h'})\middle| s_h=s,a_h=a,b_h=b\right].
\end{align*}
For ease of exposition, we define $(P_hf)(s,a,b)=\Eb_{s'\sim P_h(\cdot|s,a,b)}[f(s')]$ for any function $f:\Sc\mapsto\Rb$, and $(\Db_{\pi}g)(s)=\Eb_{(a,b)\sim\pi(\cdot,\cdot|s)}[g(s,a,b)]$ for any function $g:\Sc\times\Ac\times\Bc$. Then, the following Bellman equations hold for all $(s,a,b,h)\in\Sc\times\Ac\times\Bc\times[H]$:
\begin{align*}
&Q_h^{\mu,\nu}(s,a,b)=(r_h+P_hV_{h+1}^{\mu,\nu})(s,a,b), \quad V_h^{\mu,\nu}(s)=(\Db_{\mu_h\times\nu_h} Q_h^{\mu,\nu})(s), \quad  
V_{H+1}^{\mu,\nu}(s)=0.
\end{align*}

\noindent{\bf Best response, Nash equilibrium (NE).} For any Markov policy $\mu$ of the max-player, there exists a {\em best response} of the min-player, which is a policy $\nu^\dagger(\mu)$ satisfying $V_h^{\mu,\nu^\dagger(\mu)}(s)=\inf_{\nu}V_h^{\mu,\nu}$ for any $(s,h)\times\Sc\times[H]$. We denote $V_h^{\mu,\dagger}=V_h^{\mu,\nu^\dagger(\mu)}$. Similarly, the best response of the max-player with respect to the Markov policy $\nu$ of the min-player is a policy $\mu^\dagger(\nu)$ satisfying $V_h^{\mu^\dagger(\nu),\nu}(s)=\sup_\mu V_h^{\mu,\nu}$ for any $(s,h)\times\Sc\times[H]$, and we use $V_h^{\dagger,\nu}$ to denote $V_h^{\mu^\dagger(\nu),\nu}$. Further, there exists Markov policies $\mu^*,\nu^*$, which are optimal against the best responses of the other player \citep{filar:book:1997}, i.e.,
\begin{align*}
V_h^{\mu^*,\dagger}(s)=\sup_\mu V_h^{\mu,\dagger}(s), \quad V_h^{\dagger,\nu}(s)=\inf_\nu V_h^{\dagger,\nu}, 
\end{align*}
for all $(s,h)\in\Sc\times[H]$. We call the strategies $(\mu^*,\nu^*)$  the {\em Nash equilibrium} of a Markov game, if they satisfy the following minimax equation
\begin{align*}
\sup_\mu \inf_\nu V_h^{\mu,\nu}(s)=V_h^{\mu^*,\nu^*}(s)=\inf_\nu \sup_\mu V_h^{\mu,\nu}(s).
\end{align*}

\noindent{\bf Learning objective.} We consider the Nash equilibrium of Markov games. We measure the sub-optimality of any pair of general policies $(\mu,\nu)$ using the following gap between their performance and the performance of the optimal strategy (i.e., Nash equilibrium) when playing against the best responses respectively:
\begin{align*}
V_1^{\dagger,\nu}(s_1)-V_1^{\mu,\dagger}(s_1)=\left(V_1^{\dagger,\nu}(s_1)-V_1^*(s_1)\right)+\left(V_1^*(s_1)-V_1^{\mu,\dagger}(s_1)\right).
\end{align*}
\begin{definition}[$\epsilon$-optimal Nash equilibrium (NE)]
A pair of general policies $(\mu,\nu)$ is an $\epsilon$-optimal Nash equilibrium if $V_1^{\dagger,\nu}(s_1)-V_1^{\mu,\dagger}(s_1)\leq \epsilon$.
\end{definition}

Our goal is to design algorithms for two-player zero-sum Markov games that can find an $\epsilon$-optimal NE using a number episodes that is small in its dependency on $S,A,B,H$ as well as $1/\epsilon$.

\section{Algorithm Design}\label{sec:alg}

In this section, we propose an algorithm called Q-learning with min-gap based reference-advantage decomposition (Algorithm~\ref{alg:1-simple}), for learning $\epsilon$-optimal Nash Equilibrium in two-player zero-sum Markov games. Our algorithm builds upon the Nash Q-learning framework~\cite{Bai_Yu;Neurips:2020} for two-player zero-sum Markov game but incorporates a novel {\bf min-gap} based reference-advantage decomposition technique and stage-based update design, which were originally proposed to achieve optimal performance in model-free single-agent RL. We start by reviewing the algorithm with reference-advantage decomposition in single agent RL \cite{Zihan_Zhang:RL:tabular}.

%Our algorithm builds upon the Nash Q-learning framework~\cite{Bai_Yu;Neurips:2020} but incorporates a novel {\bf min-gap} based reference-advantage decomposition technique and stage-based update design. We update the (action-)value functions at the end of each stage, and the policy is computed by the CCE oracle (see below).

\noindent{\bf Reference-advantage decomposition in single-agent RL.} In single-agent RL, we greedily select and action to maximize the action value function $\overline{Q}_h(s,a)$ to obtain the optimistic value function $\overline{V}_h(s)=\max_a\overline{Q}_h(s,a)$, and the action-value function update follows $\overline{Q}_h(s,a)\leftarrow \min\{\overline{Q}_h^{(1)}(s,a),\overline{Q}_h^{(2)}(s,a),\overline{Q}_h(s,a)\}$, where $\overline{Q}_h^{(1)}, \overline{Q}_h^{(2)}$ represent the standard update rule and the advantage-based update rule
{
\begin{align*}
&\overline{Q}_h^{(1)}\leftarrow r_h(s,a)+\widehat{P_h\overline{V}_{h+1}}(s,a)+\mathrm{bonus}_1, \\
&\overline{Q}_h^{(2)}\leftarrow r_h(s,a)+\widehat{P_h \overline{V}_{h+1}^\reff}(s,a) \\
&\qquad \qquad \qquad \quad +\widehat{P_h(\overline{V}_{h+1}-\overline{V}_{h+1}^\reff)}(s,a) 
+ \mathrm{bonus}_2.
\end{align*}
}In standard update rule, one major drawback is that the early samples collected for estimating $\overline{V}_{h+1}$ at that moment deviates from the true value of $\overline{V}_{h+1}$, and we have to only use the latest samples to estimate $\widehat{P_h\overline{V}_{h+1}}(s,a)$ in order not to ruin the whole estimate, which leads to the suboptimal sample complexity of such an algorithm. To achieve the optimal sample complexity, reference-advantage decomposition was introduced. At high level, we first learn an accurate estimation $\overline{V}_h^{\reff}$ of the optimal value function $V_h^*$ satisfying $V_h^*(s)\leq V_h^{\reff}(s)\leq V_h^*(s)+\beta$, where the accuracy is controlled by parameter $\beta$ independent of the number of episodes $K$. For the second term, since $\overline{V}_{h+1}^\reff$ is almost fixed, we are able to conduct the estimate using all collected samples. For the third term, we still have to only use the latest samples to limit the deviation error. Thanks to the reference-advantage decomposition, and since $\overline{V}_{h+1}$ is learned based on $\overline{V}_{h+1}^\reff$, and $\overline{V}_{h+1}^\reff$ is already an accurate estimate of $V_{h+1}^*$, it turns out that estimating $\overline{V}_{h+1}-\overline{V}_{h+1}^\reff$ instead of directly estimating $\overline{V}_{h+1}$ offsets the weakness of using the latest samples.

In single-agent RL, one key design to facilitate the reference-advantage decomposition is to ensure that the action-value function $\overline{Q}_h(s,a)$ is non-increasing. Observe that the optimistic value function $\overline{V}_h(s)$ preserves the monotonic structure as long as the optimistic action-value function $\overline{Q}_h(s)$ is non-increasing, since $\overline{V}_h^{k+1}(s)=\max_a\overline{Q}_h^{k+1}(s,a)\leq\max_a\overline{Q}_h^{k}(s,a)=\overline{V}_h^{k}(s)$. When enough samples are collected, the reference value $\overline{V}^\reff$ is then updated as the latest optimistic value function, which we remark is also the smallest optimistic value function in the up-to-date learning history.

%we integrate both the standard update rule and the advantage-based update rule into the update of the action-value function $\overline{Q}_h(s,a)$. In standard update rule, we have $\overline{Q}_h(s,a)\leftarrow r_h(s,a)+\widehat{P_h\overline{V}_{h+1}}(s,a)+\mathrm{bonus}$. In advantage-based rule, $\overline{Q}_h(s,a)\leftarrow r_h(s,a,b)+\widehat{P_h \overline{V}_{h+1}^\reff}(s,a,b)+\widehat{P_h(\overline{V}_{h+1}-\overline{V}_{h+1}^\reff)}(s,a,b) + \mathrm{bonus}$

%In single-agent RL, we greedily select an action to maximize the action-value function $\overline{Q}_h(s,a)$, and obtain the optimistic value function $\overline{V}_h(s)=\max_a\overline{Q}_h(s,a)$. One key observation is that the optimistic value function $\overline{V}_h(s)$ preserves the monotonic structure if the optimistic action-value function $\overline{Q}_h(s)$ is non-increasing, since $\overline{V}_h^{k+1}(s)=\max_a\overline{Q}_h^{k+1}(s,a)\leq\max_a\overline{Q}_h^{k}(s,a)=\overline{V}_h^{k}(s)$. Then, the reference value is updated as the latest optimistic value function, which we remark is also the smallest value function in the up-to-date learning history.

\noindent{\bf Min-gap\footnote{We remark that min-gap has nothing to do with the notion of gap in gap-dependent RL.} based reference-advantage decomposition.} In the two-player zero-sum game, we keep track of both the optimistic and the pessimistic action-value functions, and update the value functions using the CCE oracle at the end of each stage. Unlike the single-agent scenario, the optimistic (or pessimistic) value function does not necessarily preserve the monotone property even if the optimistic (or pessimistic) action-value function is non-increasing (or non-decreasing) due to the nature of the CCE oracle. In order to obtain the ``best'' optimistic and pessimistic value function pair, we come up with the key novel ``{\bf min-gap}" design where we update the reference value functions as the pair of optimistic and pessimistic value functions whose value difference is the smallest in the history (line 12-15). Formally, we define the min-gap $\Delta(s,h)$ for a state $s$ at step $h$ to keep track of the smallest value difference between optimistic and pessimistic value functions in the history, and the corresponding pair of value functions are recorded (line 12-13). When enough samples are collected (line 14-15), the pair of reference value functions is then set to be the pair of optimistic and pessimistic value functions whose value difference is the smallest in the history.

\begin{algorithm}[ht]
\begin{algorithmic}[1]
\caption{Q-learning with min-gap based reference-advantage decomposition (Algorithm~\ref{alg:1} sketch)}\label{alg:1-simple}
\STATE{Set accumulators and (action)-value functions properly, and initialize the gap $\Delta(s,h)=H$.} 
\FOR{episodes $k\leftarrow 1,2,\ldots,K$}
\FOR{$h\leftarrow 1,2,\ldots,H$}
\STATE{Take action $(a_h,b_h)\leftarrow\pi_h(s_h)$}
\STATE{Receive $r_h(s_h,a_h,b_h)$, and observe $s_{h+1}$.}
\STATE{Update accumulators.}
\IF{$n\in\Lc$}
\STATE{ $\overline{Q}_h(s_h,a_h,b_h)\leftarrow \min\{\overline{Q}_{h}^{(1)}(s_h,a_h,b_h),$}
\STATE{\qquad\qquad\quad $\overline{Q}_{h}^{(2)}(s_h,a_h,b_h),\overline{Q}_h(s_h,a_h,b_h)\}$.}
\STATE{ $\underline{Q}_h(s_h,a_h,b_h)\leftarrow \max\{\underline{Q}_{h}^{(1)}(s_h,a_h,b_h),$}
\STATE{\qquad\qquad\quad$\underline{Q}_{h}^{(2)}(s_h,a_h,b_h),\underline{Q}_h(s_h,a_h,b_h)\}$.}
\STATE{$\pi_{h}(s_{h})\leftarrow\mathrm{CCE}(\overline{Q}(s_h,\cdot,\cdot),\underline{Q}_h(s_h,\cdot,\cdot))$.}
\STATE{$\overline{V}_h(s_h)\leftarrow \Eb_{(a,b)\sim\pi_{h}(s_{h})}\overline{Q}_h(s_h,a,b)$.}
\STATE{$\underline{V}_h(s_h)\leftarrow \Eb_{(a,b)\sim\pi_{h}(s_{h})}\underline{Q}_h(s_h,a,b)$.}
\STATE{Reset all intra-stage accumulators to 0.}
\IF{$\overline{V}_h(s_h)-\underline{V}_h(s_h)<\Delta(s,h)$} 
\STATE{$\Delta(s,h)=\overline{V}_h(s_h)-\underline{V}_h(s_h)$.}
\STATE{$\widetilde{\overline{V}}_h(s_h)=\overline{V}_h(s_h)$.}
\STATE{$\widetilde{\underline{V}}_h(s_h)=\underline{V}_h(s_h)$.}
%\STATE{$\widetilde{\overline{V}}_h(s_h)=\overline{V}_h(s_h)$, $\widetilde{\underline{V}}_h(s_h)=\underline{V}_h(s_h)$.}
\ENDIF
\ENDIF
\IF{$\sum_{a,b}N_h(s_h,a,b)=N_0$}
\STATE{$\overline{V}_h^\reff(s_h)\leftarrow \widetilde{\overline{V}}_h(s_h)$.}
\STATE{$\underline{V}_h^\reff(s_h)\leftarrow \widetilde{\underline{V}}_h(s_h)$.} 
\ENDIF
\ENDFOR
\ENDFOR
\end{algorithmic}
\end{algorithm}

Now we introduce reference-advantage decomposition to the two-player zero-sum game. For ease of exposition, we use $\mathrm{bonus}_i$ to represent different exploration bonus, which is specified in line 9-11 of Algorithm~\ref{alg:1}. In standard update rule, we have 
{\footnotesize
\begin{align}
\overline{Q}_h^{(1)}(s,a,b)\leftarrow r_h(s,a,b)+\widehat{P_h\overline{V}_{h+1}}(s,a,b)+\mathrm{bonus}_3, \label{eqn:rev-1}\\
\underline{Q}_h^{(1)}(s,a,b)\leftarrow r_h(s,a,b)+\widehat{P_h\underline{V}_{h+1}}(s,a,b)+\mathrm{bonus}_4,\label{eqn:rev-2}
\end{align}
}where $\widehat{P_h\overline{V}_{h+1}},\widehat{P_h\underline{V}_{h+1}}$ are the empirical estimate of $P_h\overline{V}_{h+1},P_h\underline{V}_{h+1}$. Similar to the single-agent RL, the standard update rule suffers from the large deviation between $\overline{V}_{h+1}$ learned by the early samples and the value of Nash equilibrium. As a result, we have to use only the samples from the last stage (i.e., the latest $O(1/H)$ fraction of samples, see stage-based update approach below) to estimate $P_h\overline{V}_{h+1}$. In order to improve the horizon dependence, we incorporate the advantage-based update rule
{\footnotesize
\begin{align}
&\overline{Q}_h^{(2)}(s,a,b)\leftarrow r_h(s,a,b)+\widehat{P_h \overline{V}_{h+1}^\reff}(s,a,b) \nonumber \\
&\qquad \qquad \qquad +\widehat{P_h(\overline{V}_{h+1}-\overline{V}_{h+1}^\reff)}(s,a,b) + \mathrm{bonus}_5, \label{eqn:rev-3}\\
&\underline{Q}_h^{(2)}(s,a,b)\leftarrow r_h(s,a,b)+\widehat{P_h \underline{V}_{h+1}^\reff}(s,a,b) \nonumber \\
&\qquad \qquad \qquad +\widehat{P_h(\underline{V}_{h+1}-\underline{V}_{h+1}^\reff)}(s,a,b) + \mathrm{bonus}_6, \label{eqn:rev-4}
\end{align}
}where the middle terms in (\ref{eqn:rev-3}) are the empirical estimates of $P_h\overline{V}_{h+1}^\reff$ and $P_h(\overline{V}_{h+1}-\overline{V}_{h+1}^\reff)$, and the middle terms in (\ref{eqn:rev-4}) are the empirical estimates of $P_h\underline{V}_{h+1}^\reff$ and $P_h(\underline{V}_{h+1}-\overline{V}_{h+1}^\reff)$. We still need to use only the samples from the last stage to limit the deviation for the third terms in both (\ref{eqn:rev-3}) and (\ref{eqn:rev-4}). For ease of exposition, assume we have access to a $\beta$-optimal $\overline{V}^\reff, \underline{V}^\reff$. Thanks to the min-gap based reference-advantage decomposition, the learned $\overline{V}_{h+1}$ (or $\underline{V}_{h+1}$) is learned based on $\overline{V}_{h+1}^\reff$ (or $\underline{V}_{h+1}^\reff$), and $\overline{V}^\reff$ (or $\underline{V}^\reff$) is already an accurate estimate of $V_{h+1}^*$, it turns out that estimating $\overline{V}_{h+1}-\overline{V}_{h+1}^\reff$ (or $\underline{V}_{h+1}-\underline{V}_{h+1}^\reff$) instead of directly estimating $\overline{V}$ (or $\underline{V}$) offsets the weakness of using only $O(1/H)$ fraction of data. Further, since $\overline{V}^\reff, \underline{V}^\reff$ is fixed, we are able to use all samples collected to estimate the second term, without suffering any deviation. Now we remove the assumption that $\overline{V}^\reff, \underline{V}^\reff$ is fixed. Note that $\beta$ is selected independently of $K$. Therefore, learning a $\beta$-optimal reference value function $\overline{V}^\reff,\underline{V}^\reff$ only incurs lower order terms in our final result.

\noindent{\bf Stage-based update approach.} For each tuple $(s,a,b,h)\in\Sc\times\Ac\times\Bc\times[H]$, we divide the visitations for the tuple into consecutive stages. The length of each stage increases exponentially with a growth rate $(1+1/H)$. Specifically, we define $e_1=H$, and $e_{i+1}=\lfloor(1+1/H)e_i\rfloor$ for all $i\geq 1$, to denote the lengths of stages. Further, we also define $\Lc=\{\sum_{i=1}^{j}e_i|j=1,2,3,\ldots\}$ to denote the the set of ending indices of the stages. For each $(s,a,b,h)$ tuple, we update both the optimistic and pessimistic value estimates at the end of each stage (i.e., when the total number of visitations of $(s,a,b,h)$ lies in $\Lc$), using samples only from this single stage (line 
 6-15). This updating rule ensures that only the last $O(1/H)$ fraction of the collected samples are used to estimate the value estimates.

\noindent{\bf Coarse correlated equilibrium (CCE).} We use the CCE oracle to update the policy (line 14). The CCE oracle was first introduced in \cite{Qiaomin_Xie:MG:2020} and an $\epsilon$-optimal CCE is shown to be a $O(\epsilon)$-optimal Nash equilibrium in two-player zero-sum Markov games~\cite{Qiaomin_Xie:MG:2020}. For any pair of matrices $\overline{Q},\underline{Q}\in [0,H]^{A\times B}$, $\mathrm{CCE}(\overline{Q},\underline{Q})$ returns a distribution $\pi\in\Delta_{\Ac\times\Bc}$ such that
{
\begin{align*}
&\Eb_{(a,b)\sim\pi}\overline{Q}(a,b)\geq \sup_{a^*}\Eb_{(a,b)\sim\pi}\overline{Q}(a^*,b), \\
&\Eb_{(a,b)\sim\pi}\underline{Q}(a,b)\leq \inf_{b^*}\Eb_{(a,b)\sim\pi}\overline{Q}(a,b^*).
\end{align*}
}The players choose their actions in a potentially correlated way so that no one can benefit from unilateral unconditional deviation. Since Nash equilibrium is also a CCE and a Nash equilibrium always exists, a CCE therefore always exist. Moreover, CCE can be efficiently implemented by linear programming in polynomial time. We remark that the policies generated by CCE are in general correlated, and executing such policies requires the cooperation of the two players (line 6).

\noindent{\bf Algorithm description.} For clarity, we provide a schematic algorithm here (Algorithm~\ref{alg:1-simple}) and defer the detail to the appendix (Algorithm~\ref{alg:1}). 
Besides the standard optimistic and pessimistic value estimates update $\overline{Q}_h(s,a,b)$,  $\overline{V}_h(s)$, $\underline{Q}_h(s,a,b)$, $\underline{V}_h(s)$, and the reference value functions $\overline{V}_h^\reff(s)$, $\underline{V}_h^\reff(s)$, the algorithm keeps multiple different accumulators to facilitate the update: 1) $N_h(s,a,b)$ and $\check{N}_h(s,a,b)$ are used to keep the total visit number and the visits counting for the current stage with respect to $(s,a,b,h)$, respectively. 2) Intra-stage accumulators are used in the latest stage and are reset at the beginning of each stage. 3) The global accumulators are used for the samples in all stages:
All accumulators are initialized to $0$ at the beginning of the algorithm. The details of the accumulators are deferred to Appendix~\ref{app-full-alg}.

The algorithm set $\iota=\log(2/\delta)$, $\beta=O(1/H)$ and $N_0=c_4SABH^5\iota/\beta^2$ for some sufficiently large universal constant $c_4$, denoting the number of visits required to learn $\beta$-accurate pair of reference value functions.

{\bf Certified policy.}
Based on the policy trajectories collected from Algorithm~\ref{alg:1}, we construct an output policy profile $(\mu^{\out},\nu^\out)$ that we will show is an approximate NE. For any step $h\in[H]$, an episode $k\in[K]$ and any state, we let $\mu_h^k(\cdot|s)\in\Delta(\Ac)$ and $\nu_h^k(\cdot|s)\in\Delta(\Bc)$ be the distribution prescribed by Algorithm~\ref{alg:1} at this step. Let $\check{N}_h^k(s)$ be the value $\check{N}_h^k(s)$ at the beginning of the $k$-th episode. Our construction of the output policy $\mu^\out$ is presented in Algorithm~\ref{Alg:2} (whereas the certified policy $\nu^\out$ of the min-player can be obtained similarly), which follows the ``certified policies'' introduced in \cite{Bai_Yu:2020:2020}. We remark that the episode index from the previous stage is uniformly sampled in our algorithm while the certified policies in \cite{Bai_Yu:2020:2020} uses a weighted mixture.

\begin{algorithm}
\caption{Certified policy $\mu^\out$ (max-player version)}\label{alg:2}\label{Alg:2}
\begin{algorithmic}[1]
\STATE{Sample $k\leftarrow \mathrm{Unif}([K])$.}
\FOR{step $h\leftarrow 1,\ldots,H$}
\STATE{Receive $s_h$, and take action $a_h\sim \mu_h^k(\cdot|s_h)$}.
\STATE{Observe $b_h$.}
\STATE{Sample $j\leftarrow\mathrm{Unif}([N_h^k(s_h,a_h,b_h)])$.}
\STATE{Set $k\leftarrow \check{\ell}_{h,j}^{k}$.}
\ENDFOR
\end{algorithmic}
\end{algorithm}

\section{Theoretical Analysis}

\subsection{Main Result}

In this subsection, we present the main theoretical result for \Cref{alg:1}. The following theorem presents the sample complexity guarantee for Algorithm~\ref{alg:1} to learn a near-optimal Nash equilibrium policy in two-player zero-sum Markov games, which improves the best-known model-free algorithms in the same setting. 
\begin{theorem}\label{thm:main}
For any $\delta\in(0,1)$, let the agents run Algorithm~\ref{alg:1} for $K$ episodes with $K\geq \widetilde{O}(H^3SAB/\epsilon^2)$. Then, with probability at least $1-\delta$, the output policy $(\mu^\out,\nu^\out)$ of Algorithm~\ref{Alg:2} is an $\epsilon$-approximate Nash equilibrium.
\end{theorem}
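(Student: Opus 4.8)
The plan is to follow the optimistic/pessimistic value-iteration template used for Nash Q-learning and for the single-agent reference-advantage method, adding the two ingredients needed to cope with the CCE oracle: a control on how often the optimistic-minus-pessimistic value gap is large, and a variance-aware analysis of \emph{both} accumulative bonus terms. First I would establish the basic concentration (optimism/pessimism) event. Let $\mu^k,\nu^k$ be the policies whose step-$h$ components are the action marginals $\mu_h^k,\nu_h^k$ of the CCE policy $\pi_h^k$. I claim that with probability at least $1-\delta$, for all $(k,h,s)$,
\[
\underline{V}_h^k(s)\ \le\ V_h^{\mu^k,\dagger}(s)\ \le\ V_h^*(s)\ \le\ V_h^{\dagger,\nu^k}(s)\ \le\ \overline{V}_h^k(s).
\]
This is a backward induction on $h$ with a union bound over the polynomially many stage updates: the martingale differences $(\widehat{P_h f}-P_h f)(s,a,b)$ are bounded by Azuma--Hoeffding for the standard rule~\eqref{eqn:rev-1}--\eqref{eqn:rev-2} and by a Freedman/Bernstein bound for the reference and advantage terms~\eqref{eqn:rev-3}--\eqref{eqn:rev-4}, and the bonuses $\mathrm{bonus}_3,\dots,\mathrm{bonus}_6$ are calibrated precisely to dominate these deviations (the $\min$/$\max$ clipping of $\overline{Q}$ and $\underline{Q}$ only helps). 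The value-level inductive step invokes the defining property of the CCE oracle, $\overline{V}_h^k(s)=\Eb_{\pi_h^k}\overline{Q}_h^k\ge\sup_{a^*}\Eb_{\pi_h^k}\overline{Q}_h^k(a^*,b)$, which upper bounds the best-response value against the optimistic $\overline{Q}$; symmetrically for $\underline{V}$.

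Second I would pin down the accuracy of the reference value functions. Since the CCE update destroys monotonicity, I cannot argue as in single-agent RL that the latest optimistic value is the smallest so far; instead I use the \textbf{min-gap} design: $\widetilde{\overline{V}}_h,\widetilde{\underline{V}}_h$ track the pair with the smallest recorded gap $\overline{V}_h-\underline{V}_h$, and this frozen pair becomes $\overline{V}_h^\reff,\underline{V}_h^\reff$ after $N_0$ visits to $(s,h)$. The target is $V_h^*(s)\le\overline{V}_h^\reff(s)\le V_h^*(s)+\beta$ and $V_h^*(s)-\beta\le\underline{V}_h^\reff(s)\le V_h^*(s)$, which under the concentration event reduces to showing that the recorded gap falls below $\beta$ within the first $N_0$ visits. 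For this I would first prove a \emph{coarse} cumulative-gap bound using only the standard update rule, $\sum_{k=1}^K\big(\overline{V}_1^k(s_1)-\underline{V}_1^k(s_1)\big)=\widetilde{O}(\sqrt{H^5SABK})+(\text{lower-order})$ (the Nash Q-learning UCB-Hoeffding analysis), unroll it along trajectories, and combine with the Bernstein requirement on the reference term to conclude that $N_0=c_4SABH^5\iota/\beta^2$ visits suffice both to force the recorded min-gap below $\beta$ and to estimate $P_hV_{h+1}^\reff$ to accuracy $\beta$. Establishing the former for CCE-generated policies --- via a pigeonhole/occupancy argument on the cumulative occurrence of a large $V$-gap --- is one of the new properties highlighted in the introduction.

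Third, with $\beta$-accurate references available after the initial lower-order phase, I would run the main recursion on $\delta_h^k:=\overline{V}_h^k(s_h^k)-\underline{V}_h^k(s_h^k)$. Unrolling the stage-based advantage decomposition~\eqref{eqn:rev-3}--\eqref{eqn:rev-4} gives
\[
\sum_{k=1}^K\delta_1^k\ \lesssim\ \sum_{k=1}^K\sum_{h=1}^H\big(\text{martingale terms}+\text{cumulative bonuses}\big)+(\text{lower-order}).
\]
The bonus attached to the reference term $\widehat{P_h\overline{V}_{h+1}^\reff}$ uses \emph{all} samples and is variance-scaled; summing over $(k,h)$ and applying the law of total variance, $\sum_hP_h\mathrm{Var}_h(V_{h+1}^*)\lesssim H^2$ along a trajectory, together with the visit-count pigeonhole $\sum_{k,h}1/\sqrt{N_h^k}=\widetilde{O}(\sqrt{SABHK})$, yields a contribution of order $\widetilde{O}(\sqrt{H^3SABK})$. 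The bonus attached to the advantage term $\widehat{P_h(\overline{V}_{h+1}-\overline{V}_{h+1}^\reff)}$ must use only the last-stage ($O(1/H)$-fraction) samples and so naively carries an extra $\sqrt H$; but $\overline{V}_{h+1}-\overline{V}_{h+1}^\reff$ is $O(\beta)$ at states whose gap has already converged and is $O(H)$ only at the rare large-gap states counted in Step~2, so a refined Bernstein-type bound keeps its cumulative contribution at $\widetilde{O}(\sqrt{H^3SABK})$ too; the same argument is run on the pessimistic branch. This two-sided refined bounding of both accumulative bonuses is the second new ingredient. Dividing by $K$ gives $\tfrac1K\sum_k\delta_1^k=\widetilde{O}(\sqrt{H^3SAB/K})+(\text{lower-order})$.

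Finally I would convert this into the stated guarantee. By the construction of Algorithm~\ref{Alg:2} and an induction over $h$ in the style of the certified-policies argument of~\cite{Bai_Yu:2020:2020} (with the previous-stage episode index sampled uniformly rather than via a weighted mixture), one shows
\[
V_1^{\dagger,\nu^\out}(s_1)-V_1^{\mu^\out,\dagger}(s_1)\ \le\ \tfrac1K\sum_{k=1}^K\big(\overline{V}_1^k(s_1)-\underline{V}_1^k(s_1)\big)+(\text{lower-order}),
\]
the inequality using the concentration event at the top level and the CCE property propagated down the horizon. Combining with Step~3 and taking $K\ge\widetilde{O}(H^3SAB/\epsilon^2)$ makes the right-hand side at most $\epsilon$, which proves the theorem. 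I expect the main obstacle to be Step~3, specifically bounding the advantage-term bonus on the pessimistic branch when only last-stage samples are available: unlike the single-agent case --- where monotonicity keeps $\overline{V}-\overline{V}^\reff\ge 0$ automatically small --- here one must track the optimistic and pessimistic advantages simultaneously and lean on the min-gap-based large-$V$-gap occupancy bound of Step~2 to absorb the contribution of states that have not yet converged.
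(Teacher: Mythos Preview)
Your proposal follows essentially the same four-step skeleton as the paper (optimism/pessimism, reference-value accuracy via a large-gap occupancy bound, the main recursion with variance-aware bonus analysis, and certified policies), and the key moves you highlight---the weighted occupancy bound for large $V$-gaps under CCE policies and the two-sided refined Bernstein analysis of the advantage bonuses---are exactly what the paper develops.

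One concrete correction to your Step~1: the outer inequalities $\underline{V}_h^k \le V_h^{\mu^k,\dagger}$ and $V_h^{\dagger,\nu^k} \le \overline{V}_h^k$ do \emph{not} hold for the Markov marginals $\mu^k,\nu^k$ of the CCE, and the backward-in-$h$ induction you sketch cannot prove them: $\overline{Q}_h^k$ is assembled from $\overline{V}_{h+1}^{\check{\ell}_i}$ for earlier episodes $\check{\ell}_i<k$, and any inductive hypothesis there would concern $\nu^{\check{\ell}_i}$, not $\nu^k$. The paper establishes only the weaker sandwich $\underline{V}_h^k \le V_h^* \le \overline{V}_h^k$ at this stage (and does so by induction on $k$, not backward on $h$); the best-response bounds are proved separately for the \emph{non-Markov certified policies} $\widehat{\mu}_h^k,\widehat{\nu}_h^k$---precisely the argument you correctly invoke in your final step. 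Since your Step~3 recursion uses only $\overline{V}-\underline{V}$ and your final step already carries the conversion, simply dropping the overstated outer inequalities in Step~1 fixes the plan; note also that the certified-policy inequality in the paper holds exactly, without any lower-order additive term.
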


Compared to the lower bound $\Omega(H^3S(A+B)/\epsilon^2)$ on the sample complexity to find a near-optimal Nash equilibrium established in \cite{Bai_Yu:MG_2020}, the sample complexity in Theorem~\ref{thm:main} is minimax-optimal on the dependence of $H$, $S$ and $\epsilon$. This is the first result that establishes such optimality for model-free algorithms, although model-based algorithms have been shown to achieve such optimality in the past \cite{Qinghua_Liu:ICML:2021}.

We also note that the result in~\Cref{thm:main} is not tight on the dependence on the cardinality of actions $A,B$. Such a gap has been closed by popular V-learning algorithms \cite{Qinghua_Liu:ICML:2021,Wenchao_Mao:ICML:2022}, which achieve the sample complexity of $O(H^5S(A+B)/\epsilon^2)$ \cite{Wenchao_Mao:ICML:2022}. Clearly, V-learning achieves a tight dependence on $A,B$, but suffers from worse horizon dependence on $H$. More specifically, one $H$ factor is due to the nature of implementing the adversarial bandit subroutine in exchange for a better action dependence $A+B$. The other $H$ factor could potentially be improved via the reference-advantage decomposition technique that we adopt here for our Q-learning algorithm. We leave this promising yet challenging direction as a future study.

\subsection{Proof Outline} 
In this subsection, we present the proof sketch of Theorem~\ref{thm:main}, and defer all the details to the appendix. 

Our main technical development lies in establishing a few new properties on the cumulative occurrence of the large V-gap and the cumulative bonus term, which enable the upper-bounding of several new error terms arising due to the incorporation of the new min-gap based reference-advantage decomposition technique. These properties have not been established for the single-agent RL with such a technique, because our properties are established for policies generated by the CCE oracle in zero-sum Markov games. Further, we perform a more refined analysis for both the optimistic and pessimistic accumulative bonus terms in order to obtain the desired result.

%We extend the variance reduction technique for the model-free single-agent RL \cite{Zihan_Zhang:RL:tabular} to the model-free two-player zero-sum Markov game. Our main technical development lies in providing bounds on additional error terms due to incorporating the new min-gap based reference-advantage decomposition and the policy generated by the CCE oracle in the two-player zero-sum Markov game, including bounding the occurrence of the large V-gap and the accumulative bonus term. Further, we perform a more refined analysis for both the optimistic and pessimistic accumulative bonus terms to finally obtain the desired result.

For certain functions, we use the superscript $k$ to denote the value of the function at the beginning of the $k$-th episode, and use the superscript $K+1$ to denote the value of the function after all $K$ episodes are played. For instance, we denote $N_h^k(s,a,b)$ as the value of $N_h(s,a,b)$ at the beginning of the $k$-th episode, and $N_h^{K+1}(s,a,b)$ to denote the total number of visits of $(s,a,b)$ at step $h$ after $K$ episodes. When $h$ and $k$ are clear from the context, we omit the subscript $h$ and superscript $k$ for notational convenience. For example, we use $\ell_i$ and $\check{\ell}_i$ to denote $\ell_{h,i}^k$ and $\check{\ell}_{h,i}^k$ when $h$ and $k$ are obvious. 

\noindent{\bf Preliminary step.} We build connection between the certified policy generated by Algorithm~\ref{Alg:2}, and the difference between the optimistic and pessimistic value functions.
\begin{lemma}\label{lemma:pre}
Let $(\mu^\out,\nu^\out)$ be the output policy induced by the certified policy algorithm (Algorithm~\ref{Alg:2}), then, we have
\begin{align*}
V_1^{\dagger,\nu^\out}(s_1)-V_1^{\mu^\out,\dagger}(s_1)\leq \frac{1}{K}\sum_{k=1}^K(\overline{V}_1^k-\underline{V}_1^k)(s_1).
\end{align*}
\end{lemma}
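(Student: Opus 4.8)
The plan is to unroll the certified policy recursively over the $H$ steps and compare the optimistic value $\overline{V}_1^k$ against $V_1^{\mu^\out,\dagger}$, and symmetrically the pessimistic value $\underline{V}_1^k$ against $V_1^{\dagger,\nu^\out}$. The key structural fact I expect to need is that the optimistic/pessimistic $Q$-estimates dominate/are dominated by the $Q$-values of the certified policy against the best response: concretely, for every $(s,h,k)$ I would prove by backward induction on $h$ that $\overline{V}_h^k(s) \ge V_h^{\mu_h^{\out,k},\dagger}(s)$ and $\underline{V}_h^k(s) \le V_h^{\dagger,\nu_h^{\out,k}}(s)$, where $\mu_h^{\out,k}$ denotes the (random) policy that Algorithm~\ref{Alg:2} executes when it is started in "mode $(h,k)$" — i.e., the tail of the certified policy conditioned on the current episode-pointer being $k$ at step $h$. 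Because Algorithm~\ref{Alg:2} samples the initial pointer uniformly from $[K]$, averaging the step-$1$ version of this claim over $k$ gives exactly $V_1^{\mu^\out,\dagger}(s_1) \le \frac{1}{K}\sum_k \overline{V}_1^k(s_1)$ and $V_1^{\dagger,\nu^\out}(s_1) \ge \frac{1}{K}\sum_k \underline{V}_1^k(s_1)$, and subtracting the two inequalities yields the lemma.

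For the induction itself, fix $(s,h,k)$ and recall that when Algorithm~\ref{Alg:2} is in mode $(s,h,k)$ it draws $(a_h,b_h)$ with $a_h\sim \mu_h^k(\cdot|s)$ (the min-player reveals $b_h$), then draws $j\sim\mathrm{Unif}([\check N_h^k(s,a_h,b_h)])$ and jumps to the recorded episode index $\check\ell_{h,j}^k$ for step $h+1$. The crucial point is that the optimistic update $\overline{Q}_h^k(s,a,b)\le \min\{\overline{Q}_h^{(1),k},\overline{Q}_h^{(2),k},\dots\}$ — combined with the monotone way $\overline{Q}$ is only ever decreased — means that at episode $k$ the stored value $\overline{Q}_h^k(s,a,b)$ is an optimistic backup built from the transitions of the \emph{previous} stage, so $\overline{Q}_h^k(s,a,b) \ge r_h(s,a,b) + \frac{1}{\check N}\sum_{i} \overline{V}_{h+1}^{\check\ell_i}(s_{h+1}^{\check\ell_i})$ up to the bonus terms, which in expectation over $s_{h+1}$ and the uniform choice of $i$ is at least $(r_h + P_h \overline{V}_{h+1}^{\text{(avg over stage)}})(s,a,b)$. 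Applying the inductive hypothesis $\overline{V}_{h+1}^{\check\ell_i}(\cdot)\ge V_{h+1}^{\mu_{h+1}^{\out,\check\ell_i},\dagger}(\cdot)$ for each pointer $\check\ell_i$ in the stage, then using that the CCE output $\pi_h^k$ satisfies $\Eb_{(a,b)\sim\pi_h^k}\overline{Q}_h^k(a,b)\ge \sup_{a^*}\Eb_{(a,b)\sim\pi_h^k}\overline{Q}_h^k(a^*,b)$ — which is exactly the inequality needed to dominate the best response of the min-player — closes the step for the optimistic side; the pessimistic side is entirely symmetric using the second CCE inequality. One has to be a little careful that the bonus terms are nonnegative (for the optimistic side) so they only help the inequality, and that the "$\overline{Q}_h^k$ is valid optimistic backup" claim is a clean-event statement that should already be available from the concentration lemmas in the appendix; here I would just invoke it.

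The step I expect to be the main obstacle is the bookkeeping that makes the recursion on $\mu_h^{\out,k}$ precise: the certified policy's pointer does not move deterministically, and the recorded indices $\check\ell_{h,j}^k$ point into the previous stage, so I need the invariant to be stated at the level of "the pointer is $k$ entering step $h$" and to verify that the distribution over $s_{h+1}$ and over the next pointer induced by Algorithm~\ref{Alg:2} matches the empirical backup that defines $\overline{Q}_h^k$. This is essentially the argument from the certified-policy analysis of \cite{Bai_Yu:2020:2020}, adapted to uniform (rather than weighted) sampling within a stage and to the stage-based counts $\check N$, $\check\ell$; the reference-advantage decomposition does not interfere here because $\overline{Q}_h^k$ is defined as a \emph{minimum} over the standard and advantage-based backups, so whichever achieves the min, the resulting value is still $\ge$ the true optimistic backup of $\overline{V}_{h+1}^k$ on the clean event. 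Once that invariant is in place the rest is a short induction plus one averaging step over $k\sim\mathrm{Unif}([K])$.
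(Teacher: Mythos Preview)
Your overall strategy---prove a per-episode domination inequality for the ``mode-$(h,k)$'' certified policies, then average over $k\sim\mathrm{Unif}([K])$---is exactly the paper's approach. However, you have the pairing of optimistic/pessimistic values with the two players swapped, and this breaks the final step.

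Concretely, you claim $\overline{V}_h^k(s)\ge V_h^{\mu_h^{\out,k},\dagger}(s)$ and $\underline{V}_h^k(s)\le V_h^{\dagger,\nu_h^{\out,k}}(s)$, and then conclude $V_1^{\mu^\out,\dagger}(s_1)\le\frac{1}{K}\sum_k\overline{V}_1^k(s_1)$ and $V_1^{\dagger,\nu^\out}(s_1)\ge\frac{1}{K}\sum_k\underline{V}_1^k(s_1)$. But subtracting these two inequalities yields
\[
V_1^{\dagger,\nu^\out}(s_1)-V_1^{\mu^\out,\dagger}(s_1)\;\ge\;-\tfrac{1}{K}\sum_{k}(\overline{V}_1^k-\underline{V}_1^k)(s_1),
\]
which is a lower bound in the wrong direction. (Moreover, your averaging step already fails: since $\mu^\out$ is a mixture, $V_1^{\mu^\out,\dagger}\ge\frac{1}{K}\sum_k V_1^{\mu_1^{\out,k},\dagger}$, so from $\frac{1}{K}\sum_k V_1^{\mu_1^{\out,k},\dagger}\le\frac{1}{K}\sum_k\overline{V}_1^k$ you cannot deduce $V_1^{\mu^\out,\dagger}\le\frac{1}{K}\sum_k\overline{V}_1^k$.)

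The correct inductive claims are the transposed ones: $\overline{V}_h^k(s)\ge V_h^{\dagger,\widehat{\nu}_h^k}(s)$ and $\underline{V}_h^k(s)\le V_h^{\widehat{\mu}_h^k,\dagger}(s)$. These are precisely what the CCE inequality you quote delivers: $\Eb_{(a,b)\sim\pi_h^k}\overline{Q}_h^k(s,a,b)\ge\sup_{a^*}\Eb_{b\sim\nu_h^k}\overline{Q}_h^k(s,a^*,b)$ lets you pull a $\sup_\mu$ outside, which is what controls the \emph{max-player's} best response against the certified $\widehat{\nu}$---not ``the best response of the min-player'' as you wrote. With this corrected pairing, averaging gives $V_1^{\dagger,\nu^\out}\le\frac{1}{K}\sum_k\overline{V}_1^k$ and $V_1^{\mu^\out,\dagger}\ge\frac{1}{K}\sum_k\underline{V}_1^k$ (now the mixture inequality goes the right way in each case), and subtraction yields the lemma. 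The rest of your sketch---showing both the standard and the advantage-based backups in the $\min$ defining $\overline{Q}_h^k$ dominate the certified-$Q$ on the concentration event, then invoking CCE---matches the paper's proof once this swap is fixed.
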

In the remaining steps, we aim to bound $\sum_{k=1}^K(\overline{V}_1^k-\underline{V}_1^k)(s_1)$.

\noindent{\bf Step I:} We show that the Nash equilibrium (action-)value functions are always bounded between the optimistic and pessimistic (action-)value functions. 

\begin{lemma}\label{prop:Q-chain}
%Let $\delta\in(0,1)$. With probability at least \fst{$1-4T(H^2T^3+3)\delta$}, 
With high probability, it holds that for any $s,a,b,k,h$,
\begin{align*}
&\underline{Q}_h^k(s,a,b)\leq Q_h^*(s,a,b)\leq \overline{Q}_h^k(s,a,b), 
\qquad \underline{V}_h^k(s)\leq V_h^*(s)\leq \overline{V}_h^k(s).
\end{align*}
\end{lemma}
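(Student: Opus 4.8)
The plan is to prove \Cref{prop:Q-chain} by induction on $h$, running downward from $h=H+1$ to $h=1$, with a nested induction on the episode index $k$ (equivalently, on the stage count for each fixed tuple $(s,a,b,h)$). The base case $h=H+1$ is immediate since $\overline{V}_{H+1}^k=\underline{V}_{H+1}^k=0=V_{H+1}^*$. For the inductive step, fix $h$ and assume the sandwich property holds at step $h+1$ for all episodes. I would then show, by induction on $k$, that the optimistic update keeps $\overline{Q}_h^k(s,a,b)\ge Q_h^*(s,a,b)$ and the pessimistic update keeps $\underline{Q}_h^k(s,a,b)\le Q_h^*(s,a,b)$; the value-function bounds then follow from the CCE oracle property together with the minimax characterization of $V_h^*$.

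The core of the argument is to control the two update rules. Since $\overline{Q}_h^k$ is the running minimum of $\overline{Q}_h^{(1)}$, $\overline{Q}_h^{(2)}$, and its previous value, it suffices to show each of $\overline{Q}_h^{(1)}$ and $\overline{Q}_h^{(2)}$ (computed at the end of a stage) dominates $Q_h^*(s,a,b)=(r_h+P_hV_{h+1}^*)(s,a,b)$ with high probability. For the standard rule \eqref{eqn:rev-1}, I would write $\widehat{P_h\overline{V}_{h+1}}(s,a,b)$ as an empirical average over the last-stage samples, split $\overline{V}_{h+1}=V_{h+1}^*+(\overline{V}_{h+1}-V_{h+1}^*)$, use the induction hypothesis at $h+1$ to discard the nonnegative part, and bound the martingale deviation $\widehat{P_h V_{h+1}^*}-P_hV_{h+1}^*$ by a Hoeffding/Azuma bound absorbed into $\mathrm{bonus}_3$. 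For the reference-advantage rule \eqref{eqn:rev-3}, the decomposition is $\widehat{P_h\overline{V}_{h+1}^\reff}+\widehat{P_h(\overline{V}_{h+1}-\overline{V}_{h+1}^\reff)}$; here I would again isolate the true quantity $P_hV_{h+1}^*$, bound the deviation of the reference term (a sum over \emph{all} episodes, with a Bernstein-type bound since $\overline{V}^\reff$ is essentially frozen) and the deviation of the advantage term (over last-stage samples), and use the induction hypothesis to show the remaining terms $P_h(\overline{V}_{h+1}^\reff-V_{h+1}^*)\ge 0$ or are of smaller order than $\beta$, all absorbed into $\mathrm{bonus}_5$. The key subtlety — and the place where the Markov-game setting differs from single-agent RL — is that $\overline{V}_{h+1}^\reff$ need not satisfy $\overline{V}_{h+1}^\reff\ge V_{h+1}^*$ on the nose (the CCE-induced value is not monotone), so the min-gap construction is what guarantees that $\overline{V}_{h+1}^\reff-\underline{V}_{h+1}^\reff$ is small, and one must carry the optimistic \emph{and} pessimistic reference values together and exploit $\underline{V}_{h+1}^\reff\le V_{h+1}^*\le\overline{V}_{h+1}^\reff$ only up to the $\beta$ accuracy; the analysis of $\overline{Q}^{(2)}$ must be paired with that of $\underline{Q}^{(2)}$ rather than done in isolation.

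Once the action-value sandwich is in place, the value-function sandwich follows from the CCE oracle: writing $\pi_h=\mathrm{CCE}(\overline{Q}_h^k(s,\cdot,\cdot),\underline{Q}_h^k(s,\cdot,\cdot))$, the defining inequality gives $\overline{V}_h^k(s)=\Eb_{(a,b)\sim\pi_h}\overline{Q}_h^k(s,a,b)\ge\sup_{a^*}\Eb_{(a,b)\sim\pi_h}\overline{Q}_h^k(s,a^*,b)\ge\sup_{a^*}\Eb_{b\sim\pi_{h,\Bc}}Q_h^*(s,a^*,b)\ge\inf_{\nu}\sup_{\mu}\Eb Q_h^* =V_h^*(s)$, using the action-value bound in the middle step and the minimax identity at the end; symmetrically $\underline{V}_h^k(s)\le V_h^*(s)$. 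For states/steps not updated in episode $k$, the value is inherited from a previous episode, so the inner induction on $k$ closes. I also need a union bound over all $(s,a,b,h)$ and all stages so that the high-probability events used inside the induction hold simultaneously — this is routine given the choice $\iota=\log(2/\delta)$ and the stage structure (at most $O(K)$ stages per tuple).

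\textbf{Main obstacle.} I expect the hardest part to be the reference-advantage step \eqref{eqn:rev-3}--\eqref{eqn:rev-4}: unlike single-agent RL, one cannot assert $\overline{V}_{h+1}^\reff\ge V_{h+1}^*$ directly because the CCE value is non-monotone, so the optimism of $\overline{Q}^{(2)}$ must be salvaged from the min-gap property ($\overline{V}_{h+1}^\reff-\underline{V}_{h+1}^\reff$ is the smallest V-gap seen so far) combined with the already-established sandwich $\underline{V}_{h+1}^k\le V_{h+1}^*\le\overline{V}_{h+1}^k$ at the stage where the reference was frozen; making this quantitatively precise — controlling the error $P_h(\overline{V}_{h+1}^\reff - V_{h+1}^*)$ by $O(\beta)$ and ensuring it is a genuine lower-order term absorbed by the bonus — is where the Bernstein concentration on the full-history reference average and the careful bonus bookkeeping must be done simultaneously, and it is also where the new "cumulative occurrence of large V-gap" properties mentioned in the proof outline will be invoked.
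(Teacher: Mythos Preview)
Your induction structure (backward on $h$, nested on $k$) and your CCE argument for passing from the $Q$-bound to the $V$-bound are both fine and match the paper. The gap is in your treatment of the reference-advantage update $\overline{Q}_h^{(2)}$, where you have misidentified the obstacle.

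You claim one cannot assert $\overline{V}_{h+1}^{\reff}\ge V_{h+1}^*$ because of CCE non-monotonicity, and you propose to rescue optimism via the min-gap property and $\beta$-accuracy. This is backwards. The reference $\overline{V}_{h+1}^{\reff,k}(s)$ is always either the initialization $H$ or a past value $\overline{V}_{h+1}^{k'}(s)$ for some $k'<k$; by your own induction hypothesis at step $h+1$, the latter satisfies $\overline{V}_{h+1}^{k'}(s)\ge V_{h+1}^*(s)$. So $\overline{V}_{h+1}^{\reff,k}\ge V_{h+1}^*$ \emph{does} hold, and the min-gap property is nowhere needed in this lemma (the paper uses it only later, in Lemma~\ref{lemma:ref-1} and Corollary~\ref{coro:ref-1}). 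Your proposed $O(\beta)$ workaround would also fail quantitatively: the reference is \emph{not} $\beta$-accurate before $N_0$ visits (it equals $H$), and the bonus $\overline{\beta}$ is calibrated only to concentration error, not to an additive $\beta$ bias.

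The actual difficulty you have not addressed is this: the reference average $\frac{1}{n}\sum_{i=1}^n \overline{V}_{h+1}^{\reff,\ell_i}(s_{h+1}^{\ell_i})$ runs over \emph{all} visits with time-varying reference values $\overline{V}_{h+1}^{\reff,\ell_i}$, while the advantage subtracts $\frac{1}{\check{n}}\sum_{i=1}^{\check{n}}\overline{V}_{h+1}^{\reff,\check{\ell}_i}(s_{h+1}^{\check{\ell}_i})$ over \emph{last-stage} visits only. These two reference pieces do not cancel, and the residual advantage term $\overline{V}_{h+1}^{\check{\ell}_i}-\overline{V}_{h+1}^{\reff,\check{\ell}_i}$ can be negative. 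The paper resolves this with a one-line structural observation you are missing: $\overline{V}_{h+1}^{\reff,u}(s)$ is \emph{non-increasing} in $u$ (it starts at $H$ and is updated exactly once to something $\le H$). Since the last-stage indices $\check{\ell}_i$ are the latest among all $\ell_i$, this gives
\[
P_h\Bigl(\tfrac{1}{n}\sum_{i=1}^n \overline{V}_{h+1}^{\reff,\ell_i}\Bigr)\;\ge\; P_h\Bigl(\tfrac{1}{\check{n}}\sum_{i=1}^{\check{n}} \overline{V}_{h+1}^{\reff,\check{\ell}_i}\Bigr),
\]
after which the two reference pieces telescope and one is left with $P_h\bigl(\tfrac{1}{\check{n}}\sum \overline{V}_{h+1}^{\check{\ell}_i}\bigr)\ge P_h V_{h+1}^*$ by the induction hypothesis. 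The concentration errors $\chi_1,\chi_2$ (from replacing empirical averages by $P_h$) are then bounded by a variance-aware martingale inequality and absorbed into $\overline{\beta}$. The pessimistic side is symmetric, using that $\underline{V}_{h+1}^{\reff,u}$ is non-decreasing.
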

Our new technical development lies in proving the inequality with respect to the action-value function, whose update rule features the min-gap reference-advantage decomposition. 

The proof is by induction. We will focus on the optimistic (action-)value function and the other direction can be proved similarly. Suppose the two inequalities hold in episode $k$. We first establish the inequality for action-value function, and then prove the inequality for value functions. Based on the update rule of the optimistic action-value functions (line 12 in Algorithm~\ref{alg:1}), the action-value function is determined by the first two non-trial terms and last trivial term. While the first term is shown to upper bound the action-value function at Nash equilibrium $Q_h^*(s,a,b)$, we make the effort to showcase that the second term involving the min-gap based reference-advantage decomposition also upper bounds $Q_h^*(s,a,b)$. Since the optimistic action-value function takes the minimum of the three terms, we conclude that the optimistic action-value function in episode $k+1$ satisfy the inequality. The proof of the inequality for value function (second inequality in Lemma~\ref{prop:Q-chain}) is based on the property of the policy distribution output by the CCE oracle. 

Note that the optimistic (or pessimistic) action-value function is non-increasing (or non-decreasing). However, the optimistic and the pessimistic value functions do not preserve such monotonic property due to the nature of the CCE oracle. This motivates our design of the min-gap based reference-advantage decomposition.

\noindent{\bf Step II:} We show that the reference value can be learned with bounded sample complexity in the following lemma.
\begin{lemma}\label{lemma:ref-1}
%With probability $1-T\delta$, 
With high probability, it holds that $\sum_{k=1}^K\mathbf{1}\{\overline{V}_h^k(s_h^k)-\underline{V}_h^k(s_h^k)\geq \epsilon\}\leq O(SABH^5\iota/\epsilon^2)$.
\end{lemma}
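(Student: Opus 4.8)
The plan is to control $\sum_{k=1}^K \mathbf 1\{(\overline V_h^k-\underline V_h^k)(s_h^k)\ge\epsilon\}$ by reducing it to a counting problem over visitation counters, exploiting the monotonicity of the action-value gap established in \Cref{prop:Q-chain} (Step I). The first step is a pointwise bound: since $\overline V_h^k(s)=\Eb_{(a,b)\sim\pi_h^k(s)}\overline Q_h^k(s,a,b)$ and $\underline V_h^k(s)=\Eb_{(a,b)\sim\pi_h^k(s)}\underline Q_h^k(s,a,b)$ use the \emph{same} CCE policy $\pi_h^k$, we get $(\overline V_h^k-\underline V_h^k)(s)\le g_h^k(s):=\max_{a,b}(\overline Q_h^k-\underline Q_h^k)(s,a,b)$. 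For fixed $(s,h)$ each column $(\overline Q_h^k-\underline Q_h^k)(s,a,b)$ is non-increasing in $k$ (optimistic non-increasing, pessimistic non-decreasing), hence $g_h^k(s)$ is non-increasing in $k$; therefore the event $\{g_h^k(s)\ge\epsilon\}$ holds exactly on an initial block $k\le\tau_{s,h}$ of episodes, and it suffices to bound $\sum_s\#\{k\le\tau_{s,h}:s_h^k=s\}$, i.e.\ the number of visits to each $(s,h)$ before its max-gap first falls below $\epsilon$.

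For the counting step I would use only the standard update rules~(\ref{eqn:rev-1})--(\ref{eqn:rev-2}); this is legitimate because $\overline Q_h\le\overline Q_h^{(1)}$ and $\underline Q_h\ge\underline Q_h^{(1)}$ always, and the sharper reference-advantage bound is needed only for the final horizon-optimal rate, not for the (lower-order) task of learning the references. On the high-probability event that every empirical estimate lies within its bonus, the stage-based recursion reads $(\overline Q_h^k-\underline Q_h^k)(s,a,b)\le(1+\tfrac1H)\,\tfrac1{\check N_h^k(s,a,b)}\sum_i(\overline V_{h+1}-\underline V_{h+1})^{\check\ell_i}(s_{h+1}^{\check\ell_i})+\beta_h^k(s,a,b)$, with $\beta_h^k$ the (Bernstein-type) bonus and the sum over the episodes $\check\ell_i$ of the last completed stage of that tuple. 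Reading this at $k=\tau_{s,h}$ for the maximizing pair shows that a large $g$ at $\tau_{s,h}$ forces either the last-stage count of that tuple to be below $C=\widetilde O(H^3\iota/\epsilon^2)$ (the ``under-visited, bonus-dominated'' case), or the last-stage average of the step-$(h+1)$ gaps to be $\Omega(\epsilon)$, which after a clipping step (in the style used to pass from regret bounds to count bounds) is charged to large gaps at step $h+1$. Iterating this dichotomy from $h$ up to $H$ — with the thresholds degrading only by the benign factor $(1+1/H)^H=O(1)$, never by $1/H$ per level — charges every episode counted at level $h$ to an under-visited tuple at some level $h'\ge h$; since the stage lengths grow geometrically at rate $(1+1/H)$, a fixed tuple has last-stage count below $C$ during only $O(CH)$ episodes, so summing over the $SAB$ tuples and the $H$ levels yields $\widetilde O(SABH^5\iota/\epsilon^2)$. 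The martingale terms are absorbed into the bonuses on the concentration event (via Freedman's inequality, so their cumulative contribution is self-bounded by the gaps themselves), and the $O(\mathrm{poly}(H)SAB)$ contributions from the ``$+1$'' rounding in the stage definition and from the pre-reference phase are dominated.

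The hard part is the passage from the value-function count to the action-value count, together with making the cross-step propagation close with only the stated powers of $H$. Unlike single-agent RL, $\overline V_h^k-\underline V_h^k$ is \emph{not} monotone, because $\pi_h^k$ is regenerated by the CCE oracle at every stage, so the naive route of dominating $(\overline V_h^k-\underline V_h^k)(s_h^k)$ by the gap at the \emph{played} pair $(a_h^k,b_h^k)$ holds only in expectation and costs a factor $1/\epsilon$ through Markov's inequality; replacing it by the monotone surrogate $g_h^k$ above and then propagating the dichotomy across all $H$ steps while keeping the thresholds at scale $\Theta(\epsilon)$ is exactly the new ``cumulative large-$V$-gap'' property the paper advertises, and verifying that the CCE-generated correlated policies do not break any step of this bookkeeping is where the genuine work lies.
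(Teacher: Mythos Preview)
Your reduction to the monotone surrogate $g_h^k(s)=\max_{a,b}(\overline Q_h^k-\underline Q_h^k)(s,a,b)$ is where the argument breaks. The bound $(\overline V_h^k-\underline V_h^k)(s)\le g_h^k(s)$ and the monotonicity of $g_h^k(s)$ in $k$ are both correct, but this surrogate need never become small: the CCE oracle is under no obligation to put mass on the pair $(a^\star,b^\star)$ that attains the maximum, so $(s,a^\star,b^\star,h)$ may never be visited, its $Q$-gap stays at its initial value $H$, and $g_h^k(s)=H$ for all $k$. In that situation $\tau_{s,h}=K$ and your count $\#\{k\le\tau_{s,h}:s_h^k=s\}$ is unbounded even though the actual $V$-gap at $s$ may be tiny (because $\pi_h^k$ places essentially all its mass on well-explored pairs). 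Your dichotomy at $\tau_{s,h}$ talks about the last-stage count of the \emph{maximizing tuple}, but what you need to bound is the number of visits to the \emph{state} $s$; there is no link between the two when the maximizing tuple is never played. The cross-step propagation is equally stuck: if the maximizer has no completed stage, there is no recursion to unroll.

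The paper avoids this altogether by staying with the \emph{played} pair. It writes
\[
\Delta_h^k=(\overline V_h^k-\underline V_h^k)(s_h^k)=\zeta_h^k+(\overline Q_h^k-\underline Q_h^k)(s_h^k,a_h^k,b_h^k),
\]
where $\zeta_h^k$ is a genuine martingale difference because $(a_h^k,b_h^k)\sim\pi_h^k$ and $\Delta_h^k$ is exactly the $\pi_h^k$-expectation of the $Q$-gap. It then takes a \emph{weighted} sum $\sum_k w_k\Delta_h^k$ with arbitrary nonnegative weights, propagates the weights through the stage structure (so $\|\tilde w\|_\infty\le(1+1/H)\|w\|_\infty$, $\|\tilde w\|_1=\|w\|_1$), bounds $\sum_k w_k\zeta_h^k$ by Azuma in terms of $\|w\|_\infty$, bounds the bonus sum $\sum_k w_k\gamma_h^k$ by $O(\sqrt{SABH\|w\|_\infty\|w\|_1\iota})$, iterates over $h$, and only at the end specializes $w_k=\mathbf 1\{\Delta_h^k\ge\epsilon\}$; solving the resulting quadratic in $\sqrt{\sum_k w_k}$ gives the claim. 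The crucial difference from your plan is that the recursion runs on the tuple actually visited (so visit counts feed directly into the bonus bound), and the mismatch between the $V$-gap and the played $Q$-gap is isolated as the martingale $\zeta_h^k$ rather than being coarsened to a max. If you want to repair your argument, the fix is precisely this: drop the max-over-pairs surrogate and carry the CCE-sampling martingale $\zeta_h^k$ explicitly through the weighted recursion.
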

We show that in the two-player zero-sum Markov game, the occurrence of the large V-gap, induced by the policy generated by the CCE oracle, is bounded independent of the number of episodes $K$. Our new development in proving this lemma lies in handling an additional martingale difference arising due to the CCE oracle. 

In order to extract the best pair of optimistic and pessimistic value functions, a key novel min-gap based reference-advantage decomposition is proposed (see Section~\ref{sec:alg}), based on which we pick up the pair of optimistic and pessimistic value functions whose gap is the smallest in the history (line 17-20 in Algorithm~\ref{alg:1}). %The motivation is based on the observation mentioned in step I, and the latest pair of optimistic and pessimistic value functions does not necessarily have the minimum gap in this history. 
By the selection of the reference value functions, Lemma~\ref{lemma:ref-1} with $\epsilon$ set to $\beta$, and the definition of $N_0$, we have the following corollary.
\begin{corollary}\label{coro:ref-1}
Conditioned on the successful events of Proposition~\ref{prop:Q-chain} and Lemma~\ref{lemma:ref-1}, for every state $s$, we have
\begin{align*}
n_h^k(s)\geq N_0 \Longrightarrow \overline{V}_h^{\reff,k}(s)-\underline{V}_h^{\reff,k}(s)\leq \beta.
\end{align*}
\end{corollary}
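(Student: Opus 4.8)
The plan is to pin down exactly when, and to what value, the reference pair $(\overline{V}_h^{\reff,k}(s),\underline{V}_h^{\reff,k}(s))$ is assigned for a fixed $(s,h)$, and then invoke Lemma~\ref{lemma:ref-1} to bound its gap. Inspecting Algorithm~\ref{alg:1} (cf.\ Algorithm~\ref{alg:1-simple}), the reference value functions at state $s$ and step $h$ are overwritten exactly once, namely in the unique episode in which the total count $n_h(s)=\sum_{a,b}N_h(s,a,b)$ first equals $N_0$, and at that moment they are set to the recorded pair $(\widetilde{\overline{V}}_h(s),\widetilde{\underline{V}}_h(s))$. By the min-gap bookkeeping, this recorded pair always satisfies $\widetilde{\overline{V}}_h(s)-\widetilde{\underline{V}}_h(s)=\Delta(s,h)$, and $\Delta(s,h)$ is non-increasing over the whole run; hence for every $k$ with $n_h^k(s)\ge N_0$ we have $\overline{V}_h^{\reff,k}(s)-\underline{V}_h^{\reff,k}(s)=\Delta(s,h)$ evaluated at the assignment episode, and it suffices to bound this quantity by $\beta$.

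I would then apply Lemma~\ref{lemma:ref-1} with $\epsilon=\beta$. Restricting the sum $\sum_{k}\mathbf{1}\{\overline{V}_h^k(s_h^k)-\underline{V}_h^k(s_h^k)\ge\beta\}$ to the sub-collection of episodes that visit the fixed state $s$ at step $h$ can only decrease it, so the number of visits to $(s,h)$ at which the gap between the optimistic and pessimistic value functions is at least $\beta$ is at most $C\cdot SABH^5\iota/\beta^2$, where $C$ is the absolute constant hidden in the $O(\cdot)$ of Lemma~\ref{lemma:ref-1}. Since $N_0=c_4 SABH^5\iota/\beta^2$ with the universal constant $c_4$ chosen large enough that $c_4>C$, a pigeonhole argument yields that among the first $N_0$ visits to $(s,h)$ at least one episode $k_0$ satisfies $\overline{V}_h^{k_0}(s)-\underline{V}_h^{k_0}(s)<\beta$.

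Next I would connect this small-gap episode back to $\Delta(s,h)$. The estimates $\overline{V}_h(s)$ and $\underline{V}_h(s)$ are modified only at stage-ends of tuples of the form $(s,\cdot,\cdot,h)$, and hence are piecewise constant in $k$; in particular $\overline{V}_h^{k_0}(s)-\underline{V}_h^{k_0}(s)$ equals the post-update gap at the most recent such stage-end preceding $k_0$. At that stage-end the min-gap test enforces $\Delta(s,h)\leftarrow\min\{\Delta(s,h),\,\overline{V}_h(s)-\underline{V}_h(s)\}$, so $\Delta(s,h)$ becomes, and by monotonicity remains, at most $\overline{V}_h^{k_0}(s)-\underline{V}_h^{k_0}(s)<\beta$. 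Since $k_0$ lies among the first $N_0$ visits, this has already occurred by the assignment episode, and therefore $\overline{V}_h^{\reff,k}(s)-\underline{V}_h^{\reff,k}(s)=\Delta(s,h)\le\beta$ for every $k$ with $n_h^k(s)\ge N_0$, which is exactly the accuracy for which $N_0$ was defined. Everything above is conditioned on the events of Proposition~\ref{prop:Q-chain} and Lemma~\ref{lemma:ref-1} (the former guarantees the gaps $\overline{V}_h^k-\underline{V}_h^k$ are nonnegative and is the event under which the latter holds), so no extra failure probability is incurred.

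The argument is essentially bookkeeping layered on Lemma~\ref{lemma:ref-1} rather than a new concentration estimate, and the one place that needs genuine care is the mismatch in granularity: the stage structure is organized by $(s,a,b,h)$ tuples, whereas both the reference trigger $N_0$ and Lemma~\ref{lemma:ref-1} are phrased through the state-level count $n_h(s)$. Making the two bridging facts precise --- that a small-gap \emph{episode} forces a small-gap \emph{stage-end} (by piecewise-constancy of $\overline{V}_h(s),\underline{V}_h(s)$ between stage-ends), and that the universal constant $c_4$ in $N_0$ dominates the constant produced by Lemma~\ref{lemma:ref-1} --- is the crux; beyond that the claim follows by the short pigeonhole above.
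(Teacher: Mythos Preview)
Your proof is correct and is precisely the detailed unpacking of the paper's one-line justification (``By the selection of the reference value functions, Lemma~\ref{lemma:ref-1} with $\epsilon$ set to $\beta$, and the definition of $N_0$''); the paper does not spell out any further argument. Your handling of the two delicate points --- that a small-gap episode $k_0$ forces $\Delta(s,h)<\beta$ at the preceding $(s,\cdot,\cdot,h)$ stage-end via the piecewise-constancy of $\overline{V}_h(s),\underline{V}_h(s)$, and that $c_4$ is taken to dominate the implicit constant in Lemma~\ref{lemma:ref-1} --- is exactly what the paper implicitly relies on.
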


\noindent{\bf Step III:} We bound $\sum_{k=1}^K(\overline{V}_1^k-\underline{V}_1^k)(s_1)$. Compared to single-agent RL, the CCE oracle leads to a possibly mixed policy and we need to bound the additional term due to the CCE oracle.

For ease of exposition, define $\Delta_h^k=(\overline{V}_h^k-\underline{V}_h^k)(s_h^k)$, and martingale difference $\zeta_h^k=\Delta_h^k-(\overline{Q}_h^k-\underline{Q}_h^k)(s_h^k,a_h^k,b_h^k)$. Note that $n_h^k=N_h^k(s_h^k,a_h^k,b_h^k)$ and $\check{n}_h^k=\check{N}_h^k(s_h^k,a_h^k,b_h^k)$ when $N_h^k(s_h^k,a_h^k,b_h^k)\in\Lc$. %We also define $\overline{V}^\REFF(s)$ and $\underline{V}^\REFF(s)$ to be the final reference value, i.e., $\overline{V}^\REFF(s)=\overline{V}^{\reff,K+1}(s)$ and $\underline{V}^\REFF(s)=\underline{V}^{\reff,K+1}(s)$. 
Following the update rule, we have (omitting the detail)
\begin{align*}
\Delta_h^k&=\zeta_h^k+(\overline{Q}_h^k-\underline{Q}_h^k)(s_h^k,a_h^k,b_h^k) 
%&\leq H\lv\{n_h^k=0\}+\frac{1}{n_h^k}\sum_{i=1}^{n_h^k}\overline{V}_{h+1}^{\reff,\ell_i}(s_{h+1}^{\ell_i})-\frac{1}{n_h^k}\sum_{i=1}^{n_h^k}\underline{V}_{h+1}^{\reff,\ell_i}(s_{h+1}^{\ell_i}) \\
%&\qquad +\frac{1}{\check{n}_h^k}\sum_{i=1}^{\check{n}_h^k}(\overline{V}_{h+1}^{\check{\ell}_i}-\overline{V}_{h+1}^{\reff,\check{\ell}_i})(s_{h+1}^{\check{\ell}_i})-\frac{1}{\check{n}_h^k}\sum_{i=1}^{\check{n}_h^k}(\underline{V}_{h+1}^{\check{\ell}_i}-\underline{V}_{h+1}^{\reff,\check{\ell}_i})(s_{h+1}^{\check{\ell}_i})+\overline{\beta}_h^k+\underline{\beta}_h^k \\
\leq \zeta_h^k+H\lv\{n_h^k=0\}+\frac{1}{\check{n}_h^k}\sum_{i=1}^{\check{n}_h^k}\Delta_{h+1}^{\check{\ell}_i}+\Lambda_{h+1}^k ,
\end{align*}
\if{0}
where $\Lambda_{h+1}^k$ in the last equality is defined by
{\footnotesize
\begin{align*}
&\Lambda_{h+1}^k=\psi_{h+1}^k+\xi_{h+1}^k+2\overline{\beta}_h^k+2\underline{\beta}_h^k ,\\
&\psi_{h+1}^k=P_{s_h^k,a_h^k,b_h^k,h}\left(\frac{1}{n_h^k}\sum_{i=1}^{n_h^k}\left(\overline{V}_{h+1}^{\reff,\ell_i}-\underline{V}_{h+1}^{\reff,\ell_i}\right)-\left(\overline{V}_{h+1}^{\REFF}-\underline{V}_{h+1}^{\REFF}\right)\right) ,\\
&\xi_{h+1}^k=\frac{1}{\check{n}_h^k}\sum_{i=1}^{\check{n}_h^k}\left(P_{s_h^k,a_h^k,b_h^k,h}-\lv_{s_{h+1}^{\check{\ell}_i}}\right)\left(\overline{V}_{h+1}^{\check{\ell}_i}-\underline{V}_{h+1}^{\check{\ell}_i}\right).
\end{align*}
}\fi
where the definition of $\Gamma_{h+1}^k$ is provided in the appendix. 

Summing over $k\in[K]$, we have 
\begin{align*}
\sum_{k=1}^K\Delta_h^k&\leq \sum_{k=1}^K\zeta_h^k+\sum_{k=1}^K H\lv\{n_h^k=0\}+\sum_{k=1}^K\frac{1}{\check{n}_h^k}\sum_{i=1}^{\check{n}_h^k}\Delta_{h+1}^{\check{\ell}_{h,i}^{k}}+\sum_{k=1}^K \Lambda_{h+1}^k \\
&\leq \sum_{k=1}^K\zeta_h^k+SABH^2+(1+\frac{1}{H})\sum_{k=1}^{K}\Delta_{h+1}^k+\sum_{k=1}^K\Lambda_{h+1}^k,
\end{align*}
%Before we proceed, we discuss several differences between our analysis for zero-sum game and single-agent RL. First, we care about the value difference between optimistic and pessimistic value functions in two-player zero-sum game instead of the value different between optimistic value function and the value function when executing policy $\pi_k$ in single-agent RL. Second, additional martingale difference $\{\zeta_h^k\}_{(h,k)\in[K]\times[H]}$ shows up in two-player zero-sum game due to the fact that the CCE oracle in general output a mixed policy.
\if{0}
\noindent{\bf 1) Bounding $\sum_{k=1}^K H\lv\{n_h^k=0\}$.} Note that $n_h^k\geq H$ when $N_h^k(s_h^k,a_h^k,b_h^k)\geq H$, therefore $\sum_{k=1}^K\lv\{n_h^k=0\}\leq SABH$ and $\sum_{k=1}^K H\lv\{n_h^k=0\}\leq SABH^2$.
\if{0}
\begin{align*}
\sum_{k=1}^K H\lv\{n_h^k=0\}\leq SABH^2.
\end{align*}
\fi
\noindent{\bf 2) Bounding $\sum_{k=1}^K\frac{1}{\check{n}_h^k}\sum_{i=1}^{\check{n}_h^k}\Delta_{h+1}^{\check{\ell}_{h,i}^{k}}$.} Note that
{
\begin{align*}
\sum_{k=1}^K\frac{1}{\check{n}_h^k}\sum_{i=1}^{\check{n}_h^k}\Delta_{h+1}^{\check{\ell}_{h,i}^{k}}=\sum_{k=1}^K\frac{1}{\check{n}_h^k}\sum_{j=1}^K\Delta_{h+1}^j\sum_{i=1}^{\check{n}_h^k}\lv\{j=\check{\ell}_{h,i}^k\}=\sum_{j=1}^K\Delta_{h+1}^j\sum_{k=1}^K\frac{1}{\check{n}_h^k}\sum_{i=1}^{\check{n}_h^k}\lv\{j=\check{\ell}_{h,i}^k\}.
\end{align*}
}
Fix an episode $j$, note that $\sum_{i=1}^{\check{n}_h^k}\lv\{j=\check{\ell}_{h,i}^k\}=1$ if and only if $(s_h^j,a_h^j,b_h^j)=(s_h^k,a_h^k,b_h^k)$ and $(j,h)$ falls in the previous stage that $(k,h)$ falls in with respect to $(s_h^k,a_h^k,b_h^k,h)$. Define $\mathcal{K}=\{k\in[K]:\sum_{i=1}^{\check{n}_h^k}\lv\{j=\check{\ell}_{h,i}^k\}=1\}$, then every element $k\in\mathcal{K}$ has the same value of $\check{n}_h^k$, i.e., there exists an integer $N_j>0$ such that $\check{n}_h^k=N_j$ for all $k\in\mathcal{K}$. By the definition of stages $|\mathcal{K}|\leq (1+\frac{1}{H})N_j$. Therefore, for any $j$, we have $\sum_{k=1}^K\frac{1}{\check{n}_h^k}\sum_{i=1}^{\check{n}_h^k}\lv\{j=\check{\ell}_{h,i}^k\}\leq (1+\frac{1}{H})$, and 
{
\begin{align*}
\sum_{k=1}^K\frac{1}{\check{n}_h^k}\sum_{i=1}^{\check{n}_h^k}\Delta_{h+1}^{\check{\ell}_{h,i}^{k}}\leq (1+\frac{1}{H})\sum_{k=1}^{K}\Delta_{h+1}^k,
\end{align*}
}
To conclude, we have $\sum_{k=1}^K\Delta_h^k\leq SABH^2+(1+\frac{1}{H})\sum_{k=1}^{K}\Delta_{h+1}^k+\sum_{k=1}^K\Lambda_{h+1}^k$.
\fi
where in the last inequality, we use the pigeon-hole argument for the second term, and the third term is due to the $(1+1/H)$ growth rate of the length of the stages.

Iterating over $h=H,H-1,\ldots,1$ gives
{
\begin{align*}
\sum_{k=1}^K\Delta_1^k\leq \mathcal{O}\left(SABH^3+\sum_{h=1}^{H}\sum_{k=1}^K(1+\frac{1}{H})^{h-1}\zeta_h^k+\sum_{h=1}^H\sum_{k=1}^{K}(1+\frac{1}{H})^{h-1}\Lambda_{h+1}^k\right).
\end{align*}
}

The additional term $\sum_{h=1}^{H}\sum_{k=1}^K(1+\frac{1}{H})^{h-1}\zeta_h^k$ is new in the two-player zero-sum Markov game, which can be bounded by Azuma-Hoeffding's inequality. I.e., it holds that with probability at least $1-T\delta$,
\begin{align*}
\sum_{h=1}^{H}\sum_{k=1}^K(1+\frac{1}{H})^{h-1}\zeta_h^k\leq \mathcal{O}(\sqrt{H^2T\iota}),
\end{align*}
which turns out to be a lower-order term compared to $\sum_{h=1}^H\sum_{k=1}^{K}(1+\frac{1}{H})^{h-1}\Lambda_{h+1}^k$.

\noindent{\bf Step IV:} We bound $\sum_{h=1}^{H}\sum_{k=1}^{K}(1+\frac{1}{H})^{h-1}\Lambda_{h+1}^k$ in the following lemma. 

\begin{lemma}\label{lemma:Lambda-term}
With high probability,
%probability at least $(1-O(H^2T^4\delta))$, 
it holds that
\begin{align*}
\textstyle \sum_{h=1}^{H}\sum_{k=1}^{K}(1+\frac{1}{H})^{h-1}\Lambda_{h+1}^k = O\left(\sqrt{SABH^2\iota}+H\sqrt{T\iota}\log T+S^2(AB)^{\frac{3}{2}}H^8\iota T^{\frac{1}{4}}\right).
\end{align*}
\end{lemma}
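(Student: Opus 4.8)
The plan is to decompose $\Lambda_{h+1}^k$ into its constituent error terms and bound each accumulated sum separately, using a stage-based pigeonhole/reindexing argument combined with concentration inequalities. Recall from the (suppressed) definition that $\Lambda_{h+1}^k$ consists of: (i) a reference-shift term $\psi_{h+1}^k$ capturing the discrepancy between the running average of past reference gaps and the final reference gap; (ii) a martingale-difference term $\xi_{h+1}^k$ from replacing $P_h$ with the empirical next-state indicator over the last stage; and (iii) the cumulative optimistic and pessimistic bonus terms $\overline{\beta}_h^k$ and $\underline{\beta}_h^k$. First I would handle the bonus terms, which constitute the dominant contribution. The bonus at each visit scales like the Bernstein-type quantity $\sqrt{\mathrm{Var}/\check{n}} + \sqrt{\mathrm{Var}^{\reff}/n} + \text{(lower-order}\ 1/n^{3/4}, 1/n\ \text{terms)}$; summing $\sqrt{1/\check{n}_h^k}$ over $k$ via the stage structure gives roughly $\sqrt{|\mathcal{L}\text{-stage-count}| \cdot K}$ per $(s,a,b)$, and after summing over $(s,a,b,h)$ and applying Cauchy–Schwarz together with a law-of-total-variance argument to control $\sum \mathrm{Var}$ by $O(H^2 T)$, one obtains the $H\sqrt{T\iota}\log T$ main term. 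The lower-order bonus pieces (those decaying faster than $1/\sqrt{n}$) accumulate to the $S^2(AB)^{3/2}H^8\iota T^{1/4}$ term — here the $T^{1/4}$ arises precisely from summing $n^{-3/4}$ over a single trajectory.

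Next I would bound $\sum_{h,k}(1+1/H)^{h-1}\xi_{h+1}^k$. Since $\xi_{h+1}^k$ is a martingale difference with respect to the filtration up to episode $k$ (the last-stage next-states $s_{h+1}^{\check{\ell}_i}$ are fresh relative to when the bound is applied), Azuma–Hoeffding with range $O(H)$ gives $O(\sqrt{H^2 T\iota})$, absorbed into the $\sqrt{SABH^2\iota}$-type lower-order term (or more carefully a Freedman/Bernstein bound gives something comparable to the main term but no worse). For $\psi_{h+1}^k$, I would invoke Corollary~\ref{coro:ref-1}: once $n_h^k \geq N_0$, the reference gap $\overline{V}_{h+1}^{\reff,k} - \underline{V}_{h+1}^{\reff,k}$ is frozen at a value $\leq \beta$, so the running average differs from the final reference gap only while $n_h^k < N_0$, contributing at most $O(N_0 \cdot SABH \cdot H) = O(SAB H^2 N_0)$ total across all $(s,a,b,h)$ — and since $N_0 = O(SABH^5\iota/\beta^2) = O(SABH^7\iota)$ is $K$-independent, this is a lower-order term (this is where the large-but-constant $N_0$ gets absorbed, and also where the polynomial-in-$S,A,B$ factors of the last term partly originate).

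The main obstacle I anticipate is the refined analysis of the accumulated bonus terms — specifically, controlling $\sum_{k} \overline{\beta}_h^k$ and $\sum_k \underline{\beta}_h^k$ simultaneously. In single-agent RL one only tracks the optimistic side, but here the pessimistic bonus must be bounded by a symmetric-but-not-identical argument, and crucially the variance proxies appearing in both bonuses must be related back to the true value-function variance via a law of total variance applied along the sampled trajectory. The subtlety is that the reference value functions $\overline{V}^{\reff}, \underline{V}^{\reff}$ are chosen by the min-gap rule rather than being monotone limits, so the usual telescoping that bounds $\sum \mathrm{Var}(\overline{V}^{\reff})$ must instead be routed through the $\beta$-accuracy guarantee of Corollary~\ref{coro:ref-1} plus Lemma~\ref{lemma:ref-1}'s bound on large-gap occurrences; combining these while keeping the $T$-dependence at $\sqrt{T}$ for the leading term (and only $T^{1/4}$ for the correction) is the delicate part. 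I would organize this by proving an auxiliary variance-summation lemma first, then assembling the four pieces.
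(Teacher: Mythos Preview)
Your decomposition into $\psi$, $\xi$, and the two bonus sums matches the paper's route exactly, and your overall plan for each piece is on the right track. Two points deserve correction, though.

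\textbf{The $\xi$ term is not a direct martingale sum.} When you reindex $\sum_{k}\xi_{h+1}^k$ so that each past episode $j$ appears once, the weight attached to episode $j$ is $\theta_{h+1}^j=(1+1/H)^{h-1}\sum_{k}\tfrac{1}{\check{n}_h^k}\sum_i\mathbf{1}\{\check{\ell}_{h,i}^k=j\}$, which depends on how many \emph{future} episodes $k$ contain $j$ in their previous stage. Thus $\theta_{h+1}^j$ is not measurable with respect to the filtration at time $j$, and a direct Azuma--Hoeffding application fails. The paper's fix is to introduce an adapted proxy $\tilde{\theta}_{h+1}^j$ (depending only on the current stage length at time $j$), apply Azuma with the proxy, and then bound the residual $\sum_j(\theta_j-\tilde{\theta}_j)(\cdot)$ separately by noting that $\theta_j\neq\tilde{\theta}_j$ only when $j$ lies in the last two stages of its $(s,a,b,h)$-tuple. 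This yields the $O(\sqrt{SABH^2T\iota})$ contribution.

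\textbf{The attribution of the leading terms is slightly off.} The $H\sqrt{T\iota}\log T$ piece comes from the $\psi$ analysis (specifically from the Azuma correction after bounding the number of pre-$N_0$ visits), not from the bonuses. The bonus sums $\sum_{h,k}\overline{\beta}_h^k$ and $\sum_{h,k}\underline{\beta}_h^k$ produce the $\sqrt{SABH^2T\iota}$ term: the reference-variance part $\overline{\nu}_h^{\reff,k}$ is first related to $\mathbb{V}(P_{s_h^k,a_h^k,b_h^k,h},V_{h+1}^{\pi^k})$ --- the variance under the \emph{current CCE policy}, not $V^*$ --- via an intermediate lemma showing $\overline{V}_h^k\le V_h^{\pi^k}+(H-h)(\beta+HSN_0/\check{n}_h^k)$, and only then does the law of total variance (applied to $V^{\pi^k}$) give $\sum_{h,k}\mathbb{V}\le O(HT)$. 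Your description of routing through Corollary~\ref{coro:ref-1} is correct for the advantage-variance part $\check{\overline{\nu}}_h^k$, but the reference-variance part needs this extra policy-value comparison step.
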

We capture the accumulative error of the bonus terms $\sum_{h=1}^{H}\sum_{k=1}^{K}(1+\frac{1}{H})^{h-1}(\overline{\beta}_{h+1}^k+\underline{\beta}_{h+1}^k)$ in the expression $\sum_{h=1}^{H}\sum_{k=1}^{K}(1+\frac{1}{H})^{h-1}\Lambda_{h+1}^k$. Since we first implement the reference-advantage decomposition technique in the two-player zero-sum game, our accumulative bonus term is much more challenging to analyze than the existing Q-learning algorithms for games. Compared to the analysis for the model-free algorithm with reference-advantage decomposition in single-RL \cite{Zihan_Zhang:RL:tabular}, our analysis features the following {\bf new developments}. First, we need to bound both the optimistic and pessimistic accumulative bonus terms, and the analysis is not identical. Second, the analysis of the optimistic accumulative bonus term differs due to the CCE oracle and the new min-gap base reference-advantage decomposition for two-player zero-sum Markov game.

\if{0}
Recall that $\Lambda_{h+1}^k=\psi_{h+1}^k+\xi_{h+1}^k+2\overline{\beta}_h^k+2\underline{\beta}_h^k$. The terms involving $\psi_{h+1}^k$ and $\xi_{h+1}^k$ can be bounded similarly as in \cite{Zihan_Zhang:RL:tabular} for single-agent RL. The major difference lies in bounding $\sum_{h=1}^{H}\sum_{k=1}^{K}(1+\frac{1}{H})^{h-1}(\overline{\beta}_{h+1}^k+\underline{\beta}_{h+1}^k)$. We will focus on the optimistic bonus term $\sum_{h=1}^{H}\sum_{k=1}^{K}(1+\frac{1}{H})^{h-1}\overline{\beta}_{h+1}^k$ in the following.

For ease of exposition, define $\overline{\nu}_h^{\reff,k}=\frac{\overline{\sigma}^{\reff,k}}{n_h^k}-(\frac{\overline{\mu}_h^{\reff,k}}{n_h^k})^2$, and $\check{\overline{\nu}}_h^k=\frac{\check{\overline{\sigma}}_h^k}{\check{n}_h^k}-(\frac{\check{\overline{\mu}}_h^k}{\check{n}_h^k})^2$. Then, we have
{\footnotesize
\begin{align*}
&\sum_{h=1}^{H}\sum_{k=1}^{K}(1+\frac{1}{H})^{h-1}\overline{\beta}_{h+1}^k \\
&\leq 6\sum_{h=1}^{H}\sum_{k=1}^{K}\left(c_1\sqrt{\frac{\overline{\nu}_h^{\reff,k}}{n_h^k}\iota}+c_2\sqrt{\frac{\check{\overline{\nu}}_h^{k}}{\check{n}_h^k}\iota}+c_3\left(\frac{H\iota}{n_h^k}+\frac{H\iota}{\check{n}_h^k}+\frac{H\iota^{3/4}}{(n_h^k)^{3/4}}+\frac{H\iota^{3/4}}{(\check{n}_h^k)^{3/4}}\right)\right) \\
&\leq O\left(\sum_{h=1}^{H}\sum_{k=1}^{K}\left(\sqrt{\frac{\overline{\nu}_h^{\reff,k}}{n_h^k}\iota}+\sqrt{\frac{\check{\overline{\nu}}_h^{k}}{\check{n}_h^k}\iota}\right)\right)+O\left(SABH^3\log(T)\iota+(SAB\iota)^{\frac{3}{4}}H^{\frac{5}{2}}T^{\frac{1}{4}}\right).
\end{align*}
}

\noindent{\bf 1) Bounding $\sum_{h=1}^{H}\sum_{k=1}^{K}\sqrt{\frac{\overline{\nu}_h^{\reff,k}}{n_h^k}\iota}$.} 
Define $\mathbb{V}(x,y)=x^\top(y^2)-(x^\top y)^2$. Similarly to the proof of Lemma 18 in single-agent RL \cite{Zihan_Zhang:RL:tabular}, we have
{\footnotesize
\begin{align*}
\overline{\nu}_h^{\reff,k}-\mathbb{V}(P_{s_h^k,a_h^k,b_h^k,h},V_{h+1}^*)\leq 4H\beta+\frac{6H^2SN_0}{n_h^k}+14H^2\sqrt{\frac{\iota}{n_h^k}}.
\end{align*}
}
However, since $(a_h^k,b_h^k)$ are sampled from a possibly mixed policy induced by the CCE oracle, the original analysis for term $\sum_{k=1}^K\sum_{h=1}^H\mathbb{V}(P_{s_h^k,a_h^k,b_h^k,h},V_{h+1}^*)$ is no longer applicable. Instead, we use the law of total variance and standard martingale concentration to obtain
{\footnotesize
\begin{align*}
\mathbb{V}(P_{s_h^k,a_h^k,b_h^k,h},V_{h+1}^*)\leq \mathcal{O}(HT+H^3\iota),
\end{align*}
}
and $\sum_{h=1}^{H}\sum_{k=1}^{K}\sqrt{\frac{\overline{\nu}_h^{\reff,k}}{n_h^k}\iota}$ can thus be bounded.

\noindent{\bf 2) Bounding $\sum_{h=1}^{H}\sum_{k=1}^{K}\sqrt{\frac{\check{\overline{\nu}}_h^{k}}{\check{n}_h^k}\iota}$.}
By Lemma~\ref{prop:Q-chain} and Lemma~\ref{lemma:ref-1}, the term can be bounded based on the following observation
{\footnotesize
\begin{align*}
\check{\overline{\nu}}_h^k&\leq \frac{1}{\check{n}_h^k}\sum_{i=1}^{\check{n}_h^{k}}\left(\overline{V}_{h+1}^{\check{\ell}_i}-\overline{V}_{h+1}^{\reff,\check{\ell}_i}\right)^2(s_{h+1}^{\check{\ell}_i})   \\
&\leq \frac{1}{\check{n}_h^k}\sum_{i=1}^{\check{n}_h^{k}}\left(\overline{V}_{h+1}^{\check{\ell}_i}-\underline{V}_{h+1}^{\check{\ell}_i}\right)^2(s_{h+1}^{\check{\ell}_i}) +\frac{1}{\check{n}_h^k}\sum_{i=1}^{\check{n}_h^{k}}\left(\overline{V}_{h+1}^{\reff,\check{\ell}_i}-\underline{V}_{h+1}^{\reff,\check{\ell}_i}\right)^2(s_{h+1}^{\check{\ell}_i})  \\
&\leq (\frac{1}{\check{n}_h^k}H^2SN_0+\beta^2)+(\frac{1}{\check{n}_h^k}H^2SN_0+\beta^2) \\
&\leq \frac{2}{\check{n}_h^k}H^2SN_0+2\beta^2.
\end{align*}
}
Compared to single-agent RL, the value functions does not preserve monotonic property and we make a cruder upper bound, which remains the same scaling-wise.
\fi

Finally, combining all steps, we conclude that with high probability,
%probability at least $1-O(H^2T^4\delta)$,
{
\begin{align*}
\textstyle V_1^{\dagger,\nu^\out}(s_1)-V_1^{\mu^\out,\dagger}(s_1)\leq\frac{1}{K}\sum_{k=1}^K\Delta_h^k=O\left(\frac{H^3SAB}{\epsilon^2}\right).
\end{align*}
}

\section{Conclusion}
In this paper, we proposed a new model-free algorithm Q-learning with min-gap based reference-advantage decomposition for two-player zero-sum Markov games, which improved the existing results and achieved a near-optimal sample complexity $O(H^3SAB/\epsilon^2)$ except for the $AB$ term. Due to the nature of the CCE oracle employed in the algorithm, we designed a novel min-gap based reference-advantage decomposition to learn the pair of optimistic and pessimistic reference value functions whose value difference has the minimum gap in the history. An interesting future direction would be to study whether the horizon dependence could be further tightened in model-free V-learning. Other interesting directions include understanding learning in the offline regimes \cite{cui2022offline,2021towards} and considering how nonstationarity \cite{feng2023non} would affect learning complexity in the game settings.

%\bibliography{ref}
\bibliographystyle{plain}

%%%%%%%%%%%%%%%%%%%%%%%%%%%%%%%%%%%%%%%%%%%%%%%%%%%%%%%%%%%%
\newpage
%\tableofcontents
%\newpage
\appendix

{\Large {\bf Supplementary Materials}}

\section{Details of Algorithm~\ref{alg:1-simple}}\label{app-full-alg}
\begin{algorithm}[H]
\begin{algorithmic}[1]
\caption{Q-learning with min-gap based reference-advantage decomposition}\label{alg:1}
\STATE{{\bf Initialize:} Set all accumulators to 0. For all $(s,a,b,h)\in\Sc\times\Ac\times\Bc\times[H]$, set $\overline{V}_h(s),\overline{Q}_h(s,a,b)$ to $H-h+1$, set $\overline{V}_h^{\reff}(s)$ to $H$, set $\underline{V}_h(s),\underline{Q}_h(s,a,b),\underline{V}_h^{\reff}(s,a,b)$ to 0; and}
\STATE{let $\pi_h(s)\sim\mathrm{Unif}(\Ac)\times\mathrm{Unif}(\Bc)$, $\Delta(s,h)=H$, $\widetilde{\overline{V}}_h(s_h)=H$, $\widetilde{\underline{V}}_h(s_h)=0$.}
\FOR{episodes $k\leftarrow 1,2,\ldots,K$}
\STATE{Observe $s_1$.}
\FOR{$h\leftarrow 1,2,\ldots,H$}
\STATE{Take action $(a_h,b_h)\leftarrow\pi_h(s_h)$, receive $r_h(s_h,a_h,b_h)$, and observe $s_{h+1}$.}
\STATE{Update accumulators $n:=N_h(s_h,a_h,b_h)\overset{+}{\leftarrow}1$, $\check{n}:=\check{N}_h(s_h,a_h,b_h)\overset{+}{\leftarrow}1$ and (\ref{eqn:accu-1})-(\ref{eqn:accu-5}).}
\IF{$n\in\Lc$}
\STATE{$\gamma\leftarrow 2\sqrt{\frac{H^2}{\check{n}}\iota}$.}
\STATE{\small $\overline{\beta}\leftarrow c_1\sqrt{\frac{\overline{\sigma}^\reff/n-(\overline{\mu}^\reff/n)^2}{n}\iota}+c_2\sqrt{\frac{\check{\overline{\sigma}}/\check{n}-(\check{\overline{\mu}}/\check{n})^2}{\check{n}}\iota}+c_3(\frac{H\iota}{n}+\frac{H\iota}{\check{n}}+\frac{H\iota^{3/4}}{n^{3/4}}+\frac{H\iota^{3/4}}{\check{n}^{3/4}})$.}
\STATE{\small $\underline{\beta}\leftarrow c_1\sqrt{\frac{\underline{\sigma}^\reff/n-(\underline{\mu}^\reff/n)^2}{n}\iota}+c_2\sqrt{\frac{\check{\underline{\sigma}}/\check{n}-(\check{\underline{\mu}}/\check{n})^2}{\check{n}}\iota}+c_3(\frac{H\iota}{n}+\frac{H\iota}{\check{n}}+\frac{H\iota^{3/4}}{n^{3/4}}+\frac{H\iota^{3/4}}{\check{n}^{3/4}})$.}
\STATE{\small $\overline{Q}_h(s_h,a_h,b_h)\leftarrow \min\{r_h(s_h,a_h,b_h)+\frac{\check{\overline{v}}}{\check{n}}+\gamma,r_h(s_h,a_h,b_h)+\frac{\overline{\mu}^\reff}{n}+\frac{\check{\overline{\mu}}}{\check{n}}+\overline{\beta},\overline{Q}_h(s_h,a_h,b_h)\}$.}
\STATE{\small $\underline{Q}_h(s_h,a_h,b_h)\leftarrow \max\{r_h(s_h,a_h,b_h)+\frac{\check{\underline{v}}}{\check{n}}-\gamma,r_h(s_h,a_h,b_h))+\frac{\underline{\mu}^\reff}{n}+\frac{\check{\underline{\mu}}}{\check{n}}-\underline{\beta},\underline{Q}_h(s_h,a_h,b_h)\}$.}
\STATE{$\pi_{h}(s_{h})\leftarrow\mathrm{CCE}(\overline{Q}(s_h,\cdot,\cdot),\underline{Q}_h(s_h,\cdot,\cdot))$.}
\STATE{$\overline{V}_h(s_h)\leftarrow \Eb_{(a,b)\sim\pi_{h}(s_{h})}\overline{Q}_h(s_h,a,b)$, and $\underline{V}_h(s_h)\leftarrow \Eb_{(a,b)\sim\pi_{h}(s_{h})}\underline{Q}_h(s_h,a,b)$.}
\STATE{Reset all intra-stage accumulators to 0.}
\IF{$\overline{V}_h(s_h)-\underline{V}_h(s_h)<\Delta(s,h)$}
\STATE{$\Delta(s,h)=\overline{V}_h(s_h)-\underline{V}_h(s_h)$.}
\STATE{$\widetilde{\overline{V}}_h(s_h)=\overline{V}_h(s_h)$, $\widetilde{\underline{V}}_h(s_h)=\underline{V}_h(s_h)$.}
\ENDIF
\ENDIF
\IF{$\sum_{a,b}N_h(s_h,a,b)=N_0$}
\STATE{$\overline{V}_h^\reff(s_h)\leftarrow \widetilde{\overline{V}}_h(s_h)$, $\underline{V}_h^\reff(s_h)\leftarrow \widetilde{\underline{V}}_h(s_h)$.} 
\ENDIF
\ENDFOR
\ENDFOR
\end{algorithmic}
\end{algorithm}
\noindent{\bf Algorithm description.} Let $c_1,c_2,c_3$ be some sufficiently large universal constants so that the concentration inequalities can be applied in the analysis. Besides the standard optimistic and pessimistic value estimates $\overline{Q}_h(s,a,b)$,  $\overline{V}_h(s)$, $\underline{Q}_h(s,a,b)$, $\underline{V}_h(s)$, and the reference value functions $\overline{V}_h^\reff(s)$, $\underline{V}_h^\reff(s)$, the algorithm keeps multiple different accumulators to facilitate the update: 1) $N_h(s,a,b)$ and $\check{N}_h(s,a,b)$ are used to keep the total visit number and the visits counting for the current stage with respect to $(s,a,b,h)$, respectively. 2) Intra-stage accumulators are used in the latest stage and are reset at the beginning of each stage. The update rule of the intra-stage accumulators are as follows:
{\footnotesize
\begin{align}
&\check{\overline{v}}_h(s_h,a_h,b_h)\overset{+}{\leftarrow}\overline{V}_{h+1}(s_{h+1}),    \quad\check{\underline{v}}_h(s_h,a_h,b_h)\overset{+}{\leftarrow}\underline{V}_{h+1}(s_{h+1}), \label{eqn:accu-1}
 \\
&\check{\overline{\mu}}_h(s_h,a_h,b_h)\overset{+}{\leftarrow}\overline{V}_{h+1}(s_{h+1})-\overline{V}_{h+1}^\reff(s_{h+1}), \quad\check{\underline{\mu}}_h(s_h,a_h,b_h)\overset{+}{\leftarrow}\underline{V}_{h+1}(s_{h+1})-\underline{V}_{h+1}^\reff(s_{h+1}), \label{eqn:accu-2}  \\
&\check{\overline{\sigma}}_h(s_h,a_h,b_h)\overset{+}{\leftarrow}(\overline{V}_{h+1}(s_{h+1})-\overline{V}_{h+1}^\reff(s_{h+1}))^2,  \check{\underline{\sigma}}_h(s_h,a_h,b_h)\overset{+}{\leftarrow}(\underline{V}_{h+1}(s_{h+1})-\underline{V}_{h+1}^\reff(s_{h+1}))^2. \label{eqn:accu-3}
\end{align}
}
3) The following global accumulators are used for the samples in all stages:
{
\begin{align}
&\overline{\mu}_h^\reff(s_h,a_h,b_h)\overset{+}{\leftarrow}\overline{V}_{h+1}^\reff(s_{h+1}), \quad\underline{\mu}_h^\reff(s_h,a_h,b_h)\overset{+}{\leftarrow}\underline{V}_{h+1}^\reff(s_{h+1}),  \label{eqn:accu-4}\\
&\overline{\sigma}_h^\reff(s_h,a_h,b_h)\overset{+}{\leftarrow}(\overline{V}_{h+1}^\reff(s_{h+1}))^2,\quad \underline{\sigma}_h^\reff(s_h,a_h,b_h)\overset{+}{\leftarrow}(\underline{V}_{h+1}^\reff(s_{h+1}))^2. \label{eqn:accu-5}
\end{align}
}
All accumulators are initialized to $0$ at the beginning of the algorithm. The algorithm set $\iota=\log(2/\delta)$, $\beta=O(1/H)$ and $N_0=c_4SABH^5/\beta^2$ for some sufficiently large universal constant $c_4$. 

%\tcb{In single agent RL, the V function preserve the monotonic structure if the corresponding Q function is non-increasing, which is crucial for the reference-advantage decomposition. In two-player zero-sum game, one naive thought is to obtain the non-increasing optimistic V function and non-decreasing pessimistic V function. However, this is no longer possible due to the policy updating by the CCE oracle. Compared to single agent RL where the policy is updated by selecting an greedy action, the policy updated by the CCE oracle does not preserve such a property. In other words, the optimistic and pessimistic V functions are not monotonic even if the corresponding optimistic and pessimistic Q functions are monotonic. To resolve the issue, we observe that if the gap between the pair of optimistic and pessimistic V value functions are monotonic, the reference-advantage decomposition still works. To this end, we propose min-gap based approach to keep track of the smallest gap between the pair of optimistic and pessimistic V functions in the history and record the corresponding pair of V functions. Upon receiving enough samples (Algorithm 1 line 22), the pair of reference V function is then updated as the recorded pair of V functions. }

\section{Comparison to Existing Algorithms}
{\bf Compare to Optimistic Nash Q-learning \cite{Bai_Yu;Neurips:2020}.} The Optimistic Nash Q-learning is a model-free Q-learning algorithm for two-player zero-sum Markov games. The algorithm design differences between our algorithm and the optimistic Nash Q-learning is two-fold. First, we adopt the stage-based design instead of traditional Q-learning update $Q_{new}\leftarrow (1-\alpha)Q_{old}+\alpha(r+V)$. The optimistic Nash Q-learning updates the value function with a learning rate, while our algorithm adopts greedy update. We remark that both frameworks are viable, and in our opinion, the stage-based design is easier to follow and analyse. Second, we propose a novel min-gap based reference-advantage decomposition, a variance reduction technique, to further improve the sample complexity. Specifically, we use both the standard update rule and the advantage-based update rule in our action-value function (Q function) while the optimistic Nash Q-learning only uses the standard update rule.

Aside from the obvious distinction of the proofs caused by stage-based design, the main difference is the analysis for the advantage-based update rule, which does not show up in the optimistic Nash Q-learning. Due to the incorporation of the new min-gap based reference-advantage decomposition technique, several new error terms arise in our analysis. Our main development lies in establishing a few new properties on the cumulative occurrence of the large V-gap and the cumulative bonus term, which enable the upper-bounding of those new error terms. 
More specifically, as we explain in our proof outline in Section 4.2, our analysis include the following novel developments. (i) Step I shows that the Nash equilibrium (action-)value functions are always bounded between the optimistic and pessimistic (action-)value functions (see Lemma 4.3). Our new technical development here lies in proving the inequality with respect to the action-value function, whose update rule features the min-gap reference-advantage decomposition. (ii) Step II shows that the reference value can be learned with bounded sample complexity (see Lemma 4.4). 
Our new development here lies in handling an additional martingale difference arising due to the CCE oracle. (iii) In step IV, there are a few new developments. First, we need to bound both the optimistic and pessimistic accumulative bonus terms, and the analysis is more refined compared to that for single-agent RL. Second, the analysis of the optimistic accumulative bonus term need to handle the CCE oracle together with the new min-gap base reference-advantage decomposition for two-player zero-sum Markov game.

{\bf Compare to UCB-advantage \cite{Zihan_Zhang:RL:tabular}.} The UCB-advantage is a model-free algorithm with reference-advantage decomposition for single-agent RL. Our main novel design idea lies in the {\bf min-gap} based advantage reference value decomposition. Unlike the single-agent scenario, the optimistic (or pessimistic) value function in Markov games does not necessarily preserve the monotone property due to the nature of the CCE oracle. In order to obtain the ``best" optimistic and pessimistic value function pair, we propose the key {\bf min-gap} design to update the reference value functions as the pair of optimistic and pessimistic value functions whose value difference is the smallest (i.e., with the minimal gap) in the history. It turns out that such a design is critical to guarantee the provable sample efficiency. 

For the proof techniques, there are the fundamental differences between single-agent RL and two-player zero-sum games. Thanks to the key min-gap based reference-advantage decomposition, we provide a new guarantee for the learned pair of reference value (Corollary 4.5) in the context of two-player zero-sum Markov games, which is crucial in obtaining an optimal horizon dependence.

\section{Notations}\label{section:frequently_used_notations}
For any function $f:\Sc\mapsto \Rb$, we use $P_{s,a,b}f$ and $(P_hf)(s,a,b)$ interchangeably. Define $\mathbb{V}(x,y)=x^\top (y^2)-(x^\top y)^2$ for two vectors of the same dimension, where $y^2$ is obtained by squaring each entry of $y$.

For ease of exposition, we define $\overline{\nu}_h^{\reff,k}=\frac{\overline{\sigma}_h^{\reff,k}}{n_h^k}-(\frac{\overline{\mu}_h^{\reff,k}}{n_h^k})^2$, $\underline{\nu}_h^{\reff,k}=\frac{\overline{\sigma}_h^{\reff,k}}{n_h^k}-(\frac{\underline{\mu}_h^{\reff,k}}{n_h^k})^2$ and $\check{\overline{\nu}}_h^{k}=\frac{\check{\overline{\sigma}}_h^{k}}{\check{n}_h^k}-(\frac{\check{\overline{\mu}}_h^{k}}{\check{n}_h^k})^2$, $\check{\underline{\nu}}_h^{k}=\frac{\check{\underline{\sigma}}_h^{k}}{\check{n}_h^k}-(\frac{\check{\underline{\mu}}_h^{k}}{\check{n}_h^k})^2$. Moreover, we define $\Delta_h^k=\overline{V}_h^k(s_h^k)-\underline{V}_h^k(s_h^k)$ and $\zeta_h^k=\Delta_h^k-(\overline{Q}_h^k-\underline{Q}_h^k)(s_h^k,a_h^k,b_h^k)$. For convenience, we also define $\lambda_h^k(s)=\mathbf{1}\left\{n_h^k(s)<N_0\right\}$.

For certain functions, we use the superscript $k$ to denote the value of the function at the beginning of the $k$-th episode, and use the superscript $K+1$ to denote the value of the function after all $K$ episodes are played. For instance, we denote $N_h^k(s,a,b)$ as the value of $N_h(s,a,b)$ at the beginning of the $k$-th episode, and $N_h^{K+1}(s,a,b)$ to denote the total number of visits of $(s,a,b)$ at step $h$ after $K$ episodes. When it is 
%$h$ and $k$ are 
clear from the context, we omit the subscript $h$ and the superscript $k$ for notational convenience. For example, we use $\ell_i$ and $\check{\ell}_i$ to denote $\ell_{h,i}^k$ and $\check{\ell}_{h,i}^k$ when it is obvious what values that the indices $h$ and $k$ take.

\section{Proof of Theorem~\ref{thm:main}}
In this section, we provide the proof of Theorem~\ref{thm:main}, which consists of four main steps and one final step. In order to provide a clear proof flow here, we defer the proofs of the main lemmas in these steps to later sections (i.e., \Cref{app:Q-chain}-\Cref{app:certify}).

We start by replacing $\delta$ by $\delta/\mathrm{poly}(H,T)$, and it suffices to show the desired bound for $V_1^{\dagger,\nu^\out}(s_1)-V_1^{\mu^\out,\dagger}(s_1)$ with probability $1-\mathrm{poly}(H,T)\delta$. 

%\yl{If your lemmas are restatement of those in the main body, you need to comment that}

{\bf Step I:} We show that the Nash equilibrium (action-)value functions are always bounded between the optimistic and pessimistic (action-)value functions. 
\begin{lemma}[Restatement of Lemma~\ref{prop:Q-chain}]\label{app-prop:Q-chain}
Let $\delta\in(0,1)$. With probability at least $1-2T(2H^2T^3+7)\delta$, it holds that for any $s,a,b,k,h$,
\begin{align*}
&\underline{Q}_h^k(s,a,b)\leq Q_h^*(s,a,b)\leq \overline{Q}_h^k(s,a,b), \\
&\underline{V}_h^k(s)\leq V_h^*(s)\leq \overline{V}_h^k(s).
\end{align*}
\end{lemma}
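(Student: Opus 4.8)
The plan is to prove Lemma~\ref{app-prop:Q-chain} by induction on $h$ from $H+1$ down to $1$, and within each level $h$, by induction on the episode index $k$. The base case $h=H+1$ is trivial since $\overline{V}_{H+1}^k = \underline{V}_{H+1}^k = 0 = V_{H+1}^*$. For the inductive step, assume the bounds hold at level $h+1$ for all episodes and at level $h$ for all episodes $<k$; we must establish $\underline{Q}_h^k(s,a,b) \le Q_h^*(s,a,b) \le \overline{Q}_h^k(s,a,b)$ and then the value-function bounds at episode $k$. I will focus on the optimistic side $Q_h^* \le \overline{Q}_h^k$; the pessimistic side is symmetric.

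First I would set up the high-probability concentration events. Since $\overline{Q}_h^k$ is obtained via the min of three terms (the $\gamma$-based standard update $r_h + \check{\overline{v}}/\check{n} + \gamma$, the advantage-based update $r_h + \overline{\mu}^{\reff}/n + \check{\overline{\mu}}/\check{n} + \overline{\beta}$, and the previous value $\overline{Q}_h^{k-1}$), it suffices to show each of the three upper-bounds $Q_h^*$. The third term is handled directly by the episode-level induction hypothesis. For the first term, I would invoke a standard Azuma–Hoeffding / Freedman bound on $\frac{1}{\check{n}}\sum_{i=1}^{\check{n}} \overline{V}_{h+1}^{\check\ell_i}(s_{h+1}^{\check\ell_i}) - (P_h \overline{V}_{h+1})(s,a,b)$, using that $\overline{V}_{h+1}^{\check\ell_i} \ge V_{h+1}^*$ (the $h+1$ induction hypothesis) together with the Bellman equation $Q_h^*(s,a,b) = r_h(s,a,b) + (P_h V_{h+1}^*)(s,a,b)$; the bonus $\gamma = 2\sqrt{H^2\iota/\check{n}}$ is chosen to dominate the deviation. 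For the advantage-based term, I would decompose $\frac{\overline{\mu}^{\reff}}{n} + \frac{\check{\overline{\mu}}}{\check{n}}$ as an empirical estimate of $(P_h\overline{V}_{h+1}^{\reff})(s,a,b) + (P_h(\overline{V}_{h+1} - \overline{V}_{h+1}^{\reff}))(s,a,b) = (P_h\overline{V}_{h+1})(s,a,b)$ up to the deviations controlled by the variance-aware bonus $\overline{\beta}$; here one needs the Bernstein-type concentration with the empirical variance proxies $\overline{\sigma}^{\reff}, \check{\overline{\sigma}}$, and crucially the fact that $\overline{V}_{h+1}^{\reff,k} \ge V_{h+1}^*$ (which follows since $\widetilde{\overline{V}}_{h+1}$ is always one of the recorded optimistic value functions, each $\ge V_{h+1}^*$ by the $h+1$ hypothesis). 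Counting the number of concentration events — over all $(s,a,b,h)$ tuples, all stages, and the martingale lengths up to $T$ — yields the stated failure probability $2T(2H^2T^3+7)\delta$.

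Once the action-value bounds hold at episode $k$, the value-function bounds follow from the defining property of the CCE oracle. Specifically, $\overline{V}_h^k(s) = \mathbb{E}_{(a,b)\sim\pi_h}\overline{Q}_h^k(s,a,b) \ge \sup_{a^*}\mathbb{E}_{(a,b)\sim\pi_h}\overline{Q}_h^k(a^*,b) \ge \sup_{a^*}\mathbb{E}_{b\sim\nu_h}Q_h^*(s,a^*,b) \ge \inf_{\nu}\sup_{\mu} \mathbb{E} Q_h^* = V_h^*(s)$, where $\nu_h$ is the min-player marginal of $\pi_h$ and the second inequality uses $\overline{Q}_h^k \ge Q_h^*$ pointwise; symmetrically for the pessimistic side. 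This is the step where the CCE property (rather than a Nash/minimax property) is exactly what is needed, and it is clean.

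The main obstacle I anticipate is the advantage-based term: establishing $Q_h^* \le r_h + \overline{\mu}^{\reff}/n + \check{\overline{\mu}}/\check{n} + \overline{\beta}$ requires carefully controlling two separate martingale sums — one over \emph{all} visits (for the reference part, where $\overline{V}^{\reff}$ is nearly fixed but not exactly, since it may update once at $N_0$ visits) and one over the \emph{current stage} only (for the advantage part) — and then showing the Bernstein bonus $\overline{\beta}$, built from empirical variances, genuinely dominates the sum of both deviations. This is further complicated by the fact that, unlike in single-agent RL, the reference value $\overline{V}_{h+1}^{\reff}$ is selected by the min-gap rule rather than being the latest (hence smallest) optimistic value, so monotonicity cannot be invoked; the argument must instead rely solely on $\overline{V}_{h+1}^{\reff} \ge V_{h+1}^* $ and the accuracy guarantee of Corollary~\ref{coro:ref-1} where applicable. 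I would handle the transient regime (before $N_0$ visits, where $\overline{V}_{h+1}^{\reff} = H$ is a crude bound) separately, absorbing it into lower-order terms, and reserve the sharp Bernstein analysis for the post-$N_0$ regime.
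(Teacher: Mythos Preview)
Your overall skeleton (induction, case-split on the three terms in the $\min$, and the CCE argument for passing from $\overline{Q}$ to $\overline{V}$) matches the paper, and the treatment of the standard $\gamma$-based update and of the value-function step is essentially correct. The gap is in the advantage-based case.

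You write that ``monotonicity cannot be invoked'' for the reference and propose to rely only on $\overline{V}_{h+1}^{\reff}\ge V_{h+1}^*$ together with Corollary~\ref{coro:ref-1}. Two problems. First, Corollary~\ref{coro:ref-1} is \emph{conditioned} on the event of Lemma~\ref{app-prop:Q-chain}, so using it here is circular; and the pre-/post-$N_0$ split with ``lower-order terms'' is irrelevant for an optimism statement, which only needs an inequality, not tightness. Second, and more seriously, $\overline{V}^{\reff}\ge V^*$ alone does not close the argument. After pulling out the two martingale deviations you are left with
\[
\frac{1}{n}\sum_{i=1}^{n} P_h\overline{V}_{h+1}^{\reff,\ell_i}
\;+\;\frac{1}{\check n}\sum_{i=1}^{\check n} P_h\bigl(\overline{V}_{h+1}^{\check\ell_i}-\overline{V}_{h+1}^{\reff,\check\ell_i}\bigr),
\]
and the second average can be strictly negative (e.g.\ whenever $\overline{V}^{\reff}=H$), so lower-bounding the first average by $P_hV_{h+1}^*$ is not enough. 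What you actually need is
\[
\frac{1}{n}\sum_{i=1}^{n} P_h\overline{V}_{h+1}^{\reff,\ell_i}
\;\ge\;
\frac{1}{\check n}\sum_{i=1}^{\check n} P_h\overline{V}_{h+1}^{\reff,\check\ell_i},
\]
so that the reference terms telescope and leave $\frac{1}{\check n}\sum P_h\overline{V}_{h+1}^{\check\ell_i}\ge P_hV_{h+1}^*$. This inequality \emph{is} available, and this is the point you missed: although $\overline{V}_{h+1}^k$ is not monotone in $k$ (because of the CCE oracle), the \emph{reference} $\overline{V}_{h+1}^{\reff,k}$ is trivially non-increasing in $k$, since it is initialized to $H$ and updated at most once (line 22--23) to a value $\widetilde{\overline{V}}_{h+1}\le H$. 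The min-gap rule determines \emph{which} optimistic value it is set to, but does not disturb the fact that it moves from $H$ down to something $\le H$ exactly once. Because the $\check\ell_i$'s are the most recent visits among the $\ell_i$'s, the average over all visits dominates the average over the last stage, which is precisely the inequality above. This is exactly how the paper proceeds (eqns.~(\ref{eqn:q-chain-3})--(\ref{eqn:q-chain-4}) and their pessimistic analogues), after which $\overline{\beta}\ge|\chi_1|+|\chi_2|$ is verified via the variance-aware concentration (Lemmas~\ref{app-lemma:q-chain:1}--\ref{app-lemma:q-chain:2}).
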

The proof of \Cref{app-prop:Q-chain} is provided in Appendix~\ref{app:Q-chain}. The {\bf new technical development} lies in proving the inequality with respect to the action-value function, whose update rule features the min-gap reference-advantage decomposition.

{\bf Step II:} We show that the occurrence of the large V-gap has bounded sample complexity independent of the number of episodes $K$.
\begin{lemma}[Restatement of Lemma~\ref{lemma:ref-1}]\label{app-lemma:reference}
With probability $1-O(T\delta)$, it holds that 
$$\sum_{k=1}^K\mathbf{1}\{\overline{V}_h^k(s_h^k)-\underline{V}_h^k(s_h^k)\geq \epsilon\}\leq O(SABH^5\iota/\epsilon^2).$$
\end{lemma}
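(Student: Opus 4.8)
The plan is to track the accumulated V-gap $\sum_{k=1}^K \Delta_h^k$ with $\Delta_h^k = (\overline{V}_h^k - \underline{V}_h^k)(s_h^k)$ along the actual trajectory, show it grows at most polynomially (in $H,S,A,B$) plus lower-order-in-$K$ terms, and then convert this into the stated bound on the \emph{count} of episodes where the gap is at least $\epsilon$ via a simple averaging/Markov argument: if $\sum_k \Delta_h^k \le C$ and $\Delta_h^k \ge 0$ always (which follows from Lemma~\ref{app-prop:Q-chain}, since $\underline V \le V^* \le \overline V$), then $\sum_k \mathbf 1\{\Delta_h^k \ge \epsilon\} \le C/\epsilon$. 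So the real work is establishing $\sum_{k=1}^K \Delta_h^k = O(SABH^4\iota/\epsilon)$ — wait, more carefully, we want the count bound $O(SABH^5\iota/\epsilon^2)$, so it suffices to show $\sum_{k=1}^K\Delta_h^k \le O(SABH^5\iota/\epsilon) + (\text{terms that are }o(K))$, and then absorb the $o(K)$ part by noting the count inequality $\sum_k \mathbf 1\{\Delta_h^k\ge\epsilon\}\le \frac1\epsilon\sum_k\Delta_h^k$ combined with a self-bounding trick.

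First I would invoke the one-step recursion already recorded in the Step III outline: following the stage-based update rule and using Lemma~\ref{app-prop:Q-chain} (so that at least one of the two ``standard'' update branches is active and controls $(\overline Q - \underline Q)(s,a,b)$),
\[
\Delta_h^k \le \zeta_h^k + H\,\mathbf 1\{n_h^k = 0\} + \frac{1}{\check n_h^k}\sum_{i=1}^{\check n_h^k}\Delta_{h+1}^{\check\ell_i} + \Lambda_{h+1}^k,
\]
where $\zeta_h^k$ is the martingale difference $\Delta_h^k - (\overline Q_h^k - \underline Q_h^k)(s_h^k,a_h^k,b_h^k)$ and $\Lambda_{h+1}^k$ collects the bonus and transition-deviation terms. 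Summing over $k$, the pigeonhole argument on the stage structure gives $\sum_k \frac{1}{\check n_h^k}\sum_i \Delta_{h+1}^{\check\ell_i} \le (1+\tfrac1H)\sum_k \Delta_{h+1}^k$ and $\sum_k H\mathbf 1\{n_h^k=0\} \le SABH^2$; iterating over $h = H, \dots, 1$ yields
\[
\sum_{k=1}^K \Delta_h^k \le O\!\Big(SABH^3 + \sum_{h'=h}^H\sum_{k=1}^K (1+\tfrac1H)^{h'-h}\zeta_{h'}^k + \sum_{h'=h}^H\sum_{k=1}^K (1+\tfrac1H)^{h'-h}\Lambda_{h'+1}^k\Big).
\]
The $\zeta$ term is bounded by $O(\sqrt{H^2 T\iota})$ via Azuma–Hoeffding (each $|\zeta|\le H$), and the $\Lambda$ term is bounded by Lemma~\ref{lemma:Lambda-term}. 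Crucially, every term on the right is either $O(SABH^3)$ or of the form $\widetilde O(\sqrt{T}\cdot\mathrm{poly}(H,S,A,B)) + \widetilde O(T^{1/4}\cdot\mathrm{poly})$ — that is, $o(K)$.

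Then the conversion step: from $\sum_{k=1}^K \mathbf 1\{\Delta_h^k \ge \epsilon\} \le \frac1\epsilon \sum_{k=1}^K \Delta_h^k$ and the bound above, I get $\sum_k \mathbf 1\{\Delta_h^k\ge\epsilon\} \le \frac1\epsilon\big(O(SABH^3) + \widetilde O(\sqrt T \cdot\mathrm{poly}) + \widetilde O(T^{1/4}\cdot\mathrm{poly})\big)$. The $\sqrt T$ and $T^{1/4}$ terms look problematic because $T = KH$ can be large, but here one uses the \textbf{self-bounding} observation that the $\Lambda$ and $\zeta$ contributions are themselves controlled in terms of the number of ``large-gap'' episodes and the $\mathrm{poly}$ factors; more directly, since the claim only needs to hold for $\epsilon$ in the regime where it is nontrivial ($\epsilon \gtrsim$ the final target accuracy, and $K$ scaling as in Theorem~\ref{thm:main}), the $\sqrt T$-type terms divided by $\epsilon$ are dominated by $SABH^5\iota/\epsilon^2$ after plugging in $T = KH$ with $K = \widetilde\Theta(H^3 SAB/\epsilon^2)$ — one checks $\sqrt{T}/\epsilon = \sqrt{KH}/\epsilon \lesssim \sqrt{H^4 SAB/\epsilon^2}/\epsilon = H^2\sqrt{SAB}/\epsilon^2 \le SABH^5\iota/\epsilon^2$, and similarly for the $T^{1/4}$ term. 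This is exactly the kind of bookkeeping that must be done carefully.

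The main obstacle I anticipate is \textbf{the circularity/self-bounding issue}: the bound on $\sum_k \Delta_h^k$ relies (through $\Lambda$) on the reference value functions being $\beta$-accurate, which via Corollary~\ref{coro:ref-1} depends on Lemma~\ref{app-lemma:reference} itself (applied with $\epsilon = \beta$). So this lemma must be proved \emph{without} assuming the reference values are already good — i.e., the recursion must be closed using only the \emph{standard} update branches $\overline Q^{(1)}, \underline Q^{(1)}$ (equations \eqref{eqn:rev-1}--\eqref{eqn:rev-2}), whose bonus terms do not involve $\overline V^\reff$, so that $\Lambda_{h+1}^k$ in \emph{this} argument only contains the vanilla Hoeffding-type bonuses $\gamma$ and the transition deviation $\xi_{h+1}^k$, both of which are unconditionally $\widetilde O(H\sqrt{\iota/\check n_h^k})$. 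The secondary obstacle is the \textbf{extra martingale term} $\zeta_h^k$ flagged in the outline: unlike single-agent RL where $\overline V_h(s) = \overline Q_h(s, \pi(s))$ deterministically given $\pi$, here $(a_h^k,b_h^k)$ is sampled from the \emph{correlated} CCE distribution $\pi_h(s_h^k)$, so $\Delta_h^k$ and $(\overline Q_h^k - \underline Q_h^k)(s_h^k,a_h^k,b_h^k)$ differ by a genuine martingale difference that must be controlled by Azuma; establishing it is a bona fide martingale (adaptedness of $\pi_h^k$ to the filtration before the action is drawn) and bounded by $H$ is the clean part, but one must make sure the filtration is set up correctly.
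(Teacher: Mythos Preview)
You have correctly identified both key obstacles: the circularity (so the recursion must use only the standard branch with bonus $\gamma_h^k = 2\sqrt{H^2\iota/\check n_h^k}$, never the reference-advantage branch) and the extra martingale term $\zeta_h^k$ from the CCE oracle. The genuine gap is in your conversion step. Bounding the unweighted sum $\sum_{k=1}^K \Delta_h^k$ and then dividing by $\epsilon$ necessarily yields a $K$-dependent bound, because $\sum_k \gamma_h^k = \Theta(\sqrt{SABH^3 T\iota})$ and $\sum_k \zeta_h^k$ is controlled by $O(\sqrt{H^2 T\iota})$; after iterating over $h$ you get $\sum_k\Delta_h^k = O(\sqrt{SABH^5T\iota})$, hence a count bound $O(\sqrt{SABH^5T\iota}/\epsilon)$ that grows with $K$. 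Your attempted rescue (``plug in $K=\widetilde\Theta(H^3SAB/\epsilon^2)$'') conflates the lemma's free parameter $\epsilon$ with the theorem's target accuracy: the lemma is applied in Corollary~\ref{coro:ref-1} with $\epsilon=\beta$, where $\beta$ is fixed independently of $K$ and of the theorem's $\epsilon$, and the definition $N_0=c_4SABH^5\iota/\beta^2$ only makes sense if the count bound is $K$-free.

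The missing idea, which the paper uses, is to run the same recursion against a \emph{weighted} sum with $w_k=\mathbf 1\{\Delta_h^k\ge\epsilon\}$. One shows that the transformed weights $\widetilde w_k=\sum_j\frac{w_j}{\check n_h^j}\sum_i\mathbf 1\{k=\check\ell_{h,i}^j\}$ satisfy $\|\widetilde w\|_\infty\le(1+\tfrac1H)\|w\|_\infty$ and $\|\widetilde w\|_1=\|w\|_1$, and that the bonus contribution obeys $\sum_k w_k\gamma_h^k\le C\,H\sqrt{SABH\,\|w\|_\infty\|w\|_1\,\iota}$ (via a stage-counting argument that only sums over the support of $w$). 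Iterating over $h$ then gives
\[
\epsilon\,\|w\|_1 \;\le\; \sum_k w_k\Delta_h^k \;\le\; O(SABH^3)\|w\|_\infty + O\big(H^{5/2}\sqrt{SAB\,\|w\|_\infty\|w\|_1\,\iota}\big),
\]
which with $\|w\|_\infty\le 1$ is a quadratic inequality in $\sqrt{\|w\|_1}$ whose solution is $\|w\|_1=O(SABH^5\iota/\epsilon^2)$, independent of $K$. Your mention of ``self-bounding'' is exactly this, but it has to be implemented at the level of the weighted recursion, not as a post-hoc substitution of $K$.
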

The proof is provided in \Cref{app:reference}. 

%Note that the optimistic (or pessimistic) action-value function is non-increasing (or non-decreasing). However, the optimistic and the pessimistic value functions do not preserve such monotonic property due to the nature of the CCE oracle. This motivates our design of the min-gap based reference-advantage decomposition.

%In order to extract the best pair of optimistic and pessimistic value functions, a key novel min-gap based reference-advantage decomposition is proposed (see Section~\ref{sec:alg}), based on which we pick up the pair of optimistic and pessimistic value functions whose gap is the smallest in the history (line 17-20 in Algorithm~\ref{alg:1}). \yl{I don't see logically what the above sentences try to convey here, explain Lemma B.2?} \fst{yes}
By the selection of the reference value functions, Lemma~\ref{app-lemma:reference} with $\epsilon$ setting to $\beta$, and the definition of $N_0$, we have the following corollary. %\yl{the last sentence has grammar mistake}
\begin{corollary}[Restatement of Corollary~\ref{coro:ref-1}]\label{app-coro:reference}
Conditioned on the successful events of Lemma~\ref{app-prop:Q-chain} and Lemma~\ref{app-lemma:reference}, for every state $s$, we have
\begin{align*}
n_h^k(s)\geq N_0 \Longrightarrow \overline{V}_h^{\reff,k}(s)-\underline{V}_h^{\reff,k}(s)\leq \beta.
\end{align*}
\end{corollary}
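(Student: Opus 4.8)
The plan is to fix a state $s$ and indices $h,k$ with $n_h^k(s)\ge N_0$, and to show that by the (unique) episode at which the reference pair for $(s,h)$ is frozen, the recorded min‑gap is already strictly below $\beta$. Since $\sum_{a,b}N_h(s,a,b)$ increases by exactly one on each visit of $s$ at step $h$ and is unchanged otherwise, there is a unique episode $k_0<k$ during which this counter reaches $N_0$; at episode $k_0$ the test in line~22 of Algorithm~\ref{alg:1} fires once and sets $\overline{V}_h^\reff(s),\underline{V}_h^\reff(s)$ to the currently recorded pair $\widetilde{\overline{V}}_h(s),\widetilde{\underline{V}}_h(s)$, and it never fires again for $(s,h)$. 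Because $\widetilde{\overline{V}}_h(s)$ and $\widetilde{\underline{V}}_h(s)$ are always updated jointly with $\Delta(s,h)$ in lines~17--19, we have $\overline{V}_h^{\reff,k}(s)-\underline{V}_h^{\reff,k}(s)=\widetilde{\overline{V}}_h(s)-\widetilde{\underline{V}}_h(s)=\Delta(s,h)$ as of the update in episode $k_0$, so it suffices to show $\Delta(s,h)<\beta$ at that moment.

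Next I would run a counting argument. Among episodes $1,\dots,k_0$, state $s$ is visited at step $h$ exactly $N_0$ times. Applying Lemma~\ref{app-lemma:reference} with $\epsilon=\beta$ gives $\sum_{k'=1}^{K}\lv\{\overline{V}_h^{k'}(s_h^{k'})-\underline{V}_h^{k'}(s_h^{k'})\ge\beta\}\le O(SABH^5\iota/\beta^2)$, and since $\overline{V}_h^{k'}(s_h^{k'})=\overline{V}_h^{k'}(s)$ whenever $s_h^{k'}=s$, in particular at most $O(SABH^5\iota/\beta^2)$ of those $N_0$ visits to $s$ have $\overline{V}_h^{k'}(s)-\underline{V}_h^{k'}(s)\ge\beta$ at the start of episode $k'$. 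By the choice $N_0=c_4 SABH^5\iota/\beta^2$ with $c_4$ larger than this hidden constant, $N_0$ strictly exceeds that bound, so there exists an episode $k^*\le k_0$ with $s_h^{k^*}=s$ and $\overline{V}_h^{k^*}(s)-\underline{V}_h^{k^*}(s)<\beta$.

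Finally I would transfer this to the min‑gap tracker and conclude. The functions $\overline{V}_h(s),\underline{V}_h(s)$ are modified only in line~15, which runs only when $s_h=s$ and the current action pair's stage ends; between such events they are constant. Hence $\overline{V}_h^{k^*}(s)$ and $\underline{V}_h^{k^*}(s)$ equal the values produced at the most recent such stage‑ending episode $k^{**}<k^*$ --- this cannot be the initialization $H-h+1$ and $0$, whose gap $H-h+1\ge 1>\beta$ (recall $\beta=O(1/H)<1$). At episode $k^{**}$ the block in lines~17--21 runs right after line~15, so $\Delta^{k^{**}+1}(s,h)\le\overline{V}_h^{k^{**}+1}(s)-\underline{V}_h^{k^{**}+1}(s)=\overline{V}_h^{k^*}(s)-\underline{V}_h^{k^*}(s)<\beta$; since $\Delta(s,h)$ is non‑increasing in time and $k^{**}+1\le k^*\le k_0$, we get $\Delta(s,h)<\beta$ throughout episode $k_0$, and therefore $\overline{V}_h^{\reff,k}(s)-\underline{V}_h^{\reff,k}(s)<\beta$, which gives the claim. (Lemma~\ref{app-prop:Q-chain} enters only through the conditioning, guaranteeing $\underline{V}_h^{k'}\le\overline{V}_h^{k'}$ and underpinning Lemma~\ref{app-lemma:reference}.) The only genuine subtlety is this last transfer step: Lemma~\ref{app-lemma:reference} controls the V‑gap at the start of \emph{every} episode, whereas the reference/min‑gap bookkeeping is touched only at stage boundaries; this is reconciled by the piecewise‑constancy of $\overline{V}_h(s),\underline{V}_h(s)$ between stage endings together with ruling out the uninitialized case via $\beta<1$. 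Everything else is direct bookkeeping given Lemma~\ref{app-lemma:reference} and the definition of $N_0$.
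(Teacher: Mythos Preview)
Your proof is correct and follows exactly the approach the paper indicates (apply Lemma~\ref{app-lemma:reference} with $\epsilon=\beta$, compare the resulting bound to $N_0$, and use the min-gap bookkeeping); you have simply spelled out the pigeonhole and timing details that the paper leaves implicit. The only addition beyond the paper's one-line justification is your careful handling of the transfer from the start-of-episode V-gap to the stage-boundary min-gap update, which is correct and worth making explicit.
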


\noindent{\bf Step III:} We bound $\sum_{k=1}^K(\overline{V}_1^k-\underline{V}_1^k)(s_1)$. Compared to single-agent RL, the CCE oracle leads to a possibly mixed policy and we need to bound the additional term due to the CCE oracle.

Recall the definition of $\Delta_h^k=\overline{V}_h^k(s_h^k)-\underline{V}_h^k(s_h^k)$ and $\zeta_h^k=\Delta_h^k-(\overline{Q}_h^k-\underline{Q}_h^k)(s_h^k,a_h^k,b_h^k)$. Following the update rule, we have
\begin{align}
&\Delta_h^k=\zeta_h^k+(\overline{Q}_h^k-\underline{Q}_h^k)(s_h^k,a_h^k,b_h^k) \nonumber \\
&\leq \zeta_h^k+H\lv\{n_h^k=0\}+\frac{1}{n_h^k}\sum_{i=1}^{n_h^k}\overline{V}_{h+1}^{\reff,\ell_i}(s_{h+1}^{\ell_i})-\frac{1}{n_h^k}\sum_{i=1}^{n_h^k}\underline{V}_{h+1}^{\reff,\ell_i}(s_{h+1}^{\ell_i}) \nonumber \\
&\qquad +\frac{1}{\check{n}_h^k}\sum_{i=1}^{\check{n}_h^k}(\overline{V}_{h+1}^{\check{\ell}_i}-\overline{V}_{h+1}^{\reff,\check{\ell}_i})(s_{h+1}^{\check{\ell}_i})-\frac{1}{\check{n}_h^k}\sum_{i=1}^{\check{n}_h^k}(\underline{V}_{h+1}^{\check{\ell}_i}-\underline{V}_{h+1}^{\reff,\check{\ell}_i})(s_{h+1}^{\check{\ell}_i})+\overline{\beta}_h^k+\underline{\beta}_h^k \nonumber \\
&\leq \zeta_h^k+H\lv\{n_h^k=0\}+\frac{1}{n_h^k}\sum_{i=1}^{n_h^k}P_{s_h^k,a_h^k,b_h^k,h}\overline{V}_{h+1}^{\reff,\ell_i}-\frac{1}{n_h^k}\sum_{i=1}^{n_h^k}P_{s_h^k,a_h^k,b_h^k,h}\underline{V}_{h+1}^{\reff,\ell_i} \nonumber \\ 
&\qquad +\frac{1}{\check{n}_h^k}\sum_{i=1}^{\check{n}_h^k}P_{s_h^k,a_h^k,b_h^k,h}(\overline{V}_{h+1}^{\check{\ell}_i}-\overline{V}_{h+1}^{\reff,\check{\ell}_i})-\frac{1}{\check{n}_h^k}\sum_{i=1}^{\check{n}_h^k}P_{s_h^k,a_h^k,b_h^k,h}(\underline{V}_{h+1}^{\check{\ell}_i}-\underline{V}_{h+1}^{\reff,\check{\ell}_i})+2\overline{\beta}_h^k+2\underline{\beta}_h^k \label{eqn:ana-1}\\
&= \zeta_h^k+H\lv\{n_h^k=0\}+ P_{s_h^k,a_h^k,b_h^k,h}\left(\frac{1}{n_h^k}\sum_{i=1}^{n_h^k}\overline{V}_{h+1}^{\reff,\ell_i}-\frac{1}{\check{n}_h^k}\sum_{i=1}^{\check{n}_h^k}\overline{V}_{h+1}^{\reff,\check{\ell}_i}\right)  \nonumber \\
&\quad -P_{s_h^k,a_h^k,b_h^k,h}\left(\frac{1}{n_h^k}\sum_{i=1}^{n_h^k}\underline{V}_{h+1}^{\reff,\ell_i}-\frac{1}{\check{n}_h^k}\sum_{i=1}^{\check{n}_h^k}\underline{V}_{h+1}^{\reff,\check{\ell}_i}\right)+P_{s_h^k,a_h^k,b_h^k,h}\frac{1}{\check{n}_h^k}\sum_{i=1}^{\check{n}_h^k}\left(\overline{V}_{h+1}^{\check{\ell}_i}-\underline{V}_{h+1}^{\check{\ell}_i}\right) \nonumber \\
&\qquad +2\overline{\beta}_h^k+2\underline{\beta}_h^k \nonumber \\
&\leq \zeta_h^k+H\lv\{n_h^k=0\}+ P_{s_h^k,a_h^k,b_h^k,h}\left(\frac{1}{n_h^k}\sum_{i=1}^{n_h^k}\overline{V}_{h+1}^{\reff,\ell_i}-\overline{V}_{h+1}^{\REFF}\right) \nonumber \\
&\quad -P_{s_h^k,a_h^k,b_h^k,h}\left(\frac{1}{n_h^k}\sum_{i=1}^{n_h^k}\underline{V}_{h+1}^{\reff,\ell_i}-\underline{V}_{h+1}^{\REFF}\right)+P_{s_h^k,a_h^k,b_h^k,h}\frac{1}{\check{n}_h^k}\sum_{i=1}^{\check{n}_h^k}\left(\overline{V}_{h+1}^{\check{\ell}_i}-\underline{V}_{h+1}^{\check{\ell}_i}\right)   +2\overline{\beta}_h^k+2\underline{\beta}_h^k  \label{eqn:ana-2} \\
&=\zeta_h^k+H\lv\{n_h^k=0\}+\frac{1}{\check{n}_h^k}\sum_{i=1}^{\check{n}_h^k}\Delta_{h+1}^{\check{\ell}_i}+\Lambda_{h+1}^k, \label{eqn:ana-3}
\end{align}
where we define
\begin{align*}
&\Lambda_{h+1}^k=\psi_{h+1}^k+\xi_{h+1}^k+2\overline{\beta}_h^k+2\underline{\beta}_h^k, \\
&\psi_{h+1}^k=P_{s_h^k,a_h^k,b_h^k,h}\left(\frac{1}{n_h^k}\sum_{i=1}^{n_h^k}\left(\overline{V}_{h+1}^{\reff,\ell_i}-\underline{V}_{h+1}^{\reff,\ell_i}\right)-\left(\overline{V}_{h+1}^{\REFF}-\underline{V}_{h+1}^{\REFF}\right)\right), \\
&\xi_{h+1}^k=\frac{1}{\check{n}_h^k}\sum_{i=1}^{\check{n}_h^k}\left(P_{s_h^k,a_h^k,b_h^k,h}-\lv_{s_{h+1}^{\check{\ell}_i}}\right)\left(\overline{V}_{h+1}^{\check{\ell}_i}-\underline{V}_{h+1}^{\check{\ell}_i}\right).
\end{align*}
Here, (\ref{eqn:ana-1}) follows from the successful event of martingale concentration (\ref{eqn:q-chain-9:+1}) and (\ref{eqn:q-chain-9:+1:pess}) in Lemma~\ref{app-prop:Q-chain}, (\ref{eqn:ana-2}) follows from the fact that $\overline{V}_{h+1}^{\reff,u}(s)$ (or $\underline{V}_{h+1}^{\reff,u}(s)$) is non-increasing (or non-decreasing) in $u$, because $\overline{V}_h^\reff(s)$ (or $\underline{V}_h^\reff(s)$) for a pair $(s,h)$ can only be updated once and the updated value is obviously greater (or less) than the initial value, and (\ref{eqn:ana-3}) follows from the definition of $\Lambda_{h+1}^k$ defined above.

Taking the summation over $k\in[K]$ gives
\begin{align}
\sum_{k=1}^K\Delta_h^k\leq \sum_{k=1}^K\zeta_h^k+\sum_{k=1}^K H\lv\{n_h^k=0\}+\sum_{k=1}^K\frac{1}{\check{n}_h^k}\sum_{i=1}^{\check{n}_h^k}\Delta_{h+1}^{\check{\ell}_{h,i}^{k}}+\sum_{k=1}^K \Lambda_{h+1}^k. \label{ean:ana-4}
\end{align}

Note that $n_h^k\geq H$ if $N_h^k(s_h^k,a_h^k,b_h^k)\geq H$. Therefore $\sum_{k=1}^K\lv\{n_h^k=0\}\leq SABH$, and 
\begin{align}
\sum_{k=1}^K H\lv\{n_h^k=0\}\leq SABH^2. \label{ean:ana-5}
\end{align}

Now we focus on the term $\sum_{k=1}^K\frac{1}{\check{n}_h^k}\sum_{i=1}^{\check{n}_h^k}\Delta_{h+1}^{\check{\ell}_{h,i}^{k}}$. The following lemma is useful.
\begin{lemma}\label{app-lemma:sum-change}
For any $j\in[K]$, we have $\sum_{k=1}^K\frac{1}{\check{n}_h^k}\sum_{i=1}^{\check{n}_h^k}\mathbf{1}\{j=\check{\ell}_{h,i}^k\}\leq 1+\frac{1}{H}$.
\end{lemma}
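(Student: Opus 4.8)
The plan is to prove this by a pigeon-hole / double-counting argument: I fix the episode index $j$ and bound the total weight $\sum_{k=1}^K \frac{1}{\check{n}_h^k}\sum_{i=1}^{\check{n}_h^k}\mathbf{1}\{j=\check{\ell}_{h,i}^k\}$ that $j$ accumulates across all episodes $k$, showing it never exceeds $1+\tfrac1H$ regardless of $j$.

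First I would unpack the inner indicator sum. For a fixed $k$, the indices $\check{\ell}_{h,1}^k,\dots,\check{\ell}_{h,\check{n}_h^k}^k$ are precisely the distinct episode indices whose step-$h$ visits to the tuple $(s_h^k,a_h^k,b_h^k)$ constitute the stage immediately preceding the stage into which the episode-$k$ visit to this tuple falls. Hence $\sum_{i=1}^{\check{n}_h^k}\mathbf{1}\{j=\check{\ell}_{h,i}^k\}\in\{0,1\}$, and it equals $1$ exactly when (a) $(s_h^j,a_h^j,b_h^j)=(s_h^k,a_h^k,b_h^k)$ and (b) the step-$h$ visit of episode $j$ to this common tuple lies in that preceding stage. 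I would then define $\mathcal{K}_j:=\{k\in[K]:\sum_{i=1}^{\check{n}_h^k}\mathbf{1}\{j=\check{\ell}_{h,i}^k\}=1\}$, so that the quantity to bound equals $\sum_{k\in\mathcal{K}_j}\frac{1}{\check{n}_h^k}$. For every $k\in\mathcal{K}_j$ the tuple is pinned to $(s_h^j,a_h^j,b_h^j)$ and the ``preceding stage'' is forced to be the unique stage containing the step-$h$ visit of episode $j$ to this tuple; consequently $\check{n}_h^k$ takes one common value $N_j$, namely the length of that stage. The elements of $\mathcal{K}_j$ are then exactly the episodes whose step-$h$ visit to the tuple lies in the next stage, and by the stage-length recursion $e_{i+1}=\lfloor(1+1/H)e_i\rfloor\leq(1+1/H)e_i$ the next stage has length at most $(1+1/H)N_j$, so $|\mathcal{K}_j|\leq(1+1/H)N_j$. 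Combining, $\sum_{k\in\mathcal{K}_j}\frac{1}{\check{n}_h^k}=\frac{|\mathcal{K}_j|}{N_j}\leq 1+\frac1H$, as claimed.

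The step I expect to demand the most care is the boundary bookkeeping behind the assertion that all $k$ contributing to a fixed $j$ share the single value $\check{n}_h^k=N_j$: one must match episode indices to stages correctly and dispose of the degenerate cases — when the step-$h$ visit of episode $j$ lies in the first stage (of length $e_1=H$, so there is no preceding stage), or when episode $k$ triggers no update because $N_h^k(s_h^k,a_h^k,b_h^k)\notin\mathcal{L}$ — in which situations the inner sum is simply $0$ and contributes nothing. Once the stage partition is pinned down, the remaining estimate is just the $(1+1/H)$ stage-growth inequality and is routine.
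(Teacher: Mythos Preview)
Your proposal is correct and follows essentially the same argument as the paper: fix $j$, observe that the indicator is $1$ precisely when $(s_h^j,a_h^j,b_h^j)=(s_h^k,a_h^k,b_h^k)$ and $j$ lies in the stage immediately preceding the one containing $k$, deduce that all contributing $k$ share the common value $\check{n}_h^k=N_j$, and bound $|\mathcal{K}_j|\leq(1+1/H)N_j$ via the stage-growth rule. Your additional remarks on the degenerate boundary cases are a welcome clarification but do not change the approach.
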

\begin{proof}
Fix an episode $j$. Note that $\sum_{i=1}^{\check{n}_h^k}\lv\{j=\check{\ell}_{h,i}^k\}=1$ if and only if $(s_h^j,a_h^j,b_h^j)=(s_h^k,a_h^k,b_h^k)$ and $(j,h)$ falls in the previous stage that $(k,h)$ falls in with respect to $(s_h^k,a_h^k,b_h^k,h)$. Define $\mathcal{K}=\{k\in[K]:\sum_{i=1}^{\check{n}_h^k}\lv\{j=\check{\ell}_{h,i}^k\}=1\}$. Then every element $k\in\mathcal{K}$ has the same value of $\check{n}_h^k$, i.e., there exists an integer $N_j>0$ such that $\check{n}_h^k=N_j$ for all $k\in\mathcal{K}$. By the definition of stages, $|\mathcal{K}|\leq (1+\frac{1}{H})N_j$. Therefore, for any $j$, we have $\sum_{k=1}^K\frac{1}{\check{n}_h^k}\sum_{i=1}^{\check{n}_h^k}\lv\{j=\check{\ell}_{h,i}^k\}\leq (1+\frac{1}{H})$.
\end{proof}
By Lemma~\ref{app-lemma:sum-change}, we have
\begin{align}
\sum_{k=1}^K\frac{1}{\check{n}_h^k}\sum_{i=1}^{\check{n}_h^k}\Delta_{h+1}^{\check{\ell}_{h,i}^{k}}&=\sum_{k=1}^K\frac{1}{\check{n}_h^k}\sum_{j=1}^K\Delta_{h+1}^j\sum_{i=1}^{\check{n}_h^k}\lv\{j=\check{\ell}_{h,i}^k\} \nonumber \\
&=\sum_{j=1}^K\Delta_{h+1}^j\sum_{k=1}^K\frac{1}{\check{n}_h^k}\sum_{i=1}^{\check{n}_h^k}\lv\{j=\check{\ell}_{h,i}^k\} \nonumber \\
&\leq (1+\frac{1}{H})\sum_{k=1}^{K}\Delta_{h+1}^k. \label{ean:ana-6}
\end{align} 

Combining (\ref{ean:ana-4}), (\ref{ean:ana-5}) and (\ref{ean:ana-6}), we have 
\begin{align*}
\sum_{k=1}^K\Delta_h^k\leq SABH^2+(1+\frac{1}{H})\sum_{k=1}^{K}\Delta_{h+1}^k+\sum_{k=1}^K\Lambda_{h+1}^k.
\end{align*} 
Iterating over $h=H,H-1,\ldots,1$ gives
\begin{align}
\sum_{k=1}^K\Delta_1^k\leq \mathcal{O}\left(SABH^3+\sum_{h=1}^{H}\sum_{k=1}^K(1+\frac{1}{H})^{h-1}\zeta_h^k+\sum_{h=1}^H\sum_{k=1}^{K}(1+\frac{1}{H})^{h-1}\Lambda_{h+1}^k\right). \nonumber%\label{ean:ana-7}
\end{align}
By Azuma's inequality, it holds that with probability at least $1-T\delta$,
\begin{align}
\sum_{k=1}^K\Delta_1^k\leq \mathcal{O}\left(SABH^3+\sqrt{H^2T\iota}+\sum_{h=1}^H\sum_{k=1}^{K}(1+\frac{1}{H})^{h-1}\Lambda_{h+1}^k\right). \label{eqn:ana-7}
\end{align}

\noindent{\bf Step IV:} We bound $\sum_{h=1}^{H}\sum_{k=1}^{K}(1+\frac{1}{H})^{h-1}\Lambda_{h+1}^k$ in the following lemma. 
\begin{lemma}[Restatement of Lemma~\ref{lemma:Lambda-term}]\label{app-lemma:Lambda}
With probability at least $1-O(H^2T^4)\delta$, it holds that
\begin{align*}
\sum_{h=1}^{H}\sum_{k=1}^{K}(1+\frac{1}{H})^{h-1}\Lambda_{h+1}^k = 
O\left(\sqrt{SABH^2T\iota}+H\sqrt{T\iota}\log T+ S^2(AB)^{\frac{3}{2}}H^8\iota^{\frac{3}{2}}T^{\frac{1}{4}}\right).
\end{align*}
\end{lemma}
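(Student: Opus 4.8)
The plan is to handle the four pieces of $\Lambda_{h+1}^k=\psi_{h+1}^k+\xi_{h+1}^k+2\overline{\beta}_h^k+2\underline{\beta}_h^k$ separately, accumulating each over $h\in[H]$ and $k\in[K]$ with the slowly-growing weights $(1+1/H)^{h-1}=\Theta(1)$. First I would dispose of the two martingale-type terms. For $\sum_{h,k}(1+1/H)^{h-1}\xi_{h+1}^k$, observe that $\xi_{h+1}^k$ is a weighted empirical-minus-population average of $(\overline V_{h+1}^{\check\ell_i}-\underline V_{h+1}^{\check\ell_i})(s_{h+1}^{\check\ell_i})$; rewriting the double sum by the same index-swapping trick as in Lemma~\ref{app-lemma:sum-change} turns it into a martingale difference sequence with values bounded by $O(H)$ and length $T$, so Azuma gives $O(H\sqrt{T\iota})$, contributing to the $H\sqrt{T\iota}\log T$ term. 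For $\sum_{h,k}(1+1/H)^{h-1}\psi_{h+1}^k$, the key point is that $\psi_{h+1}^k$ measures the discrepancy between the running average of reference values $\overline V^{\reff,\ell_i}-\underline V^{\reff,\ell_i}$ (over the global sample indices $\ell_i$) and the final reference values $\overline V^{\REFF}-\underline V^{\REFF}$; since each $(s,h)$ reference pair is frozen after $N_0$ visits and is monotone until then, this term is nonzero only for the first $N_0$ visits of each $(s,a,b,h)$, and by Corollary~\ref{app-coro:reference} and Lemma~\ref{app-lemma:reference} the pre-freezing gap is controlled, so its contribution is a lower-order term of order $\widetilde O(SABH^2\cdot\beta)$ plus a martingale piece, both absorbed.

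The main work is bounding the accumulated bonus terms $\sum_{h,k}(1+1/H)^{h-1}(\overline\beta_h^k+\underline\beta_h^k)$. I would expand $\overline\beta_h^k$ into its three constituents as written in line~10 of Algorithm~\ref{alg:1}: the ``global-reference variance'' term $\sqrt{\overline\nu_h^{\reff,k}\iota/n_h^k}$, the ``intra-stage advantage variance'' term $\sqrt{\check{\overline\nu}_h^k\iota/\check n_h^k}$, and the lower-order $H\iota/n$, $H\iota/\check n$, $H\iota^{3/4}/n^{3/4}$, $H\iota^{3/4}/\check n^{3/4}$ terms. The lower-order terms are summed by the standard pigeonhole bounds $\sum_k 1/n_h^k=\widetilde O(SAB\log T)$ and $\sum_k (n_h^k)^{-3/4}=\widetilde O((SAB)^{3/4}T^{1/4})$ together with $\sum_k 1/\check n_h^k=\widetilde O(SAB\log T)$ (the intra-stage counts behave like the global ones up to the $(1+1/H)$ factor), yielding terms of order $SABH^3\log T\cdot\iota$ and $(SAB\iota)^{3/4}H^{5/2}T^{1/4}$. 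For the variance terms I would use Cauchy--Schwarz: $\sum_{h,k}\sqrt{\overline\nu_h^{\reff,k}\iota/n_h^k}\le \sqrt{\iota\,(\sum_{h,k}1/n_h^k)(\sum_{h,k}\overline\nu_h^{\reff,k})}$, so everything reduces to bounding $\sum_{h,k}\overline\nu_h^{\reff,k}$ and $\sum_{h,k}\check{\overline\nu}_h^k$.

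For $\sum_{h,k}\overline\nu_h^{\reff,k}$, the step following \cite{Zihan_Zhang:RL:tabular} gives $\overline\nu_h^{\reff,k}-\mathbb V(P_{s_h^k,a_h^k,b_h^k,h},V_{h+1}^*)\le 4H\beta+6H^2SN_0/n_h^k+14H^2\sqrt{\iota/n_h^k}$, and then the crucial new obstacle appears: because $(a_h^k,b_h^k)$ are drawn from the correlated CCE policy rather than a greedy deterministic one, one cannot directly invoke the single-agent law-of-total-variance telescoping on $\sum_k\sum_h\mathbb V(P,V^*_{h+1})$. I expect this to be the hardest step. My plan is to introduce the auxiliary quantity $\mathbb V_{\pi_h^k}$ of the value along the actually executed correlated policy, write $\mathbb V(P_{s_h^k,a_h^k,b_h^k,h},V^*_{h+1})$ as $(P_{s_h^k,a_h^k,b_h^k,h}(V^*_{h+1})^2-(P_{s_h^k,a_h^k,b_h^k,h}V^*_{h+1})^2)$, and control $\sum_{k}\sum_h P_{s_h^k,a_h^k,b_h^k,h}(V^*_{h+1})^2-(V^*_h(s_h^k))^2$ by a telescoping argument plus an Azuma bound on the martingale differences $(V^*_h(s_h^k))^2-\mathbb E[(V^*_h(s_h^k))^2\mid\mathcal F]$ coming from both the transition randomness and the randomness of sampling $(a_h^k,b_h^k)\sim\pi_h^k$; this yields $\sum_{h,k}\mathbb V(P_{s_h^k,a_h^k,b_h^k,h},V^*_{h+1})=O(HT+H^3\iota)$. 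For $\sum_{h,k}\check{\overline\nu}_h^k$, I would use the cruder bound from the proof outline, $\check{\overline\nu}_h^k\le \frac{1}{\check n_h^k}\sum_i(\overline V_{h+1}^{\check\ell_i}-\underline V_{h+1}^{\check\ell_i})^2(s_{h+1}^{\check\ell_i})+2\beta^2+2H^2SN_0/\check n_h^k$, then convert the $(\cdot)^2$ to $H\cdot(\overline V_{h+1}-\underline V_{h+1})$ and apply Lemma~\ref{app-lemma:reference} (with the index-swap of Lemma~\ref{app-lemma:sum-change}) to get $\sum_{h,k}\check{\overline\nu}_h^k=\widetilde O(SABH^7\iota)+O(HT)$. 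Plugging these variance sums back through Cauchy--Schwarz with $\sum_{h,k}1/n_h^k=\widetilde O(SABH\log T)$ produces the dominant $\sqrt{SABH^2T\iota}$ term and a lower-order $S^2(AB)^{3/2}H^8\iota^{3/2}T^{1/4}$ term from the cross-terms involving $SN_0=\widetilde O(SAB H^5)$; the pessimistic bonus $\underline\beta_h^k$ is handled identically (indeed slightly more easily, since $\underline V^{\REFF}$ starts at $0$ and is monotone), and collecting all contributions gives the stated bound.
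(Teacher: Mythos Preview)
Your overall decomposition into the four pieces $\psi,\xi,\overline\beta,\underline\beta$ and your use of pigeonhole/Cauchy--Schwarz for the lower-order pieces of the bonus are exactly what the paper does. There are, however, two genuine gaps.

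\textbf{The $\xi$ term.} After the index swap you write the sum as $\sum_{h,j}\theta_{h+1}^j\bigl(P_{s_h^j,a_h^j,b_h^j,h}-\mathbf 1_{s_{h+1}^j}\bigr)(\overline V-\underline V)$ with $\theta_{h+1}^j=(1+1/H)^{h-1}\sum_k(\check n_h^k)^{-1}\sum_i\mathbf 1\{\check\ell_{h,i}^k=j\}$, and then apply Azuma. But $\theta_{h+1}^j$ is \emph{not} measurable with respect to the filtration at episode $j$: it depends on how many later episodes fall in the stage following $j$, so it is not a valid martingale weight. The paper fixes this by introducing a predictable surrogate $\tilde\theta_{h+1}^j$ (based on the current-stage count $x_h^j$ and the deterministic stage-growth rule), applies Azuma with $\tilde\theta$, and bounds the residual $\sum(\theta-\tilde\theta)(\cdots)$ separately---this residual is supported only on the last two stages of each $(s,a,b,h)$, hence controlled by $O(\sqrt{SABH^2T\iota})$.

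\textbf{The total-variance step.} Your plan to telescope $\sum_{h,k}\mathbb V(P_{s_h^k,a_h^k,b_h^k,h},V^*_{h+1})$ does not close: the Bellman identity gives $P_hV^*_{h+1}=Q^*_h-r_h$, so the relevant cancellation would require $V^*_h(s_h^k)\approx Q^*_h(s_h^k,a_h^k,b_h^k)$, which fails because $(a_h^k,b_h^k)\sim\pi_h^k$ is the CCE of $(\overline Q_h^k,\underline Q_h^k)$, not the Nash strategy for $Q^*_h$. The quantity $V^*_h(s_h^k)-\mathbb E_{\pi_h^k}Q^*_h(s_h^k,\cdot,\cdot)$ is not a mean-zero martingale increment; it is a bias of order $\overline V_h^k-\underline V_h^k$, and handling it this way would feed the very quantity you are bounding back into the argument. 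The paper avoids this circularity by working with $V^{\pi^k}$ instead of $V^*$: it first proves a sandwich $|\,\overline V_h^k-V_h^{\pi^k}|,\ |\,\underline V_h^k-V_h^{\pi^k}|\le (H-h)(\beta+HSN_0/\check n_h^k)$ (Lemma~\ref{lemma:lambda:technical-1}), uses it to show $\overline\nu_h^{\reff,k}-\mathbb V(P,V^{\pi^k}_{h+1})$ is small (Lemma~\ref{app-lemma:lambda:3-1}), and then applies the law of total variance to $V^{\pi^k}$ directly (Lemma~\ref{app-lemma:lambda:3-2}), for which the Bellman equation along the executed trajectory holds exactly and the telescoping goes through. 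Your sentence ``introduce the auxiliary quantity $\mathbb V_{\pi_h^k}$ of the value along the actually executed correlated policy'' is pointing in the right direction, but you then revert to $V^*$; committing to $V^{\pi^k}$ throughout and proving the sandwich lemma is what is actually needed.

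A minor point: your estimate for the $\psi$ contribution, $\widetilde O(SABH^2\beta)$, is too optimistic. The reference is frozen per state $(s,h)$ after $N_0$ visits summed over $(a,b)$, and the term involves a $P_h$-average over next states, so the correct bound (as in the paper) is $\widetilde O(H^2SN_0)=\widetilde O(S^2ABH^8\iota)$; this is still lower-order and absorbed into the $S^2(AB)^{3/2}H^8\iota^{3/2}T^{1/4}$ term, but your reasoning for it should be revisited.
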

The proof of \Cref{app-lemma:Lambda} is provided in Appendix~\ref{app:Lambda}. %We capture the accumulative error of the bonus terms $\sum_{h=1}^{H}\sum_{k=1}^{K}(1+\frac{1}{H})^{h-1}(\overline{\beta}_{h+1}^k+\underline{\beta}_{h+1}^k)$ in the expression $\sum_{h=1}^{H}\sum_{k=1}^{K}(1+\frac{1}{H})^{h-1}\Lambda_{h+1}^k$. Since we first implement the reference-advantage decomposition technique in the two-player zero-sum game, \yl{the ``since" part is too weak} our accumulative bonus term is much more challenging to analyze than the existing Q-learning algorithms for games. Compared to the analysis for the model-free algorithm with reference-advantage decomposition in single-RL \cite{Zihan_Zhang:RL:tabular}, our analysis features the following new developments. First, we need to bound both the optimistic and pessimistic accumulative bonus terms. Second, the analysis of the optimistic accumulative bonus term differs due to the CCE oracle and the new min-gap base reference-advantage decomposition for two-player zero-sum Markov game. \yl{these novelty description don't convey much information! It is unfortunate that the main body cannot be changed anymore. Too weak to keep them there}

{\bf Final step:} We show the value difference induced by the certified policies is bounded, as summarized in the next lemma.
\begin{lemma}[Restatement of Lemma~\ref{lemma:pre}]\label{app-lemma:certify-0}
Conditioned on the successful event of Lemma~\ref{app-prop:Q-chain}, let $(\mu^\out,\nu^\out)$ be the output policy induced by the certified policy algorithm (Algorithm~\ref{Alg:2}). Then we have
\begin{align*}
V_1^{\dagger,\nu^\out}(s_1)-V_1^{\mu^\out,\dagger}(s_1)\leq \frac{1}{K}\sum_{k=1}^K(\overline{V}_1^k-\underline{V}_1^k)(s_1).
\end{align*}
\end{lemma}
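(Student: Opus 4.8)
The plan is to reduce the statement to two one-sided inequalities and prove each by a backward induction over the horizon, following the ``certified policies'' argument of \cite{Bai_Yu:2020:2020} adapted to the stage-based update and the min-gap reference decomposition.

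\textbf{Reduction.} Observe that $\mu^\out$ is exactly the uniform mixture over $k\in[K]$ of the (non-Markov, deterministically-indexed) policies $\mu^{\out,k}$ obtained by running Algorithm~\ref{Alg:2} with the initial draw fixed to $k$, and likewise $\nu^\out$ mixes $\nu^{\out,k}$. Since the min-player does not observe the max-player's internal coin, for every $\nu$ we have $V_1^{\mu^\out,\nu}(s_1)=\frac{1}{K}\sum_k V_1^{\mu^{\out,k},\nu}(s_1)$ by linearity of expectation, hence
\[
V_1^{\mu^\out,\dagger}(s_1)=\inf_\nu\frac{1}{K}\sum_k V_1^{\mu^{\out,k},\nu}(s_1)\ \ge\ \frac{1}{K}\sum_k V_1^{\mu^{\out,k},\dagger}(s_1),
\]
and symmetrically $V_1^{\dagger,\nu^\out}(s_1)\le\frac{1}{K}\sum_k V_1^{\dagger,\nu^{\out,k}}(s_1)$. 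Subtracting, it suffices to prove, for every $k$, that $\underline V_1^k(s_1)\le V_1^{\mu^{\out,k},\dagger}(s_1)$ and $V_1^{\dagger,\nu^{\out,k}}(s_1)\le\overline V_1^k(s_1)$.

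\textbf{Backward induction.} On the successful event of Lemma~\ref{app-prop:Q-chain}, I would prove by downward induction on $h$ the stronger claim that for all $k\in[K]$ and all $s\in\Sc$,
\[
\underline V_h^k(s)\ \le\ V_h^{\mu^{\out,k}_h,\dagger}(s)\qquad\text{and}\qquad V_h^{\dagger,\nu^{\out,k}_h}(s)\ \le\ \overline V_h^k(s),
\]
where $\mu^{\out,k}_h$ is the tail certified policy started at step $h$ with reference index $k$: at state $s$, play the $\Ac$-marginal $\mu_h^k(\cdot\mid s)$ of the CCE policy $\pi_h^k(\cdot\mid s)$ recorded for $(s,h)$ at the start of episode $k$, observe the opponent's action $b$, resample $k'$ uniformly from the block $\{\check{\ell}_i\}$ of episodes that produced the current value $\underline Q_h^k(s,a,b)$, and continue with $\mu^{\out,k'}_{h+1}$ (dually for $\nu^{\out,k}_h$). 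The base case $h=H+1$ is immediate. For the inductive step on the pessimistic side: if $(s,h)$ has completed no stage by episode $k$ then $\underline V_h^k(s)=0\le V_h^{\mu^{\out,k}_h,\dagger}(s)$; otherwise write $\underline V_h^k(s)=\Eb_{(a,b)\sim\pi_h^k(\cdot\mid s)}\underline Q_h^k(s,a,b)$ and apply the CCE property (no unilateral deviation of the min-player lowers the value) to get $\underline V_h^k(s)\le\min_{b^\star}\Eb_{a\sim\mu_h^k(\cdot\mid s)}\underline Q_h^k(s,a,b^\star)$. Next, upper bound $\underline Q_h^k(s,a,b^\star)\le r_h(s,a,b^\star)+\frac{1}{\check{N}}\sum_i\bigl(P_h\underline V_{h+1}^{\check{\ell}_i}\bigr)(s,a,b^\star)$ with the same block $\{\check{\ell}_i\}$; apply the induction hypothesis $\underline V_{h+1}^{\check{\ell}_i}(s')\le V_{h+1}^{\mu^{\out,\check{\ell}_i}_{h+1},\dagger}(s')$ inside the expectation; and recognize, via the definition of $\mu^{\out,k}_h$ together with a one-step decomposition of the min-player's best response (and Jensen over the independent resampling of $k'$), that the result is $\le V_h^{\mu^{\out,k}_h,\dagger}(s)$. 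The optimistic direction is symmetric, using the max-player side of the CCE property, the $\min$ in the $\overline Q$-update, and the optimistic bonuses. Instantiating at $h=1$, $s=s_1$ and combining with the reduction proves the lemma.

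\textbf{Main obstacle.} The delicate step is the bound $\underline Q_h^k(s,a,b^\star)\le r_h(s,a,b^\star)+\frac{1}{\check{N}}\sum_i(P_h\underline V_{h+1}^{\check{\ell}_i})(s,a,b^\star)$ \emph{with exactly the block the certified policy resamples from}. Because the optimistic and pessimistic value functions fail to be monotone in $k$ --- the obstruction this paper is built around --- one cannot argue that $\underline Q_h^k$ reflects the most recent stage; instead one matches $\{\check{\ell}_i\}$ to the stage that actually set the current value of $\underline Q_h^k(s,a,b)$ and shows that, on the event of Lemma~\ref{app-prop:Q-chain}, \emph{both} candidates in the $\max$ defining the $\underline Q$-update --- the standard one and the advantage-based one --- are dominated by $r_h+\frac{1}{\check{N}}\sum_i(P_h\underline V_{h+1}^{\check{\ell}_i})$. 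For the advantage-based candidate this uses that each reference value function is updated at most once (hence nondecreasing in the episode index), so the reference terms cancel favorably; the empirical-to-population passages are absorbed by the bonuses $\gamma,\underline\beta$ (resp.\ $\gamma,\overline\beta$). The remaining pieces --- linearity, the two CCE inequalities, and assembling the Bellman expansion --- are routine bookkeeping.
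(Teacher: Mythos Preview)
Your proposal is essentially correct and follows the same strategy as the paper: reduce to a per-episode bound, then prove by induction the pair of one-sided inequalities $\overline V_h^k(s)\ge V_h^{\dagger,\widehat\nu_h^k}(s)$ and $\underline V_h^k(s)\le V_h^{\widehat\mu_h^k,\dagger}(s)$ via the CCE property together with a $Q$-level bound that is checked separately for the standard branch and the advantage branch of the update, absorbing empirical-to-population errors into $\gamma,\overline\beta,\underline\beta$ on the event of Lemma~\ref{app-prop:Q-chain}. The paper carries the induction over $k$ while you carry it over $h$; both orderings work because the recursion you unfold (from $\underline Q_h^k$ to $\underline V_{h+1}^{\check\ell_i}$) decreases $k$ and increases $h$ simultaneously.

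One point worth flagging: your ``main obstacle'' paragraph redefines the certified policy to resample from the stage that actually set the current value of $\underline Q_h^k(s,a,b)$, rather than from the most recent completed stage as written in Algorithms~\ref{Alg:2}--\ref{Alg:4}. The paper instead argues directly that, at each update, both non-trivial candidates in the $\min$/$\max$ already satisfy the required inequality with respect to the policy built from the \emph{just-completed} stage (their equations~\eqref{eqn:certify-1}--\eqref{eqn:certify-7}); the carry-over term is handled by the induction on $k$ because when no update occurs the resampling block does not change either. Your variant is a legitimate alternative and, in fact, makes the treatment of the carry-over term cleaner, but be aware that it yields a slightly different (still valid) output policy than the one literally described in Algorithm~\ref{Alg:2}. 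Also, the non-monotonicity you invoke is of $\overline V_h^k,\underline V_h^k$; the action-value functions $\overline Q_h^k,\underline Q_h^k$ themselves \emph{are} monotone in $k$ by the $\min$/$\max$ rule --- the consequence you need (that the active stage for $Q$ may lag the most recent one) is nonetheless exactly as you describe.
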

The proof of \Cref{app-lemma:certify-0} is provided in Appendix~\ref{app:certify}. 

Combining (\ref{eqn:ana-7}), Lemma~\ref{app-lemma:Lambda} and Lemma~\ref{app-lemma:certify-0}, and taking the union bound over all probability events, we conclude that with probability at least $1-O(H^2T^4)\delta$, it holds that
\begin{align}
V_1^{\dagger,\nu^\out}(s_1)-V_1^{\mu^\out,\dagger}(s_1)\leq \frac{1}{K}O\left(\sqrt{SABH^2T\iota}+H\sqrt{T\iota}\log T+ S^2(AB)^{\frac{3}{2}}H^8\iota^{\frac{3}{2}}T^{\frac{1}{4}}\right), \label{eqn:ana-8}
\end{align}
which gives the desired result.

\section{Proof of Lemma~\ref{app-prop:Q-chain} (Step I)}\label{app:Q-chain}
The proof is by induction on $k$. We establish the inequalities for the optimistic action-value and value functions in {\bf step i}, and the inequalities for the pessimistic counterparts in {\bf step ii}.

{\bf Step i:} We establish the inequality for the optimistic action-value and value functions in the following.

It is clear that the conclusion holds for the based case with $k=1$. For $k\geq 2$, assume $Q_h^*(s,a,b)\leq \overline{Q}_h^u(s,a,b)$ and $V_h^*(s)\leq \overline{V}_h^u(s)$ for any $(s,a,h)\in\Sc\times\Ac\times[H]$ and $u\in[1,k]$. Fix tuple $(s,a,b,h)$. We next show that the conclusion holds for $k+1$.

First, we show the inequality with respect to the action-value function. If $\overline{Q}_h(s,a,b),\overline{V}_h(s)$ are not updated in the $k$-th episode, then
\begin{align*}
&Q_h^*(s,a,b)\leq \overline{Q}_h^{k}(s,a,b)= \overline{Q}_h^{k+1}(s,a,b), \\
&V_h^*(s)\leq \overline{V}_h^{k}(s)= \overline{V}_h^{k+1}(s).
\end{align*}
Otherwise, we have
\begin{align*}
&\overline{Q}_h^{k+1}(s,a,b)\leftarrow \min\left\{r_h(s,a,b)+\frac{\check{\overline{v}}}{\check{n}}+\gamma,r_h(s,a,b)+\frac{\overline{\mu}^{\reff}}{n}+\frac{\check{\overline{\mu}}}{\check{n}}+\overline{\beta},\overline{Q}_h^k(s,a,b)\right\}.
\end{align*}
Besides the last term, there are two non-trivial cases. 

{\bf For the first case}, by Hoeffding's inequality, with probability at least $1-\delta$ it holds that
\begin{align}
\overline{Q}_h^{k+1}(s,a,b)&=r_h(s,a,b)+\frac{\check{\overline{v}}}{\check{n}}+\gamma \nonumber \\
&= r_h(s,a,b)+\frac{1}{\check{n}}\sum_{i=1}^{\check{n}}\overline{V}_{h+1}^{\check{\ell}_i}(s_{h+1}^{\check{\ell}_i}) +2\sqrt{\frac{H^2}{\check{n}}\iota}  \nonumber \\
&\geq r_h(s,a,b)+\frac{1}{\check{n}}\sum_{i=1}^{\check{n}}V_{h+1}^*(s_{h+1}^{\check{\ell}_i})+2\sqrt{\frac{H^2}{\check{n}}\iota} \label{eqn:q-chain-1} \\
&\geq r_h(s,a,b)+(P_hV_{h+1}^*)(s,a,b) \label{eqn:q-chain-2} \\
&=Q_h^*(s,a,b), \nonumber
\end{align}
where (\ref{eqn:q-chain-1}) follows from the induction hypothesis $\overline{V}_{h+1}^u(s)\geq V^*(s)$ for all $u\in[k]$, and (\ref{eqn:q-chain-2}) follows from Azuma-Hoeffding's inequality.

{\bf For the second case}, we have
\begin{align}
&\overline{Q}_h^{k+1}(s,a,b)=r_h(s,a,b)+\frac{\overline{\mu}^{\reff}}{n}+\frac{\check{\overline{\mu}}}{\check{n}}+\overline{\beta} \nonumber \\
&=r_h(s,a,b)+\frac{1}{n}\sum_{i=1}^{n}\overline{V}_{h+1}^{\reff,\ell_i}(s_{h+1}^{\ell_i})+\frac{1}{\check{n}}\sum_{i=1}^{\check{n}}\left(\overline{V}_{h+1}^{\check{\ell}_i}-\overline{V}_{h+1}^{\reff,\check{\ell}_i}\right)(s_{h+1}^{\check{\ell}_i})+\overline{\beta} \nonumber \\
&=r_h(s,a,b)+\left(P_h\left(\frac{1}{n}\sum_{i=1}^{n}\overline{V}_{h+1}^{\reff,\ell_i}\right)\right)(s,a,b)+\left(P_h\left(\frac{1}{\check{n}}\sum_{i=1}^{\check{n}}\left(\overline{V}_{h+1}^{\check{\ell}_i}-\overline{V}_{h+1}^{\reff,\check{\ell}_i}\right)\right)\right)(s,a,b) \nonumber \\
&\qquad \quad +\chi_1+\chi_2+\overline{\beta}  \nonumber \\
&\geq r_h(s,a,b)+\left(P_h\left(\frac{1}{\check{n}}\sum_{i=1}^{\check{n}}\overline{V}_{h+1}^{\check{\ell}_i}\right)\right)(s,a,b)+\chi_1+\chi_2+\overline{\beta} \label{eqn:q-chain-3} \\
&\geq r_h(s,a,b)+\left(P_h V_{h+1}^*\right)(s,a,b)+\chi_1+\chi_2+\overline{\beta} \label{eqn:q-chain-4} \\ 
&= \overline{Q}_h^*(s,a,b)+\chi_1+\chi_2+\overline{\beta}, \nonumber
\end{align}
where
\begin{align*}
&\chi_1(k,h)=\frac{1}{n}\sum_{i=1}^{n}\left(\overline{V}_{h}^{\reff,\ell_i}(s_{h+1}^{\ell_i})-\left(P_{h}\overline{V}_{h+1}^{\reff,\ell_i}\right)(s,a,b)\right), \\
&\overline{W}_{h+1}^\ell=\overline{V}_{h+1}^\ell-\overline{V}_{h+1}^{\reff,\ell} \\
&\chi_2(k,h)=\frac{1}{\check{n}}\sum_{i=1}^{\check{n}}\left(\overline{W}_{h+1}^{\check{\ell}_i}(s_{h+1}^{\check{\ell}_i})-\left(P_{h}\overline{W}_{h+1}^{\check{\ell}_i}\right)(s,a,b)\right).
\end{align*}
Here, (\ref{eqn:q-chain-3}) follows from the fact that $\overline{V}_{h+1}^{\reff,u}(s)$ is non-increasing in $u$ (since $\overline{V}_h^\reff(s)$ for a pair $(s,h)$ can only be updated once and the updated value is obviously smaller than the initial value $H$), and (\ref{eqn:q-chain-4}) follows from the the induction hypothesis $\overline{V}_{h+1}^k(s)\geq V_{h+1}^*(s)$.

By Lemma~\ref{lemma:supp:variance-1} with $\epsilon=\frac{1}{T^2}$, with probability at least $1-2(H^2T^3+1)\delta$ it holds
\begin{align}
&|\chi_1(k,h)|\leq 2\sqrt{\frac{\sum_{i=1}^n \mathbb{V}(P_{s,a,b,h},\overline{V}_{h+1}^{\reff,\ell_i})\iota}{n^2}}+\frac{2\sqrt{\iota}}{Tn}+\frac{2H\iota}{n}, \label{eqn:q-chain-5} \\
&|\chi_2(k,h)|\leq 2\sqrt{\frac{\sum_{i=1}^{\check{n}} \mathbb{V}(P_{s,a,b,h},\overline{V}_{h+1}^{\reff,\ell_i})\iota}{\check{n}^2}}+\frac{2\sqrt{\iota}}{T\check{n}}+\frac{2H\iota}{\check{n}}. \label{eqn:q-chain-6}
\end{align}

\begin{lemma}\label{app-lemma:q-chain:1}
With probability at least $1-2\delta$, it holds that
\begin{align}
\sum_{i=1}^n \mathbb{V}(P_{s,a,b,h},\overline{V}_{h+1}^{\reff,\ell_i})\leq n\overline{\nu}^\reff+3H^2\sqrt{n\iota} .   \label{eqn:q-chain-7}
\end{align}
\end{lemma}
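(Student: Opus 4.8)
The plan is to prove this as a concentration inequality comparing the empirical quantity $\overline{\nu}^\reff=\tfrac{\overline{\sigma}^\reff}{n}-(\tfrac{\overline{\mu}^\reff}{n})^2$ to the sum of the true one-step conditional variances of the reference value functions, via two applications of Azuma--Hoeffding plus a Cauchy--Schwarz step. Fix the tuple $(s,a,b,h)$ and abbreviate $P=P_{s,a,b,h}$. Let $\ell_1<\cdots<\ell_n$ be the episodes (up to the one under consideration) in which $(s,a,b)$ is visited at step $h$, set $f_i:=\overline{V}_{h+1}^{\reff,\ell_i}$ (the reference value function as it stands at the start of episode $\ell_i$) and $X_i:=f_i(s_{h+1}^{\ell_i})$. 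With respect to the filtration $\{\mathcal{F}_i\}$ in which all of episode $\ell_i$ up to and including step $h$ is revealed before the transition $s_{h+1}^{\ell_i}$, each $f_i$ is $\mathcal{F}_{i-1}$-measurable and takes values in $[0,H]$ (the reference value is initialized at $H$ and only ever overwritten by an optimistic value in $[0,H]$), so that $\Eb[X_i\mid\mathcal{F}_{i-1}]=(Pf_i)$ and $\Eb[X_i^2\mid\mathcal{F}_{i-1}]=P(f_i^2)$. From the accumulator updates (\ref{eqn:accu-4})--(\ref{eqn:accu-5}) we have $n\overline{\nu}^\reff=\sum_{i=1}^n X_i^2-\tfrac1n\big(\sum_{i=1}^n X_i\big)^2$, whereas by definition $\sum_{i=1}^n\mathbb{V}(P,f_i)=\sum_{i=1}^n P(f_i^2)-\sum_{i=1}^n (Pf_i)^2$.

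Next I would apply Azuma--Hoeffding to the two martingale difference sequences $X_i^2-P(f_i^2)$ (conditional range $H^2$) and $X_i-(Pf_i)$ (conditional range $H$): on an event of probability at least $1-2\delta$ one gets simultaneously $\sum_{i=1}^n P(f_i^2)\le\sum_{i=1}^n X_i^2+cH^2\sqrt{n\iota}$ (lower tail of the first sequence) and $\sum_{i=1}^n X_i\le\sum_{i=1}^n (Pf_i)+cH\sqrt{n\iota}$ (upper tail of the second sequence), for a small universal constant $c$. If one wants these bounds to hold simultaneously for every value of the random count $n\le K$, the extra $\log K$ is absorbed into $\iota$; alternatively this is subsumed by the union bound already taken in Lemma~\ref{app-prop:Q-chain}.

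Then I would convert the empirical square $\tfrac1n(\sum_i X_i)^2=n\bar X^2$ (writing $\bar X=\tfrac1n\sum_i X_i$ and $\overline{Pf}=\tfrac1n\sum_i (Pf_i)$) to the population quantity $\sum_i (Pf_i)^2$ by a two-case argument: if $\bar X\le\overline{Pf}$ then $n\bar X^2\le n(\overline{Pf})^2$ directly, and otherwise $n\bar X^2-n(\overline{Pf})^2=n(\bar X-\overline{Pf})(\bar X+\overline{Pf})\le 2Hn(\bar X-\overline{Pf})=2H\sum_i(X_i-Pf_i)\le 2cH^2\sqrt{n\iota}$ by the second concentration bound; in both cases $\tfrac1n(\sum_i X_i)^2\le n(\overline{Pf})^2+2cH^2\sqrt{n\iota}\le\sum_i (Pf_i)^2+2cH^2\sqrt{n\iota}$, the last step being the power-mean inequality $n(\overline{Pf})^2\le\sum_i (Pf_i)^2$. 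Combining this with the first concentration bound gives $n\overline{\nu}^\reff=\sum_i X_i^2-\tfrac1n(\sum_i X_i)^2\ge\sum_i P(f_i^2)-\sum_i (Pf_i)^2-3cH^2\sqrt{n\iota}=\sum_i\mathbb{V}(P,f_i)-3cH^2\sqrt{n\iota}$, which rearranges to the claimed bound after absorbing the universal constant (the displayed form corresponds to choosing the Azuma constants so that $3c\le3$).

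The step I expect to require the most care is precisely this passage from the empirical to the population first moment, because — unlike in the single-agent reference-advantage analysis — the reference functions $f_i$ genuinely vary with $i$ (different next states have their reference value frozen at different episodes), so $\overline{\nu}^\reff$ is a ``mixed'' empirical variance rather than the empirical variance of one fixed function. One must keep track of the direction of every inequality so that the concentration slack is always added and never subtracted, and the power-mean inequality $n(\overline{Pf})^2\le\sum_i (Pf_i)^2$ is exactly what makes the comparison go through in the correct direction; everything else is routine bookkeeping with Azuma--Hoeffding and the uniform boundedness of the reference values in $[0,H]$.
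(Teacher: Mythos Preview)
Your proposal is correct and is essentially the same argument the paper gives: two Azuma--Hoeffding applications (one on $X_i^2-P(f_i^2)$ and one on $X_i-Pf_i$), a difference-of-squares step to pass from $(\tfrac1n\sum_i X_i)^2$ to $(\tfrac1n\sum_i Pf_i)^2$, and Cauchy--Schwarz/power-mean to compare $n(\overline{Pf})^2$ with $\sum_i(Pf_i)^2$. The paper packages these three pieces as $\chi_3,\chi_4,\chi_5$ (bounding $|\chi_4|$ two-sidedly rather than via your case split, and writing $\chi_5\le0$ for the Cauchy--Schwarz step), but the content is identical.
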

\begin{Proof}
Note that
\begin{align}
\sum_{i=1}^n \mathbb{V}(P_{s,a,b,h},\overline{V}_{h+1}^{\reff,\ell_i})&=\sum_{i=1}^n\left(P_{s,a,b,h}(\overline{V}_{h+1}^{\reff,\ell_i})^2-(P_{s,a,b,h}\overline{V}_{h+1}^{\reff,\ell_i})^2\right) \nonumber  \\
&=\sum_{i=1}^n (\overline{V}_{h+1}^{\reff,\ell_i}(s_{h+1}^{\ell_i}))^2-\frac{1}{n}\left(\sum_{i=1}^{n}\overline{V}_{h+1}^{\reff,\ell_i}(s_{h+1}^{\ell_i})\right)^2+\chi_3+\chi_4+\chi_5 \nonumber \\
&=n\overline{\nu}^\reff+\chi_3+\chi_4+\chi_5,  \label{eqn:q-chain-6:+1}
\end{align}
where
\begin{align*}
    &\chi_3=\sum_{i=1}^n\left((P_{s,a,b,h}(\overline{V}_{h+1}^{\reff,\ell_i})^2-(\overline{V}_{h+1}^{\reff,\ell_i}(s_{h+1}^{\ell_i}))^2\right), \\
    &\chi_4=\frac{1}{n}\left(\sum_{i=1}^n \overline{V}_{h+1}^{\reff,\ell_i}(s_{h+1}^{\ell_i})\right)^2-\frac{1}{n} \left(\sum_{i=1}^n P_{s,a,b,h} \overline{V}_{h+1}^{\reff,\ell_i}\right)^2 ,\\
    &\chi_5=\frac{1}{n}\left(\sum_{i=1}^n P_{s,a,b,h} \overline{V}_{h+1}^{\reff,\ell_i}\right)^2-\sum_{i=1}^n(P_{s,a,b,h}\overline{V}_{h+1}^{\reff,\ell_i})^2    .
\end{align*}
By Azuma's inequality, with probability at least $1-\delta$ it holds that $|\chi_3|\leq H^2\sqrt{2n\iota}$.

By Azuma's inequality, with probability at least $1-\delta$, it holds that
\begin{align*}
    |\chi_4|&=\frac{1}{n}\left|\left(\sum_{i=1}^n \overline{V}_{h+1}^{\reff,\ell_i}(s_{h+1}^{\ell_i})\right)^2-\left(\sum_{i=1}^n P_{s,a,b,h} \overline{V}_{h+1}^{\reff,\ell_i}\right)^2\right| \\
    &\leq 2H \left|\sum_{i=1}^n \overline{V}_{h+1}^{\reff,\ell_i}(s_{h+1}^{\ell_i})-\sum_{i=1}^n P_{s,a,b,h} \overline{V}_{h+1}^{\reff,\ell_i}\right| \\
    &\leq 2H^2\sqrt{2n\iota}.
\end{align*}
Moreover, $\chi_5\leq 0$ by Cauchy-Schwartz inequality. Plugging the above inequalities gives the desired result. 
\end{Proof}
Combining (\ref{eqn:q-chain-5}) with (\ref{eqn:q-chain-7}) gives
\begin{align}
|\chi_1|\leq 2\sqrt{\frac{\underline{\nu}^\reff \iota}{n}}+\frac{5H\iota^{\frac{3}{4}}}{n^{\frac{3}{4}}}+\frac{2\sqrt{\iota}}{Tn}+\frac{2H\iota}{n}. \label{eqn:q-chain-8}
\end{align}

Similar to Lemma~\ref{app-lemma:q-chain:1}, we have the following lemma.
\begin{lemma}\label{app-lemma:q-chain:2}
With probability at least $1-2\delta$, it holds that
\begin{align}
\sum_{i=1}^{\check{n}} \mathbb{V}(P_{s,a,b,h},\overline{W}_{h+1}^{\reff,\ell_i})\leq \check{n}\check{\overline{\nu}}+3H^2\sqrt{\check{n}\iota}.    \label{eqn:q-chain-9}
\end{align}
\end{lemma}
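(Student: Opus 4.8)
The plan is to mirror the proof of Lemma~\ref{app-lemma:q-chain:1} line by line, replacing the (essentially frozen) global reference value $\overline{V}_{h+1}^{\reff,\ell_i}$ and the global indices $\ell_1,\dots,\ell_n$ by the intra-stage advantage $\overline{W}_{h+1}^{\check{\ell}_i}=\overline{V}_{h+1}^{\check{\ell}_i}-\overline{V}_{h+1}^{\reff,\check{\ell}_i}$ and the indices $\check{\ell}_1,\dots,\check{\ell}_{\check{n}}$ of the visits in the stage under consideration. First I would write $\mathbb{V}(P_{s,a,b,h},\overline{W}_{h+1}^{\check{\ell}_i})=P_{s,a,b,h}(\overline{W}_{h+1}^{\check{\ell}_i})^2-(P_{s,a,b,h}\overline{W}_{h+1}^{\check{\ell}_i})^2$, sum over $i\in[\check{n}]$, and recall from the accumulator updates (\ref{eqn:accu-2})--(\ref{eqn:accu-3}) that $\check{\overline{\sigma}}=\sum_{i=1}^{\check{n}}(\overline{W}_{h+1}^{\check{\ell}_i}(s_{h+1}^{\check{\ell}_i}))^2$ and $\check{\overline{\mu}}=\sum_{i=1}^{\check{n}}\overline{W}_{h+1}^{\check{\ell}_i}(s_{h+1}^{\check{\ell}_i})$, so that $\check{n}\check{\overline{\nu}}=\check{\overline{\sigma}}-(\check{\overline{\mu}})^2/\check{n}$. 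Exactly as in (\ref{eqn:q-chain-6:+1}), this yields the decomposition
\[
\sum_{i=1}^{\check{n}}\mathbb{V}(P_{s,a,b,h},\overline{W}_{h+1}^{\check{\ell}_i})=\check{n}\check{\overline{\nu}}+\chi_3'+\chi_4'+\chi_5',
\]
where $\chi_3',\chi_4',\chi_5'$ are the analogues of $\chi_3,\chi_4,\chi_5$ with $\overline{V}_{h+1}^{\reff,\ell_i}$ replaced by $\overline{W}_{h+1}^{\check{\ell}_i}$; in particular $\chi_5'\le 0$ by Cauchy--Schwarz.

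The remaining work is to bound $\chi_3'$ and $\chi_4'$ by Azuma's inequality. The two facts needed are that $|\overline{W}_{h+1}^{\check{\ell}_i}|\le H$ (since $\overline{V}_{h+1}$ and $\overline{V}_{h+1}^{\reff}$ both lie in $[0,H]$) and that $\overline{W}_{h+1}^{\check{\ell}_i}$ is determined strictly before the transition $s_{h+1}^{\check{\ell}_i}\sim P_h(\cdot\mid s,a,b)$ of visit $\check{\ell}_i$ is drawn (the stage-based rule changes $\overline{V}_{h+1}$ only at stage boundaries, and $\overline{V}_{h+1}^{\reff}$ at most once overall). Hence $\{(\overline{W}_{h+1}^{\check{\ell}_i}(s_{h+1}^{\check{\ell}_i}))^2-P_{s,a,b,h}(\overline{W}_{h+1}^{\check{\ell}_i})^2\}_i$ and $\{\overline{W}_{h+1}^{\check{\ell}_i}(s_{h+1}^{\check{\ell}_i})-P_{s,a,b,h}\overline{W}_{h+1}^{\check{\ell}_i}\}_i$ are martingale difference sequences with increments of order $H^2$ and $H$, respectively, so Azuma gives $|\chi_3'|\le H^2\sqrt{2\check{n}\iota}$ with probability at least $1-\delta$; for $\chi_4'$ I would factor the difference of squares, bound the ``sum'' factor by $2H\check{n}$, and apply Azuma to the remaining factor to obtain $|\chi_4'|=O(H^2\sqrt{\check{n}\iota})$ with probability at least $1-\delta$. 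Adding the three estimates and absorbing absolute constants yields $\chi_3'+\chi_4'+\chi_5'\le 3H^2\sqrt{\check{n}\iota}$, which is the claimed bound, on an event of probability at least $1-2\delta$.

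The only point requiring care -- and it is handled exactly as in Lemma~\ref{app-lemma:q-chain:1} -- is that $\check{n}$ is a random, stage-dependent count, so the two Azuma estimates must hold simultaneously for every realizable value of $\check{n}$ and every tuple $(s,a,b,h)$ and episode $k$; this is subsumed by the standard reduction of replacing $\delta$ by $\delta/\mathrm{poly}(H,T)$ at the start of the proof of Theorem~\ref{thm:main}, followed by a union bound. I do not anticipate any genuine obstacle: the CCE oracle and the min-gap decomposition play no role in this lemma, which is a mechanical re-run of its single-agent counterpart with the bounded, visit-frozen quantity $\overline{W}_{h+1}^{\check{\ell}_i}$ in place of $\overline{V}_{h+1}^{\reff,\ell_i}$.
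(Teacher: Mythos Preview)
Your proposal is correct and follows exactly the approach the paper intends: the paper does not spell out a proof of Lemma~\ref{app-lemma:q-chain:2} but simply says ``Similar to Lemma~\ref{app-lemma:q-chain:1},'' and your line-by-line translation (replacing $\overline{V}_{h+1}^{\reff,\ell_i}$, $n$, $\ell_i$ by $\overline{W}_{h+1}^{\check{\ell}_i}$, $\check{n}$, $\check{\ell}_i$, then bounding the analogues of $\chi_3,\chi_4,\chi_5$ via Azuma, Azuma, and Cauchy--Schwarz) is precisely that. One small presentational remark: your predictability justification (``the stage-based rule changes $\overline{V}_{h+1}$ only at stage boundaries'') is not quite the right reason, since the stages of $(s,a,b,h)$ and those of tuples at step $h{+}1$ need not align; the correct point is simply that $\overline{W}_{h+1}^{\check{\ell}_i}=\overline{V}_{h+1}^{\check{\ell}_i}-\overline{V}_{h+1}^{\reff,\check{\ell}_i}$ is, by the superscript convention, the value at the \emph{beginning} of episode $\check{\ell}_i$ and hence measurable before the transition $s_{h+1}^{\check{\ell}_i}$ is drawn---this is all Azuma needs.
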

Combining (\ref{eqn:q-chain-6}) with (\ref{eqn:q-chain-9}) gives
\begin{align}
|\chi_2|\leq 2\sqrt{\frac{\check{\overline{\nu}}\iota}{\check{n}}}+\frac{5H\iota^{\frac{3}{4}}}{\check{n}^{\frac{3}{4}}}+\frac{2\sqrt{\iota}}{T\check{n}}+\frac{2H\iota}{\check{n}}. \label{eqn:q-chain-10}
\end{align}

Finally, combining (\ref{eqn:q-chain-8}) and (\ref{eqn:q-chain-10}), noting the definition of $\overline{\beta}$ with $(c_1,c_2,c_3)=(2,2,5)$, and taking a union bound over all probability events, we have that with probability at least $1-2(H^2T^3+3)\delta$, it holds that
\begin{align}
\overline{\beta}\geq |\chi_1|+|\chi_2|. \label{eqn:q-chain-9:+1}
\end{align}
which means $\overline{Q}_h^{k+1}(s,a,b)\geq Q_h^*(s,a,b)$. 

Combining the two cases and taking the union bound over all steps, we have with probability at least $1-T(2H^2T^3+7)\delta$, it holds that $\overline{Q}_h^{k+1}(s,a,b)\geq Q_h^*(s,a,b)$.

Next, we show that $V_h^*(s)\leq \overline{V}_h^{k+1}(s)$.
Note that
\begin{align}
\overline{V}_h^{k+1}(s)&= (\Db_{\pi_h^{k+1}}\overline{Q}_h^{k+1})(s) \nonumber \\
&\geq \sup_{\mu\in\Delta_{\Ac}}(\Db_{\mu\times\nu_h^{k+1}}\overline{Q}_h^{k+1})(s) \label{eqn:q-chain-11} \\
&\geq \sup_{\mu\in\Delta_{\Ac}}(\Db_{\mu\times\nu_h^{k+1}}Q_h^*)(s) \label{eqn:q-chain-12} \\
&\geq \sup_{\mu\in\Delta_{\Ac}}\inf_{\nu\in\Delta_\Bc}(\Db_{\mu\times\nu}Q_h^*)(s) \nonumber \\
&=V_h^*(s), \nonumber
\end{align}
where (\ref{eqn:q-chain-11}) follows from the property of the CCE oracle, (\ref{eqn:q-chain-12}) follows because $\overline{Q}_h^{k+1}(s,a,b)\geq \overline{Q}_h^*(s,a,b)$, which has just been proved. 

{\bf Step ii:} We show the inequalities for the pessimistic action-value function and value function below.

The two inequalities with respect to pessimistic (action-)value functions clearly hold for $k=1$. For $k\geq 2$, suppose $Q_h^*(s,a,b)\geq \underline{Q}_h^u(s,a,b)$ and $V_h^*(s)\geq \underline{V}_h^u(s)$ for any $(s,a,h)\in\Sc\times\Ac\times[H]$ and $u\in[1,k]$. Now we fix tuple $(s,a,b,h)$ and we only need to consider the case when $\underline{Q}_h(s,a,b)$ and $\underline{V}_h(s)$ are updated. 

We show $Q_h^*(s,a,b)\geq \underline{Q}_h^{k+1}(s,a,b)$. Note that 
\begin{align*}
&\underline{Q}_h^{k+1}(s,a,b)\leftarrow \min\left\{r_h(s,a,b)+\frac{\check{\underline{v}}}{\check{n}}+\gamma,r_h(s,a,b)+\frac{\underline{\mu}^{\reff}}{n}+\frac{\check{\underline{\mu}}}{\check{n}}+\underline{\beta},\underline{Q}_h^k(s,a,b)\right\},
\end{align*}
and we have two non-trivial cases. 

{\bf For the first case}, by Hoeffding's inequality, with probability at least $1-\delta$, it holds that
\begin{align}
\underline{Q}_h^{k+1}(s,a,b)&=r_h(s,a,b)+\frac{\check{\underline{v}}}{\check{n}}-\gamma \nonumber \\
&= r_h(s,a,b)+\frac{1}{\check{n}}\sum_{i=1}^{\check{n}}\underline{V}_{h+1}^{\check{\ell}_i}(s_{h+1}^{\check{\ell}_i}) -2\sqrt{\frac{H^2}{\check{n}}\iota}  \nonumber \\
&\leq r_h(s,a,b)+\frac{1}{\check{n}}\sum_{i=1}^{\check{n}}V_{h+1}^*(s_{h+1}^{\check{\ell}_i})-2\sqrt{\frac{H^2}{\check{n}}\iota} \label{eqn:q-chain-1:pess} \\
&\leq r_h(s,a,b)+(P_hV_{h+1}^*)(s,a,b) \label{eqn:q-chain-2:pess} \\
&=Q_h^*(s,a,b), \nonumber
\end{align}
where (\ref{eqn:q-chain-1:pess}) follows from the induction hypothesis $\underline{V}_{h+1}^u(s)\geq V^*(s)$ for all $u\in[k]$, and (\ref{eqn:q-chain-2:pess}) follows from Azuma-Hoeffding's inequality.

{\bf For the second case}, we have
\begin{align}
&\underline{Q}_h^{k+1}(s,a,b)=r_h(s,a,b)+\frac{\underline{\mu}^{\reff}}{n}+\frac{\check{\underline{\mu}}}{\check{n}}-\underline{\beta} \nonumber \\
&=r_h(s,a,b)+\frac{1}{n}\sum_{i=1}^{n}\underline{V}_{h+1}^{\reff,\ell_i}(s_{h+1}^{\ell_i})+\frac{1}{\check{n}}\sum_{i=1}^{\check{n}}\left(\underline{V}_{h+1}^{\check{\ell}_i}-\underline{V}_{h+1}^{\reff,\check{\ell}_i}\right)(s_{h+1}^{\check{\ell}_i})-\underline{\beta} \nonumber \\
&=r_h(s,a,b)+\left(P_h\left(\frac{1}{n}\sum_{i=1}^{n}\underline{V}_{h+1}^{\reff,\ell_i}\right)\right)(s,a,b)+\left(P_h\left(\frac{1}{\check{n}}\sum_{i=1}^{\check{n}}\left(\underline{V}_{h+1}^{\check{\ell}_i}-\underline{V}_{h+1}^{\reff,\check{\ell}_i}\right)\right)\right)(s,a,b) \nonumber \\
&\qquad \quad +\underline{\chi}_1+\underline{\chi}_2-\underline{\beta}  \nonumber \\
&\leq r_h(s,a,b)+\left(P_h\left(\frac{1}{\check{n}}\sum_{i=1}^{\check{n}}\underline{V}_{h+1}^{\check{\ell}_i}\right)\right)(s,a,b)+\underline{\chi}_1+\underline{\chi}_2-\underline{\beta} \label{eqn:q-chain-3:pess} \\
&\leq r_h(s,a,b)+\left(P_h V_{h+1}^*\right)(s,a,b)+\underline{\chi}_1+\underline{\chi}_2-\underline{\beta} \label{eqn:q-chain-4:pess} \\ 
&= \underline{Q}_h^*(s,a,b)+\underline{\chi}_1+\underline{\chi}_2-\underline{\beta}, \nonumber
\end{align}
where
\begin{align*}
&\underline{\chi}_1(k,h)=\frac{1}{n}\sum_{i=1}^{n}\left(\underline{V}_{h}^{\reff,\ell_i}(s_{h+1}^{\ell_i})-\left(P_{h}\underline{V}_{h+1}^{\reff,\ell_i}\right)(s,a,b)\right), \\
&\underline{W}_{h+1}^\ell=\underline{V}_{h+1}^\ell-\underline{V}_{h+1}^{\reff,\ell} \\
&\underline{\chi}_2(k,h)=\frac{1}{\check{n}}\sum_{i=1}^{\check{n}}\left(\underline{W}_{h+1}^{\check{\ell}_i}(s_{h+1}^{\check{\ell}_i})-\left(P_{h}\underline{W}_{h+1}^{\check{\ell}_i}\right)(s,a,b)\right).
\end{align*}
Here, (\ref{eqn:q-chain-3:pess}) follows from the fact that $\underline{V}_{h+1}^{\reff,u}(s)$ is non-decreasing in $u$ (since $\underline{V}_h^\reff(s)$ for a pair $(s,h)$ can only be updated once and the updated value is obviously greater than the initial value $0$), and (\ref{eqn:q-chain-4:pess}) follows from the induction hypothesis $\underline{V}_{h+1}^k(s)\leq V_{h+1}^*(s)$.

By Lemma~\ref{lemma:supp:variance-1} with $\epsilon=\frac{1}{T^2}$, with probability at least $1-2(H^2T^3+1)\delta$ it holds
\begin{align}
&|\underline{\chi}_1(k,h)|\leq 2\sqrt{\frac{\sum_{i=1}^n \mathbb{V}(P_{s,a,b,h},\underline{V}_{h+1}^{\reff,\ell_i})\iota}{n^2}}+\frac{2\sqrt{\iota}}{Tn}+\frac{2H\iota}{n}, \label{eqn:q-chain-5:pess} \\
&|\underline{\chi}_2(k,h)|\leq 2\sqrt{\frac{\sum_{i=1}^{\check{n}} \mathbb{V}(P_{s,a,b,h},\underline{V}_{h+1}^{\reff,\ell_i})\iota}{\check{n}^2}}+\frac{2\sqrt{\iota}}{T\check{n}}+\frac{2H\iota}{\check{n}}. \label{eqn:q-chain-6:pess}
\end{align}

\begin{lemma}\label{app-lemma:q-chain:1:pess}
With probability at least $1-2\delta$, it holds that
\begin{align}
\sum_{i=1}^n \mathbb{V}(P_{s,a,b,h},\underline{V}_{h+1}^{\reff,\ell_i})\leq n\underline{\nu}^\reff+3H^2\sqrt{n\iota}    \label{eqn:q-chain-7:pess}
\end{align}
\end{lemma}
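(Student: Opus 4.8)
The plan is to follow the proof of Lemma~\ref{app-lemma:q-chain:1} essentially verbatim, with every occurrence of the optimistic reference $\overline{V}_{h+1}^{\reff,\ell_i}$ replaced by the pessimistic reference $\underline{V}_{h+1}^{\reff,\ell_i}$; both take values in $[0,H]$ and the reference value used at episode $\ell_i$ is frozen at the start of that episode, so every concentration step goes through unchanged. First I would expand the variance via $\mathbb{V}(x,y)=x^\top(y^2)-(x^\top y)^2$, writing $\mathbb{V}(P_{s,a,b,h},\underline{V}_{h+1}^{\reff,\ell_i})=P_{s,a,b,h}(\underline{V}_{h+1}^{\reff,\ell_i})^2-(P_{s,a,b,h}\underline{V}_{h+1}^{\reff,\ell_i})^2$, and then add and subtract the empirical second moment $\sum_{i=1}^n(\underline{V}_{h+1}^{\reff,\ell_i}(s_{h+1}^{\ell_i}))^2$ and the empirical squared mean $\frac1n\big(\sum_{i=1}^n\underline{V}_{h+1}^{\reff,\ell_i}(s_{h+1}^{\ell_i})\big)^2$. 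By the accumulator updates~(\ref{eqn:accu-4})--(\ref{eqn:accu-5}) these are exactly $\underline{\sigma}^\reff$ and $\frac1n(\underline{\mu}^\reff)^2$, so their difference is $n\,\underline{\nu}^\reff$, yielding the identity
\begin{align*}
\sum_{i=1}^n \mathbb{V}(P_{s,a,b,h},\underline{V}_{h+1}^{\reff,\ell_i})=n\,\underline{\nu}^\reff+\underline{\chi}_3+\underline{\chi}_4+\underline{\chi}_5,
\end{align*}
where $\underline{\chi}_3=\sum_{i=1}^n\big(P_{s,a,b,h}(\underline{V}_{h+1}^{\reff,\ell_i})^2-(\underline{V}_{h+1}^{\reff,\ell_i}(s_{h+1}^{\ell_i}))^2\big)$, $\underline{\chi}_4=\frac1n\big(\sum_i\underline{V}_{h+1}^{\reff,\ell_i}(s_{h+1}^{\ell_i})\big)^2-\frac1n\big(\sum_i P_{s,a,b,h}\underline{V}_{h+1}^{\reff,\ell_i}\big)^2$, and $\underline{\chi}_5=\frac1n\big(\sum_i P_{s,a,b,h}\underline{V}_{h+1}^{\reff,\ell_i}\big)^2-\sum_i(P_{s,a,b,h}\underline{V}_{h+1}^{\reff,\ell_i})^2$ are the pessimistic analogues of $\chi_3,\chi_4,\chi_5$.

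It then remains to control the three error terms. The summand of $\underline{\chi}_3$ is a bounded-by-$H^2$ martingale difference (conditioned on the history at the start of episode $\ell_i$, the reference is fixed and $s_{h+1}^{\ell_i}\sim P_{s,a,b,h}$), so Azuma's inequality gives $|\underline{\chi}_3|\le H^2\sqrt{2n\iota}$ with probability $1-\delta$. For $\underline{\chi}_4$, applying Azuma to $\underline{V}_{h+1}^{\reff,\ell_i}(s_{h+1}^{\ell_i})-P_{s,a,b,h}\underline{V}_{h+1}^{\reff,\ell_i}$ gives $\big|\sum_i\underline{V}_{h+1}^{\reff,\ell_i}(s_{h+1}^{\ell_i})-\sum_i P_{s,a,b,h}\underline{V}_{h+1}^{\reff,\ell_i}\big|\le H\sqrt{2n\iota}$ with probability $1-\delta$, and since both partial sums lie in $[0,nH]$ the elementary inequality $|a^2-b^2|\le 2nH|a-b|$ yields $|\underline{\chi}_4|\le 2H^2\sqrt{2n\iota}$. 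Finally $\underline{\chi}_5\le 0$ by the Cauchy--Schwarz (equivalently Jensen) inequality $\frac1n\big(\sum_i P_{s,a,b,h}\underline{V}_{h+1}^{\reff,\ell_i}\big)^2\le\sum_i(P_{s,a,b,h}\underline{V}_{h+1}^{\reff,\ell_i})^2$. A union bound over the two Azuma events gives $\underline{\chi}_3+\underline{\chi}_4+\underline{\chi}_5\le 3H^2\sqrt{2n\iota}$, which I would fold into the stated $3H^2\sqrt{n\iota}$ by absorbing the $\sqrt2$ into the logarithmic factor $\iota$, giving the claimed bound with probability $1-2\delta$.

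I do not expect any genuine obstacle, since the argument merely transcribes Lemma~\ref{app-lemma:q-chain:1}. The single point deserving a line of care is that the upper summation limit $n=N_h^k(s,a,b)$ is a random visitation count rather than a deterministic index; the clean fix is to prove the bound for each fixed $n\in[K]$ and take an additional union bound over $n$ (absorbed, together with the union over $(s,a,b,h,k)$, into the $\mathrm{poly}(H,T)\delta$ bookkeeping of Theorem~\ref{thm:main}), exactly as in the optimistic version, or equivalently to invoke a maximal/stopping-time form of Azuma's inequality.
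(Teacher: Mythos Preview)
Your proposal is correct and follows essentially the same argument as the paper's own proof: the identical decomposition into $n\underline{\nu}^{\reff}+\underline{\chi}_3+\underline{\chi}_4+\underline{\chi}_5$, the same Azuma bounds on $\underline{\chi}_3$ and $\underline{\chi}_4$, and the same Cauchy--Schwarz argument for $\underline{\chi}_5\le 0$. Your remark about the random summation limit $n$ and the stray $\sqrt{2}$ factor are valid technical points that the paper glosses over but that do not affect the result at the level of constants absorbed into $\iota$.
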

\begin{Proof}
Note that
\begin{align}
\sum_{i=1}^n \mathbb{V}(P_{s,a,b,h},\underline{V}_{h+1}^{\reff,\ell_i})&=\sum_{i=1}^n\left(P_{s,a,b,h}(\underline{V}_{h+1}^{\reff,\ell_i})^2-(P_{s,a,b,h}\underline{V}_{h+1}^{\reff,\ell_i})^2\right) \nonumber \\
&=\sum_{i=1}^n (\underline{V}_{h+1}^{\reff,\ell_i}(s_{h+1}^{\ell_i}))^2-\frac{1}{n}\left(\sum_{i=1}^{n}\underline{V}_{h+1}^{\reff,\ell_i}(s_{h+1}^{\ell_i})\right)^2+\underline{\chi}_3+\underline{\chi}_4+\underline{\chi}_5 \nonumber \\
&=n\underline{\nu}^\reff+\underline{\chi}_3+\underline{\chi}_4+\underline{\chi}_5, \label{eqn:q-chain-7:pess:+1}
\end{align}
where
\begin{align*}
    &\underline{\chi}_3=\sum_{i=1}^n\left((P_{s,a,b,h}(\underline{V}_{h+1}^{\reff,\ell_i})^2-(\underline{V}_{h+1}^{\reff,\ell_i}(s_{h+1}^{\ell_i}))^2\right), \\
    &\underline{\chi}_4=\frac{1}{n}\left(\sum_{i=1}^n \underline{V}_{h+1}^{\reff,\ell_i}(s_{h+1}^{\ell_i})\right)^2-\frac{1}{n} \left(\sum_{i=1}^n P_{s,a,b,h} \underline{V}_{h+1}^{\reff,\ell_i}\right)^2 ,\\
    &\underline{\chi}_5=\frac{1}{n}\left(\sum_{i=1}^n P_{s,a,b,h} \underline{V}_{h+1}^{\reff,\ell_i}\right)^2-\sum_{i=1}^n(P_{s,a,b,h}\underline{V}_{h+1}^{\reff,\ell_i})^2    .
\end{align*}
By Azuma's inequality, with probability at least $1-\delta$ it holds that $|\underline{\chi}_3|\leq H^2\sqrt{2n\iota}$.

By Azuma's inequality, with probability at least $1-\delta$, it holds that
\begin{align*}
    |\underline{\chi}_4|&=\frac{1}{n}\left|\left(\sum_{i=1}^n \underline{V}_{h+1}^{\reff,\ell_i}(s_{h+1}^{\ell_i})\right)^2-\left(\sum_{i=1}^n P_{s,a,b,h} \underline{V}_{h+1}^{\reff,\ell_i}\right)^2\right| \\
    &\leq 2H \left|\sum_{i=1}^n \underline{V}_{h+1}^{\reff,\ell_i}(s_{h+1}^{\ell_i})-\sum_{i=1}^n P_{s,a,b,h} \underline{V}_{h+1}^{\reff,\ell_i}\right| \\
    &\leq 2H^2\sqrt{2n\iota}.
\end{align*}
Moreover, $\chi_5\leq 0$ by Cauchy-Schwartz inequality. Substituting the above inequalities gives the desired result. 
\end{Proof}
Combining (\ref{eqn:q-chain-5:pess}) with (\ref{eqn:q-chain-7:pess}) gives
\begin{align}
|\underline{\chi}_1|\leq 2\sqrt{\frac{\underline{\nu}^\reff \iota}{n}}+\frac{5H\iota^{\frac{3}{4}}}{n^{\frac{3}{4}}}+\frac{2\sqrt{\iota}}{Tn}+\frac{2H\iota}{n}. \label{eqn:q-chain-8:pess}
\end{align}

Similar to Lemma~\ref{app-lemma:q-chain:1:pess}, we have the following lemma.
\begin{lemma}\label{app-lemma:q-chain:2:pess}
With probability at least $1-2\delta$, it holds that
\begin{align}
\sum_{i=1}^{\check{n}} \mathbb{V}(P_{s,a,b,h},\underline{W}_{h+1}^{\reff,\ell_i})\leq \check{n}\check{\underline{\nu}}+3H^2\sqrt{\check{n}\iota}.    \label{eqn:q-chain-9:pess}
\end{align}
\end{lemma}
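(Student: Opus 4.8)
The plan is to mirror the proof of Lemma~\ref{app-lemma:q-chain:1:pess} line by line, now with the advantage function $\underline{W}_{h+1}^{\check{\ell}_i}=\underline{V}_{h+1}^{\check{\ell}_i}-\underline{V}_{h+1}^{\reff,\check{\ell}_i}$ replacing the reference value function $\underline{V}_{h+1}^{\reff,\ell_i}$, the intra-stage count $\check{n}$ replacing the global count $n$, the intra-stage indices $\check{\ell}_i$ replacing $\ell_i$, and the intra-stage empirical variance $\check{\underline{\nu}}=\frac{\check{\underline{\sigma}}}{\check{n}}-\big(\frac{\check{\underline{\mu}}}{\check{n}}\big)^2$ replacing $\underline{\nu}^\reff$. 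First I would expand, exactly as in (\ref{eqn:q-chain-7:pess:+1}),
\begin{align*}
\sum_{i=1}^{\check{n}} \mathbb{V}(P_{s,a,b,h},\underline{W}_{h+1}^{\check{\ell}_i})
=\sum_{i=1}^{\check{n}}(\underline{W}_{h+1}^{\check{\ell}_i}(s_{h+1}^{\check{\ell}_i}))^2-\frac{1}{\check{n}}\Big(\sum_{i=1}^{\check{n}}\underline{W}_{h+1}^{\check{\ell}_i}(s_{h+1}^{\check{\ell}_i})\Big)^2+\underline{\chi}_3'+\underline{\chi}_4'+\underline{\chi}_5',
\end{align*}
where $\underline{\chi}_3',\underline{\chi}_4',\underline{\chi}_5'$ are the direct analogues of $\underline{\chi}_3,\underline{\chi}_4,\underline{\chi}_5$ from that proof. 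By the update rules of the intra-stage accumulators $\check{\underline{\mu}}_h$ and $\check{\underline{\sigma}}_h$ in (\ref{eqn:accu-2})--(\ref{eqn:accu-3}), the first two terms sum to exactly $\check{n}\,\check{\underline{\nu}}$.

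It then remains to control the three residual terms, and the point that keeps the constants identical to those in Lemma~\ref{app-lemma:q-chain:1:pess} is that $\underline{W}_{h+1}^\ell=\underline{V}_{h+1}^\ell-\underline{V}_{h+1}^{\reff,\ell}\in[-H,H]$, since both summands lie in $[0,H]$, and that $\underline{W}_{h+1}^{\check{\ell}_i}$ is measurable with respect to the history up through the action taken at step $h$ of episode $\check{\ell}_i$, so that $\underline{W}_{h+1}^{\check{\ell}_i}(s_{h+1}^{\check{\ell}_i})-P_{s,a,b,h}\underline{W}_{h+1}^{\check{\ell}_i}$ and $(\underline{W}_{h+1}^{\check{\ell}_i}(s_{h+1}^{\check{\ell}_i}))^2-P_{s,a,b,h}(\underline{W}_{h+1}^{\check{\ell}_i})^2$ are martingale differences bounded in magnitude by $H^2$. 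Azuma's inequality then gives $|\underline{\chi}_3'|\le H^2\sqrt{2\check{n}\iota}$ with probability at least $1-\delta$; factoring $\underline{\chi}_4'$ as a difference of squares, bounding the sum factor by $2\check{n}H$ and applying Azuma once more gives $|\underline{\chi}_4'|\le 2H^2\sqrt{2\check{n}\iota}$ with probability at least $1-\delta$; and $\underline{\chi}_5'\le 0$ by Cauchy--Schwarz, exactly as for $\chi_5$. Taking a union bound over these two events and substituting yields the claim, with the constant $3$ absorbing the $\sqrt 2$ factors in the usual way.

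Since every step is a transcription of an argument already performed for Lemma~\ref{app-lemma:q-chain:1:pess} (and for its optimistic counterpart Lemma~\ref{app-lemma:q-chain:2}), I do not expect any genuine obstacle. The only care needed is the filtration bookkeeping for the retrospectively-defined intra-stage indices $\check{\ell}_i$: one must confirm that they index episodes whose step-$h$ transitions into $s_{h+1}^{\check{\ell}_i}$ are fresh draws from $P_{s,a,b,h}$ conditioned on the relevant $\sigma$-algebra, which is precisely what legitimizes the two Azuma applications.
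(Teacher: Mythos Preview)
Your proposal is correct and follows exactly the approach the paper intends: the paper does not spell out a proof for this lemma but simply states it is obtained ``similar to Lemma~\ref{app-lemma:q-chain:1:pess},'' i.e., by the very substitution $\underline{V}_{h+1}^{\reff,\ell_i}\to\underline{W}_{h+1}^{\check{\ell}_i}$, $n\to\check{n}$, $\ell_i\to\check{\ell}_i$, $\underline{\nu}^{\reff}\to\check{\underline{\nu}}$ that you carry out. Your observations that $\underline{W}_{h+1}^{\ell}\in[-H,H]$ (so the squared quantities stay in $[0,H^2]$ and the sum factor in the $\underline{\chi}_4'$ bound is at most $2\check{n}H$) and that $\underline{W}_{h+1}^{\check{\ell}_i}$ is measurable before the transition at step $h$ of episode $\check{\ell}_i$ are precisely what make the two Azuma applications and the Cauchy--Schwarz step go through unchanged.
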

Combining (\ref{eqn:q-chain-6:pess}) with (\ref{eqn:q-chain-9:pess}) gives
\begin{align}
|\underline{\chi}_2|\leq 2\sqrt{\frac{\check{\underline{\nu}}\iota}{\check{n}}}+\frac{5H\iota^{\frac{3}{4}}}{\check{n}^{\frac{3}{4}}}+\frac{2\sqrt{\iota}}{T\check{n}}+\frac{2H\iota}{\check{n}}. \label{eqn:q-chain-10:pess}
\end{align}

Finally, combining (\ref{eqn:q-chain-8:pess}) and (\ref{eqn:q-chain-10:pess}), noting the definition of $\underline{\beta}$ with $(c_1,c_2,c_3)=(2,2,5)$, and taking a union bound over all probability events, we have that with probability at least $1-2(H^2T^3+3)\delta$, it holds that
\begin{align}
\underline{\beta}\geq |\underline{\chi}_1|+|\underline{\chi}_2|. \label{eqn:q-chain-9:+1:pess}
\end{align}
which gives $\underline{Q}_h^{k+1}(s,a,b)\leq Q_h^*(s,a,b)$.

Combining the two cases and taking union bound over all steps, we have with probability at least $1-T(2H^2T^3+7)\delta$, it holds that $\underline{Q}_h^{k+1}(s,a,b)\leq Q_h^*(s,a,b)$.

We show that $V_h^*(s)\leq \underline{V}_h^k(s)$.
Note that
\begin{align}
\underline{V}_h^{k+1}(s)&= (\Db_{\pi_h^{k+1}}\underline{Q}_h^{k+1})(s) \nonumber \\
&\leq \inf_{\nu\in\Delta_{\Bc}}(\Db_{\mu_h^{k+1}\times\nu}\underline{Q}_h^{k+1})(s) \label{eqn:q-chain-11:pess} \\
&\leq \inf_{\nu\in\Delta_{\Bc}}(\Db_{\mu_h^{k+1}\times\nu}Q_h^*)(s) \label{eqn:q-chain-12:pess} \\
&\leq \inf_{\nu\in\Delta_{\Bc}}\sup_{\mu\in\Delta_\Ac}(\Db_{\mu\times\nu}Q_h^*)(s) \nonumber \\
&=V_h^*(s), \nonumber
\end{align}
where (\ref{eqn:q-chain-11:pess}) follows from the property of the CCE oracle, (\ref{eqn:q-chain-12:pess}) follows because $\underline{Q}_h^{k+1}(s,a,b)\leq \underline{Q}_h^*(s,a,b)$, which has just been proved. 

The entire proof is completed by combining {\bf step i} and {\bf step ii}, and taking a union bound over all probability events. %\yl{change ``collecting all probabilities" into ``taking a union bound over all probability events"? If you agree, please make changes to all places with ``collecting probabilities"}

\section{Proof of Lemma~\ref{app-lemma:reference} (Step II)}\label{app:reference}
First, by Hoeffing's inequality, for any $(k,h)\in[K]\times[H]$, with probability at least $1-2T\delta$ it holds that
\begin{align*}
&\left|\frac{1}{\check{n}_h^k}\sum_{i=1}^{\check{n}_h^k}\overline{V}_{h+1}^{\check{\ell}_i}(s_{h+1}^{\check{\ell}_i})-\overline{Q}_h^k(s_h^k,a_h^k,b_h^k)\right|\leq \gamma_h^k, 
&\left|\frac{1}{\check{n}_h^k}\sum_{i=1}^{\check{n}_h^k}\underline{V}_{h+1}^{\check{\ell}_i}(s_{h+1}^{\check{\ell}_i})-\underline{Q}_h^k(s_h^k,a_h^k,b_h^k)\right|\leq \gamma_h^k.
\end{align*}
The entire proof will be conditioned on the above event.

For any weight sequence $\{w_k\}_{k=1}^K$ where $w_k\geq 0$, let $\norm{w}_\infty=\max_{1\leq k\leq K} w_k$ and $\norm{w}_1=\sum_{k=1}^K w_k$.

By the update rule of the action-value function, we have
\begin{align}
\Delta_h^k&=(\overline{V}_h^k-\underline{V}_h^k)(s_h^k) \nonumber \\
&=\zeta_h^k+(\overline{Q}_h^k-\underline{Q}_h^k)(s_h^k,a_h^k,b_h^k) \nonumber \\
&\leq \zeta_h^k +2\gamma_h^k+\frac{1}{\check{n}_h^k}\sum_{i=1}^{\check{n}_h^k}(\overline{V}_{h+1}^{\check{\ell}_i}-\underline{V}_{h+1}^{\check{\ell}_i})(s_{h+1}^{\check{\ell}_i})+H\mathbf{1}\{n_h^k=0\} \nonumber \\
&=\zeta_h^k +2\gamma_h^k +\frac{1}{\check{n}_h^k}\sum_{i=1}^{\check{n}_h^k}\Delta_{h+1}^{\check{\ell}_i}+H\mathbf{1}\{n_h^k=0\}. \label{eqn:refer-1}
\end{align}

Note that
\begin{align}
\sum_{k=1}^{K}\frac{w_k}{\check{n}_h^k}\sum_{i=1}^{\check{n}_h^k}\Delta_{h+1}^{\check{\ell}_i} &=\sum_{j=1}^{K}\frac{w_j}{\check{n}_h^j}\sum_{i=1}^{\check{n}_h^j}\Delta_{h+1}^{\check{\ell}_{h,i}^j} \nonumber \\
&=\sum_{j=1}^K\frac{w_j}{\check{n}_h^j}\sum_{k=1}^K\Delta_{h+1}^k\sum_{i=1}^{\check{n}_h^j}\mathbf{1}\{k=\check{\ell}_{h,i}^j\} \nonumber \\
&=\sum_{k=1}^K\Delta_{h+1}^k\sum_{j=1}^K\frac{w_j}{\check{n}_h^j}\sum_{i=1}^{\check{n}_h^j}\mathbf{1}\{k=\check{\ell}_{h,i}^j\} \nonumber \\
&=\sum_{k=1}^K \widetilde{w}_k \Delta_{h+1}^k, \label{eqn:refer-2}
\end{align}
where we define $\widetilde{w}_k=\sum_{j=1}^K\frac{w_j}{\check{n}_h^j}\sum_{i=1}^{\check{n}_h^j}\mathbf{1}\{k=\check{\ell}_{h,i}^j\}$. Similar to the proof of Lemma~\ref{app-lemma:sum-change}, we have
\begin{align}
&\norm{\widetilde{w}}_\infty=\max_k \widetilde{w}_k\leq (1+\frac{1}{H})\norm{w}_\infty. \label{eqn:refer-3}
\end{align}
Moreover,
\begin{align}
\norm{\widetilde{w}}_1=\sum_{k=1}^K\sum_{j=1}^K\frac{w_j}{\check{n}_h^j}\sum_{i=1}^{\check{n}_h^j}\mathbf{1}\{k=\check{\ell}_{h,i}^j\} 
=\sum_{j=1}^K\frac{w_j}{\check{n}_h^j}\sum_{k=1}^K\sum_{i=1}^{\check{n}_h^j}\mathbf{1}\{k=\check{\ell}_{h,i}^j\} 
=\sum_{j=1}^Kw_j =\norm{w}_1.  \label{eqn:refer-4}
\end{align}

Combining (\ref{eqn:refer-1}), (\ref{eqn:refer-2}), (\ref{eqn:refer-3}) and (\ref{eqn:refer-4}), we have
\begin{align}
\sum_{k=1}^K w_k\Delta_h^k &\leq \sum_{k=1}^Kw_k\zeta_h^k+2\sum_{k=1}^K w_k\gamma_h^k+\sum_{k=1}^K \widetilde{w}_k \Delta_{h+1}^k+H\sum_{k=1}^Kw_k\mathbf{1}\{n_h^k=0\} \nonumber \\
&\leq \sum_{k=1}^Kw_k\zeta_h^k+2\sum_{k=1}^K w_k\gamma_h^k+\sum_{k=1}^K \widetilde{w}_k \Delta_{h+1}^k+SABH^2\norm{w}_\infty. \label{eqn:refer-5}
\end{align}

By Azuma-Hoeffding's inequality, with probability at least $1-H\delta$, it holds that for any $h\in[H]$
\begin{align}
\sum_{k=1}^Kw_k\zeta_h^k&\leq \sqrt{2}H\iota\sqrt{\sum_{k=1}^K w_k} \leq \sqrt{2}H\iota\norm{w}_\infty. \label{eqn:refer-6}
\end{align}

We now bound the second term of (\ref{eqn:refer-5}). Define $\Xi(s,a,b,j)=\sum_{k=1}^K w_k\mathbf{1}\{\check{n}_h^k=e_j,(s_h^k,a_h^k,b_h^k)=(s,a,b)\}$ and $\Xi(s,a,b)=\sum_{j\geq 1}\Xi(s,a,b,j)$. Similar to (\ref{eqn:refer-3}) and (\ref{eqn:refer-4}), we then have $\Xi(s,a,b,j)\leq \norm{w}_\infty (1+\frac{1}{H})e_j$ and $\sum_{s,a}\Xi(s,a,b)=\sum_{k}w_k$. Then
\begin{align*}
\sum_{k}w_k\gamma_h^k&=\sum_k2\sqrt{H^2\iota} w_k\sqrt{\frac{1}{\check{n}_h^k}} \\
&=2\sqrt{H^2\iota}\sum_{s,a,b,j}\sqrt{\frac{1}{e_j}}\sum_{j=1}^Kw_k\mathbf{1}\{\check{n}_h^k=e_j,(s_h^k,a_h^k,b_h^k)=(s,a,b)\} \\
&=2\sqrt{H^2\iota}\sum_{s,a,b}\sum_{j\geq 1}\Xi(s,a,b,j)\sqrt{\frac{1}{e_j}}.
\end{align*}
Fix $(s,a,b)$ and consider $\sum_{j\geq 1}\Xi(s,a,b,j)\sqrt{\frac{1}{e_j}}$. Note that $\sqrt{\frac{1}{e_j}}$ is decreasing in $j$. Given $\sum_{j\geq 1}\Xi(s,a,b,j)=\Xi(s,a,b)$ is fixed, rearranging the inequality gives
\begin{align*}
\sum_{j\geq 1}\Xi(s,a,b,j)\sqrt{\frac{1}{e_j}}&\leq \sum_{j\geq 1}\sqrt{\frac{1}{e_j}}\norm{w}_\infty(1+\frac{1}{H})e_j \mathbf{1}\left\{\sum_{i=1}^{j-1}\norm{w}_\infty(1+\frac{1}{H})e_i\leq \Xi(s,a,b)\right\} \\
&=\norm{w}_\infty (1+\frac{1}{H})\sum_j \sqrt{e_j}\mathbf{1}\left\{\sum_{i=1}^{j-1}\norm{w}_\infty e_i\leq \Xi(s,a,b)\right\} \\
&\leq 10(1+\frac{1}{H})\sqrt{\norm{w}_\infty H \Xi(s,a,b)}.
\end{align*}
Therefore, by Cauchy-Schwartz inequality, we have
\begin{align}
\sum_{k=1}^K w_k\gamma_h^k&\leq 2\sqrt{H^2\iota}\sum_{s,a,b}10(1+\frac{1}{H})\sqrt{\norm{w}_\infty H}\sqrt{\Xi(s,a,b)} \nonumber \\
&\leq 20\sqrt{H^2\iota}(1+\frac{1}{H})\sqrt{\norm{w}_\infty SABH\norm{w}_1}. \label{eqn:refer-7}
\end{align}

Combining (\ref{eqn:refer-5}), (\ref{eqn:refer-6}) and (\ref{eqn:refer-7}), we have
\begin{align}
\sum_{k=1}^K w_k\Delta_h^k\leq \sum_{k=1}^K\widetilde{w}_k\Delta_{h+1}^k+(\sqrt{2}H\iota +SABH^2)\norm{w}_\infty+80H\sqrt{\norm{w}_\infty SABH\norm{w}_1\iota}. \label{eqn:refer-8}
\end{align}

We expand the expression by iterating over step $h+1,\cdots,H$, %in term in (\ref{eqn:refer-8}) %\yl{not clear what you mean here}
\begin{align*}
\sum_{k=1}^K w_k\Delta_h^k&\leq (1+\frac{1}{H})^H \cdot H\cdot \left((\sqrt{2}H\iota +SABH^2)\norm{w}_\infty+80H\sqrt{\norm{w}_\infty  SABH\norm{w}_1\iota}\right) \\
&\leq 6(H^2\iota+SABH^3)\norm{w}_\infty+240H^\frac{5}{2}\sqrt{\norm{w}_\infty  SAB\norm{w}_1\iota}.
\end{align*}

Now we set $w_k=\mathbf{1}\{\Delta_h^k\geq \epsilon\}$, and obtain
\begin{align*}
\sum_{k=1}^K \mathbf{1}\{\Delta_h^k\geq \epsilon\}\Delta_h^k&\leq 6(H^2\iota+SABH^3)\norm{w}_\infty+240H^\frac{5}{2}\sqrt{\norm{w}_\infty  SAB\iota\sum_{k=1}^K\mathbf{1}\{\Delta_h^k\geq \epsilon\}}.
\end{align*}
Note that $\norm{w}_\infty$ is either 0 or 1. If $\norm{w}_\infty=0$, the claim obviously holds. In the case when $\norm{w}_\infty=1$, solving the following quadratic equation (ignoring coefficients) with respect to $\left(\sum_{k=1}^K\mathbf{1}\{\Delta_h^k\geq \epsilon\}\right)^{1/2}$ gives the desired result
\begin{align*}
\epsilon \left(\sum_{k=1}^K\mathbf{1}\{\Delta_h^k\geq \epsilon\}\right)-H^{5/2}(SAB\iota)^{1/2}\left(\sum_{k=1}^K\mathbf{1}\{\Delta_h^k\geq \epsilon\}\right)^{1/2}-(SABH^3+H^2\iota)\leq 0.
\end{align*}

\section{Proof of Lemma~\ref{app-lemma:Lambda} (Step IV)}\label{app:Lambda}
The entire proof is conditioned on the successful events of Lemma~\ref{app-prop:Q-chain} and Lemma~\ref{app-lemma:reference}, which occur with probability at least $1-O(H^2T^4)\delta$.

By the definition of $\Lambda_{h+1}^k$, we have
\begin{align}
\sum_{h=1}^H\sum_{k=1}^{K}(1+\frac{1}{H})^{h-1}\Lambda_{h+1}^k 
&=\underbrace{\sum_{h=1}^H\sum_{k=1}^{K}(1+\frac{1}{H})^{h-1}\psi_{h+1}^k}_{T_1} 
 +\underbrace{\sum_{h=1}^H\sum_{k=1}^{K}(1+\frac{1}{H})^{h-1}\xi_{h+1}^k}_{T_2} \nonumber \\
& \qquad +2\underbrace{\sum_{h=1}^H\sum_{k=1}^{K}(1+\frac{1}{H})^{h-1}\overline{\beta}_h^k}_{T_3}+2\underbrace{\sum_{h=1}^H\sum_{k=1}^{K}(1+\frac{1}{H})^{h-1}\underline{\beta}_h^k}_{T_4}. \label{eqn:lambda-0-1}
\end{align}
We next bound each of the above four terms in one subsection, and summarize the final result in Appendix~\ref{app-lemma:lambda-5}.

\subsection{Bound $T_1$}
Recall the definition $\lambda_h^k(s)=\mathbf{1}\left\{n_h^k(s)<N_0\right\}$. Since $\psi$ is always non-negative, we have 
\begin{align}
&\sum_{h=1}^H\sum_{k=1}^{K}(1+\frac{1}{H})^{h-1}\psi_{h+1}^k \nonumber \\
&\leq 3\sum_{h=1}^H\sum_{k=1}^{K}\psi_{h+1}^k \nonumber \\
&=3\sum_{h=1}^H\sum_{k=1}^{K}P_{s_h^k,a_h^k,b_h^k,h}\left(\frac{1}{n_h^k}\sum_{i=1}^{n_h^k}\left(\overline{V}_{h+1}^{\reff,\ell_i}-\underline{V}_{h+1}^{\reff,\ell_i}\right)-\left(\overline{V}_{h+1}^{\REFF}-\underline{V}_{h+1}^{\REFF}\right)\right) \nonumber \\   
&\leq 3H \sum_{h=1}^H\sum_{k=1}^{K}P_{s_h^k,a_h^k,b_h^k,h}\left(\frac{1}{n_h^k}\sum_{i=1}^{n_h^k}\lambda_{h+1}^{\ell_i}\right) \nonumber \\
&\leq 3H \sum_{h=1}^H\sum_{j=1}^K\sum_{k=1}^{K}P_{s_h^k,a_h^k,b_h^k,h}\lambda_{h+1}^j\frac{1}{n_h^k}\sum_{i=1}^{n_h^k}\lv\{j=\ell_{h,i}^k\} \nonumber \\
&\leq 3H \sum_{h=1}^H\sum_{j=1}^KP_{s_h^j,a_h^j,b_h^j,h}\lambda_{h+1}^j\sum_{k=1}^{K}\frac{1}{n_h^k}\sum_{i=1}^{n_h^k}\lv\{j=\ell_{h,i}^k\}  \label{eqn:lambda-1-1} \\
&\leq 6(\log T+1)H \sum_{h=1}^H\sum_{k=1}^KP_{s_h^k,a_h^k,b_h^k,h}\lambda_{h+1}^k \label{eqn:lambda-1-2} \\
&\leq 6(\log T+1)H\left(\sum_{h=1}^H\sum_{k=1}^K\lambda_{h+1}^k(s_{h+1}^k)+\sum_{h=1}^H\sum_{k=1}^K\left(P_{s_h^k,a_h^k,b_h^k,h}-\lv_{s_{h+1}^k}\right)\lambda_{h+1}^k\right) \nonumber \\
&\le 6(\log T+1)H\left(HSN_0+\sum_{h=1}^H\sum_{k=1}^K\left(P_{s_h^k,a_h^k,b_h^k,h}-\lv_{s_{h+1}^k}\right)\lambda_{h+1}^k\right) \nonumber \\
&\leq 6(\log T+1)H\left(HSN_0+2\sqrt{T\iota}\right) \label{eqn:lambda-1-3}, 
\end{align}
where (\ref{eqn:lambda-1-1}) follows from the fact that $\frac{1}{n_h^k}\sum_{i=1}^{n_h^k}\lv\{j=\ell_{h,i}^k\}\neq 0$ only if $(s_h^k,a_h^k,b_h^k)=(s_h^j,a_h^j,b_h^j)$, (\ref{eqn:lambda-1-2}) follows because 
\begin{align*}
\sum_{k=1}^K\frac{1}{n_h^k}\sum_{i=1}^{n_h^k}\lv\{j=\ell_{h,i}^k\}\leq \sum_{z:j\leq \sum_{i=1}^{z-1}e_i\leq T}\frac{e_z}{\sum_{i=1}^{z-1}e_i}\leq 2(\log T+1),
\end{align*}
and (\ref{eqn:lambda-1-3}) holds with probability at least $1-\delta$ by Azuma's inequality.

To conclude, with probability at least $1-\delta$, it holds that
\begin{align}
\sum_{h=1}^H\sum_{k=1}^{K}(1+\frac{1}{H})^{h-1}\psi_{h+1}^k\leq O(\log T)\cdot(H^2SN_0+H\sqrt{T\iota}). \label{eqn:app-lemma:lambda-1}
\end{align}

\subsection{Term $T_2$}
We first derive
\begin{align*}
&\sum_{h=1}^H\sum_{k=1}^{K}(1+\frac{1}{H})^{h-1}\xi_{h+1}^k \\
&=\sum_{h=1}^H\sum_{k=1}^{K}\frac{1}{\check{n}_h^k}\sum_{i=1}^{\check{n}_h^k}\left(P_{s_h^k,a_h^k,b_h^k,h}-\lv_{s_{h+1}^{\check{\ell}_i}}\right)\left(\overline{V}_{h+1}^{\check{\ell}_i}-\underline{V}_{h+1}^{\check{\ell}_i}\right) \\
&= \sum_{h=1}^H\sum_{k=1}^{K}\sum_{j=1}^K(1+\frac{1}{H})^{h-1}\frac{1}{\check{n}_h^k}\sum_{i=1}^{\check{n}_h^k}\left(P_{s_h^k,a_h^k,b_h^k,h}-\lv_{s_{h+1}^j}\right)\left(\overline{V}_{h+1}^j-\underline{V}_{h+1}^j\right)\lv\{\check{\ell}_{h,i}^k=j\}.
\end{align*}
Note that $\check{\ell}_{h,i}^k=j$ if and only if $(s_h^k,a_h^k,b_h^k)=(s_h^j,a_h^j,b_h^j)$. Therefore,
\begin{align*}
&\sum_{h=1}^H\sum_{k=1}^{K}(1+\frac{1}{H})^{h-1}\xi_{h+1}^k \\
&\leq \sum_{h=1}^H\sum_{j=1}^K(1+\frac{1}{H})^{h-1} \left(P_{s_h^j,a_h^j,b_h^j,h}-\lv_{s_{h+1}^j}\right)\left(\overline{V}_{h+1}^j-\underline{V}_{h+1}^j\right)\sum_{k=1}^{K}\frac{1}{\check{n}_h^k}\sum_{i=1}^{\check{n}_h^k}\lv\{\check{\ell}_{h,i}^k=j\} \\
&= \sum_{h=1}^H\sum_{k=1}^K \theta_{h+1}^j\left(P_{s_h^j,a_h^j,b_h^j,h}-\lv_{s_{h+1}^j}\right)\left(\overline{V}_{h+1}^j-\underline{V}_{h+1}^j\right),
\end{align*}
where in the last equation we define $\theta_{h+1}^j=(1+\frac{1}{H})^{h-1}\sum_{k=1}^K\frac{1}{\check{n}_h^k}\sum_{i=1}^{\check{n}_h^k}\lv\{\check{\ell}_{h,i}^k=j\}$.

For $(j,h)\in[K]\times[H]$, let $x_h^j$ be the number of elements in current state with respect to $(s_h^j,a_h^j,b_h^j,h)$ and $\Tilde{\theta}_{h+1}^j:=(1+\frac{1}{H})^{h-1}\frac{\lfloor (1+\frac{1}{H})x_h^j\rfloor}{x_h^j}\leq 3$. Define $\mathcal{K}=\{(k,h):\theta_{h+1}^k=\Tilde{\theta}_{h+1}^k\}$. Note that if $k$ is before the second last stage of the tuple $(s_h^k,a_h^k,b_h^k,h)$, then we have that $\theta_{h+1}^k=\Tilde{\theta}_{h+1}^k$ and $(k,h)\in\mathcal{K}$. Given $(k,h)\in\mathcal{K}$, $s_{h+1}^k$ follows the transition $P_{s_h^k,a_h^k,b_h^k,h}$. 

Let $\mathcal{K}_h^\perp(s,a,b)=\{k:(s_h^k,a_h^k,b_h^k)=(s,a,b), \mbox{where } k \mbox{ is in the second last stage of }(s,a,b,h)\}$. %\yl{the wording in the definition is not clear} 
Note that for different $j,k$, if $(s_h^k,a_h^k,b_h^k)=(s_h^j,a_h^j,b_h^j)$ and $j,k$ are in the same stage of $(s_h^k,a_h^k,b_h^k,h)$, then $\theta_{h+1}^k=\theta_{h+1}^j$ and $\tilde{\theta}_{h+1}^k=\tilde{\theta}_{h+1}^j$. Denote $\theta_{h+1}$ and $\tilde{\theta}_{h+1}$ as $\theta_{h+1}(s,a,b)$ and $\tilde{\theta}_{h+1}(s,a,b)$ respectively for some $k\in \mathcal{K}_h^\perp(s,a,b)$.

We have
\begin{align}
&\sum_{h=1}^H\sum_{k=1}^{K}(1+\frac{1}{H})^{h-1}\xi_{h+1}^k \nonumber \\
&=\sum_{(k,h)}\Tilde{\theta}_{h+1}^k\left(P_{s_h^j,a_h^j,b_h^j,h}-\lv_{s_{h+1}^j}\right)\left(\overline{V}_{h+1}^j-\underline{V}_{h+1}^j\right) \nonumber \\
&\qquad +\sum_{(k,h)}(\theta_{h+1}^k-\Tilde{\theta}_{h+1}^k)\left(P_{s_h^j,a_h^j,b_h^j,h}-\lv_{s_{h+1}^j}\right)\left(\overline{V}_{h+1}^j-\underline{V}_{h+1}^j\right) \nonumber \\
&=\sum_{(k,h)}\Tilde{\theta}_{h+1}^k\left(P_{s_h^j,a_h^j,b_h^j,h}-\lv_{s_{h+1}^j}\right)\left(\overline{V}_{h+1}^j-\underline{V}_{h+1}^j\right) \nonumber \\
&\qquad +\sum_{(k,h)\in\overline{\mathcal{K}}}(\theta_{h+1}^k-\Tilde{\theta}_{h+1}^k)\left(P_{s_h^j,a_h^j,b_h^j,h}-\lv_{s_{h+1}^j}\right)\left(\overline{V}_{h+1}^j-\underline{V}_{h+1}^j\right). \label{eqn:lambda-2-+1}
\end{align}

Since $\Tilde{\theta}_{h+1}^k$ is independent of $s_{h+1}^k$, by Azuma's inequality, with probability at least $1-\delta$, it holds that
\begin{align}
\sum_{(k,h)}\Tilde{\theta}_{h+1}^k\left(P_{s_h^k,a_h^k,b_h^k,h}-\lv_{s_{h+1}^k}\right)\left(\overline{V}_{h+1}^k-\underline{V}_{h+1}^k\right)\leq 6\sqrt{TH^2\iota}. \label{eqn:lambda-2-+2}
\end{align}

Moreover, we have
\begin{align}
&\sum_{(k,h)\in\overline{\mathcal{K}}}(\theta_{h+1}^k-\Tilde{\theta}_{h+1}^k)\left(P_{s_h^k,a_h^k,b_h^k,h}-\lv_{s_{h+1}^k}\right)\left(\overline{V}_{h+1}^k-\underline{V}_{h+1}^k\right) \nonumber \\
&=\sum_{s,a,b,h}\sum_{(k,h)\in\overline{\mathcal{K}}}\lv\{(s_h^k,a_h^k,b_h^k)=(s,a,b)\}(\theta_{h+1}^k-\Tilde{\theta}_{h+1}^k)\left(P_{s_h^k,a_h^k,b_h^k,h}-\lv_{s_{h+1}^k}\right)\left(\overline{V}_{h+1}^k-\underline{V}_{h+1}^k\right) \nonumber \\
&=\sum_{s,a,b,h}(\theta_{h+1}(s,a,b)-\Tilde{\theta}_{h+1}(s,a))\sum_{(k,h)\in\mathcal{K}_h^{\perp}(s,a)}(\theta_{h+1}^k-\Tilde{\theta}_{h+1}^k)\left(P_{s_h^k,a_h^k,b_h^k,h}-\lv_{s_{h+1}^k}\right)\left(\overline{V}_{h+1}^k-\underline{V}_{h+1}^k\right) \nonumber \\
&\leq \sum_{s,a,b,h}\mathcal{O}(H) \sqrt{|\mathcal{K}_h^{\perp}(s,a,b)|\iota} \label{eqn:lambda-3-1++} \\
&\leq \sum_{s,a,b,h}\mathcal{O}(H) \sqrt{\check{N}_h^{K+1}(s,a,b)\iota} \nonumber \\
&\leq \mathcal{O}(H) \sqrt{SABH\iota\sum_{s,a,b,h}\check{N}_h^{K+1}(s,a,b)} \label{eqn:lambda-3-2} \\
&\leq \mathcal{O}(H)\sqrt{SABH\iota(T/H)}, \label{eqn:lambda-3-3}
\end{align}
where (\ref{eqn:lambda-3-1++}) holds with probability at least $1-T\delta$ by Azuma's inequality and a union bound over all steps in $\overline{\mathcal{K}}$, (\ref{eqn:lambda-3-2}) follows from Cauchy-Schwartz inequality, and (\ref{eqn:lambda-3-3}) follows from the fact that the length of the last two stages for each $(s,a,b,h)$ tuple is only $O(1/H)$ fraction of the total number of visits. 

%where the first inequality follows from Azuma's inequality and a union bound  with probability $1-Tp$, the second last inequality follows from Cauchy-Schwartz inequality, and the last inequality follows from the fact that the length of the last two stages for each $(s,a,b,h)$ is only $\mathcal{O}(1/H)$ fraction of the total number of visits.

Combining (\ref{eqn:lambda-2-+1}), (\ref{eqn:lambda-2-+2}) and (\ref{eqn:lambda-3-3}), we obtain that with probability at least $1-(T+1)\delta$, it holds that 
\begin{align}
&\sum_{h=1}^H\sum_{k=1}^{K}(1+\frac{1}{H})^{h-1}\xi_{h+1}^k\leq \mathcal{O}(\sqrt{H^2SABT\iota}).   \label{eqn:app-lemma:lambda-2}
\end{align}

\subsection{Term $T_3$}
Note that 
\begin{align}
&\sum_{h=1}^H\sum_{k=1}^{K}(1+\frac{1}{H})^{h-1}\overline{\beta}_h^k \nonumber \\
&\leq 3\sum_{h=1}^H\sum_{k=1}^{K}\left(c_1\sqrt{\frac{\overline{\nu}_h^{\reff,k}}{n_h^k}\iota}+c_2\sqrt{\frac{\check{\overline{\nu}}_h^{k}}{\check{n}_h^k}\iota}+c_3\left(\frac{H\iota}{n_h^k}+\frac{H\iota}{\check{n}_h^k}+\frac{H\iota^{\frac{3}{4}}}{(n_h^k)^{\frac{3}{4}}}+\frac{H\iota^{\frac{3}{4}}}{(\check{n}_h^k)^{\frac{3}{4}}}\right)\right) \nonumber \\
&\leq O\left(\sum_{h=1}^H\sum_{k=1}^{K}\left(\sqrt{\frac{\overline{\nu}_h^{\reff,k}}{n_h^k}\iota}+\sqrt{\frac{\check{\overline{\nu}}_h^{k}}{\check{n}_h^k}\iota}\right)\right) + O(SABH^3\iota\log T +(SAB\iota)^{\frac{3}{4}}H^{\frac{5}{2}}T^{\frac{1}{4}}), \label{eqn:lambda-3-1}
\end{align}
where (\ref{eqn:lambda-3-1}) follows from Lemma~\ref{lemma:supp:bonus_sum} with $\alpha=\frac{3}{4}$ and $\alpha=1$.

{\bf Step i:} We bound $\sum_{h=1}^H\sum_{k=1}^{K}\sqrt{\frac{\overline{\nu}_h^{\reff,k}}{n_h^k}\iota}$. We begin with the following technical lemmas.
\begin{lemma}\label{lemma:lambda:technical-1}
With probability at least $1-2T\delta$, it holds that for all $s,a,b,h,k$,
\begin{align*}
&\overline{Q}_h^k(s,a,b)\leq Q_h^{\pi^k}(s,a,b)+(H-h)\left(\beta+\frac{HSN_0}{\check{n}_h^k}\right), \\
&\underline{Q}_h^k(s,a,b)\geq Q_h^{\pi^k}(s,a,b)-(H-h)\left(\beta+\frac{HSN_0}{\check{n}_h^k}\right), \\
&\overline{V}_h^k(s)\leq V_h^{\pi^k}(s)+(H-h)\left(\beta+\frac{HSN_0}{\check{n}_h^k}\right), \\
&\underline{V}_h^k(s)\geq V_h^{\pi^k}(s)-(H-h)\left(\beta+\frac{HSN_0}{\check{n}_h^k}\right).
\end{align*}
\end{lemma}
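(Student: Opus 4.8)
The plan is to prove all four inequalities by a single backward induction on $h$, running from $h=H+1$ down to $h=1$ and holding simultaneously for every episode $k$, conditioned throughout on the success events of Lemma~\ref{app-prop:Q-chain} and Corollary~\ref{app-coro:reference} together with a family of Hoeffding/Azuma concentration events (one per tuple $(s,a,b,h)$ and per completed stage), whose total failure probability is $O(T\delta)$. By the symmetry between the two players it suffices to handle the two optimistic inequalities, the pessimistic ones being obtained by the mirror-image argument. The $\overline V$-inequality follows from the $\overline Q$-inequality by taking $\mathbb E_{(a,b)\sim\pi_h^k(s)}$ of both sides, since the update rule sets $\overline V_h^k(s)=\mathbb E_{(a,b)\sim\pi_h^k(s)}\overline Q_h^k(s,a,b)$ while $V_h^{\pi^k}(s)=\mathbb E_{(a,b)\sim\pi_h^k(s)}Q_h^{\pi^k}(s,a,b)$ by definition. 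The base case $h=H+1$ is immediate as $\overline V_{H+1}^k=V_{H+1}^{\pi^k}=0$, and the case in which $(s,a,b,h)$ has not yet finished a full stage is covered by the crude bound $\overline Q_h^k=H-h+1$ together with the fact that then $\check n_h^k<e_1=H$, so that $HSN_0/\check n_h^k$ is already at least a large constant.

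For the inductive step, fix $(s,a,b)$ whose optimistic estimate was last refreshed at the end of the stage consisting of episodes $\check\ell_1,\dots,\check\ell_{\check n}$ (all strictly before $k$, with $\check n$ within a factor $1+1/H$ of $\check n_h^k$). Dropping the advantage-based candidate and the previous value from the $\min$, the naive update rule gives $\overline Q_h^k(s,a,b)\le r_h(s,a,b)+\frac{1}{\check n}\sum_{i=1}^{\check n}\overline V_{h+1}^{\check\ell_i}(s_{h+1}^{\check\ell_i})+\gamma$ with $\gamma=2\sqrt{H^2\iota/\check n}$. A Hoeffding bound on the martingale differences $\overline V_{h+1}^{\check\ell_i}(s_{h+1}^{\check\ell_i})-(P_h\overline V_{h+1}^{\check\ell_i})(s,a,b)$ (each in $[-H,H]$ and predictable in the natural filtration) turns the empirical mean into $\frac1{\check n}\sum_i(P_h\overline V_{h+1}^{\check\ell_i})(s,a,b)$ at the cost of another $O(\gamma)$ term. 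I then apply the inductive hypothesis to each $\overline V_{h+1}^{\check\ell_i}$, use Corollary~\ref{app-coro:reference} to dispose of the visits at step $h+1$ for which the relevant reference values are not yet $\beta$-accurate (at most $SN_0$ such visits, each contributing at most $H$, which is precisely the source of the $HSN_0/\check n_h^k$ slack), and compare the result against the Bellman-type identity $Q_h^{\pi^k}(s,a,b)=r_h(s,a,b)+(P_hV_{h+1}^{\pi^k})(s,a,b)$; collecting the $\beta$, the $HSN_0/\check n$, and the Hoeffding contributions and checking that they fit within one extra increment of the error term closes the step.

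The main obstacle is this last matching step. The average feeding $\overline Q_h^k$ is built from value functions $\overline V_{h+1}^{\check\ell_i}$ frozen at earlier episodes $\check\ell_i<k$, whose underlying policies may differ from $\pi^k$, and — unlike in single-agent RL — neither $\overline V_{h+1}$ nor the reference value is monotone in the episode index, so one cannot simply replace $\overline V_{h+1}^{\check\ell_i}$ by $\overline V_{h+1}^k$. One must instead exploit the stage structure (every $\check\ell_i$ lies in the stage immediately preceding the current one for $(s,a,b,h)$, so the ``stale'' data is only an $O(1/H)$ fraction), lean on the $\beta$-accuracy guarantee of Corollary~\ref{app-coro:reference} once $N_0$ visits have accumulated, and then carefully track how the $\check n_h^k$-dependent error compounds over the $H$ backward steps — the factor $H-h$ and the $HSN_0$ in the numerator are sized exactly so that there is enough slack to absorb a fresh $\beta$ and a fresh Hoeffding-order term at each step. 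The remaining ingredients — the Hoeffding/Azuma concentration and the passage from $Q$ to $V$ through the CCE policy — are routine.
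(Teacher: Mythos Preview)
Your overall scaffold—backward induction on $h$, bounding $\overline Q_h^k$ via the first candidate in the $\min$, an Azuma step on the empirical average, and the passage from $\overline Q$ to $\overline V$ by taking $\Eb_{(a,b)\sim\pi_h^k}$—is correct and matches the paper. The gap is exactly at the step you yourself flag as the main obstacle. You propose to apply the inductive hypothesis to each $\overline V_{h+1}^{\check\ell_i}$, but that yields $V_{h+1}^{\pi^{\check\ell_i}}$, not $V_{h+1}^{\pi^k}$, and nothing in your sketch bridges the two. Your suggested fix (stage structure, $O(1/H)$ stale-data fraction) does not help: the stage structure controls how far back $\check\ell_i$ sits for the triple $(s,a,b,h)$, but says nothing about how $\pi^{\check\ell_i}$ and $\pi^k$ compare at step $h{+}1$, where the value functions in question live. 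There is no available inequality relating $V_{h+1}^{\pi^{\check\ell_i}}$ to $V_{h+1}^{\pi^k}$.

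The paper's resolution is precisely the move you dismiss. You write that ``one cannot simply replace $\overline V_{h+1}^{\check\ell_i}$ by $\overline V_{h+1}^k$,'' but the paper does exactly this, up to a controlled residual: insert $\pm\,\overline V_{h+1}^k$, pass the $\overline V_{h+1}^k$ piece through Azuma to obtain $P_{s,a,b,h}\overline V_{h+1}^k$, and then apply the inductive hypothesis to $\overline V_{h+1}^k$, which produces $V_{h+1}^{\pi^k}$ directly with no policy mismatch. The residual
\[
\frac{1}{\check n}\sum_{i=1}^{\check n}\bigl(\overline V_{h+1}^{\check\ell_i}-\overline V_{h+1}^{k}\bigr)(s_{h+1}^{\check\ell_i})
\]
is handled via Lemma~\ref{app-prop:Q-chain}: since $\overline V_{h+1}^k\ge V_{h+1}^*\ge\underline V_{h+1}^{\check\ell_i}$, each summand is at most $(\overline V_{h+1}^{\check\ell_i}-\underline V_{h+1}^{\check\ell_i})(s_{h+1}^{\check\ell_i})$, which in turn is at most $H\lambda_{h+1}^{\check\ell_i}(s_{h+1}^{\check\ell_i})+\beta$ by the reference-accuracy event of Corollary~\ref{app-coro:reference}. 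Averaging contributes one fresh $\beta$ and the $HSN_0/\check n$ term, exactly the increment needed to go from $(H{-}h{-}1)$ to $(H{-}h)$ copies of the error. So Corollary~\ref{app-coro:reference} enters not to patch up an induction on stale value functions, but to control the \emph{replacement error} between the stale optimistic values $\overline V_{h+1}^{\check\ell_i}$ and the current one $\overline V_{h+1}^k$.
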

The proof is provided in Appendix~\ref{app:lambda-technical-1}.

\begin{lemma}\label{app-lemma:lambda:3-1}
Conditioned on the successful event of Lemma~\ref{lemma:lambda:technical-1}, with probability at least $1-4\delta$, it holds that
\begin{align*}
\overline{\nu}_h^{\reff,k}-\mathbb{V}(P_{s_h^k,a_h^k,b_h^k,h},V_{h+1}^{\pi_k})\leq 4H\beta+\frac{12H^2\beta+18H^3SN_0}{n_h^k}+20H^2\sqrt{\frac{\iota}{n_h^k}}.
\end{align*}
\end{lemma}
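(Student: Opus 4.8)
Recall that $\overline{\nu}_h^{\reff,k}=\frac{1}{n_h^k}\sum_{i=1}^{n_h^k}\big(\overline{V}_{h+1}^{\reff,\ell_i}(s_{h+1}^{\ell_i})\big)^2-\big(\frac{1}{n_h^k}\sum_{i=1}^{n_h^k}\overline{V}_{h+1}^{\reff,\ell_i}(s_{h+1}^{\ell_i})\big)^2$, i.e.\ the empirical variance of the reference value evaluated at the next states over all historical visits to $(s_h^k,a_h^k,b_h^k,h)$. The plan is to pass, in three stages, from this empirical variance to $\mathbb{V}(P_{s_h^k,a_h^k,b_h^k,h},V_{h+1}^{\pi^k})$, controlling the error at each stage. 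In Stage~1 I freeze the reference and reduce to the genuinely fixed function $V_{h+1}^*$: writing $\overline{V}_{h+1}^{\reff,\ell_i}(s_{h+1}^{\ell_i})=V_{h+1}^*(s_{h+1}^{\ell_i})+\delta_i$, Lemma~\ref{app-prop:Q-chain} forces $\delta_i\ge 0$; whenever the reference at $s_{h+1}^{\ell_i}$ is already fixed by episode $\ell_i$ (equivalently $n_{h+1}^{\ell_i}(s_{h+1}^{\ell_i})\ge N_0$) Corollary~\ref{app-coro:reference} with Lemma~\ref{app-prop:Q-chain} gives $\delta_i\le\beta$, and otherwise the trivial bound $\delta_i\le H$ holds. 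A counting argument (each state is visited at step $h+1$ while carrying fewer than $N_0$ step-$(h+1)$ visits at most $N_0$ times) yields $\#\{i\le n_h^k:\delta_i>\beta\}\le SN_0$, hence $\tfrac1{n_h^k}\sum_i\delta_i\le\beta+HSN_0/n_h^k$. Since the empirical variance is $O(H/n_h^k)$-Lipschitz in each coordinate and the cross term that appears when expanding has the favorable sign, replacing $\overline{V}^{\reff,\ell_i}$ by $V_{h+1}^*$ changes $\overline{\nu}_h^{\reff,k}$ by at most $O(H\beta+H^2SN_0/n_h^k)$.

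In Stage~2 I concentrate the empirical variance of the fixed $V_{h+1}^*$. Conditioned on visiting $(s_h^k,a_h^k,b_h^k,h)$ the next states $s_{h+1}^{\ell_i}$ are i.i.d.\ draws from $P_{s_h^k,a_h^k,b_h^k,h}$, so splitting the empirical variance exactly as in the $\chi_3,\chi_4,\chi_5$ decomposition of Lemma~\ref{app-lemma:q-chain:1} (but now with the fixed $V_{h+1}^*$ in place of the varying $\overline{V}_{h+1}^{\reff}$) and applying Azuma's inequality, with a union bound over $(s,a,b,h)$ and the stage boundaries, gives an empirical-versus-true variance gap of order $H^2\sqrt{\iota/n_h^k}$.

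In Stage~3 I transfer from $V_{h+1}^*$ to $V_{h+1}^{\pi^k}$ via $\mathbb{V}(P,f)-\mathbb{V}(P,g)\le 4H\,P|f-g|$ for $[0,H]$-valued $f,g$, which leaves $P_{s_h^k,a_h^k,b_h^k,h}|V_{h+1}^*-V_{h+1}^{\pi^k}|$ to be bounded. Here Lemma~\ref{app-prop:Q-chain} gives $\underline V_{h+1}^k\le V_{h+1}^*\le\overline V_{h+1}^k$, while Lemma~\ref{lemma:lambda:technical-1} places both $\overline V_{h+1}^k$ and $\underline V_{h+1}^k$ within $(H-h-1)(\beta+HSN_0/\check n_{h+1}^k)$ of $V_{h+1}^{\pi^k}$; combining these bounds the integrand pointwise, and after absorbing the $\check n_{h+1}^k$-dependent contribution into an $O((H^2\beta+H^3SN_0)/n_h^k)$ term one recovers the leading $4H\beta$ together with the stated $1/n_h^k$ corrections. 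Adding the three stages, and taking a union bound over the $O(1)$ events used in Stages~1--2 and the event inherited from Lemma~\ref{lemma:lambda:technical-1}, gives the lemma; the precise constants $4H\beta$, $(12H^2\beta+18H^3SN_0)/n_h^k$, $20H^2\sqrt{\iota/n_h^k}$ are just bookkeeping.

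The main obstacle is the interplay in Stages~1--2: $\overline{\nu}_h^{\reff,k}$ aggregates reference values set at different episodes, so it is not the empirical variance of any fixed function and cannot be concentrated directly. The resolution is to exploit the ``reference becomes $\beta$-accurate once $N_0$ visits are accumulated'' guarantee (Corollary~\ref{app-coro:reference}) together with the $SN_0$ counting bound for the still-unfrozen indices to first replace the varying references by the fixed $V_{h+1}^*$; once this is done the remaining steps are standard variance concentration and the sandwich inequalities of Lemmas~\ref{app-prop:Q-chain} and~\ref{lemma:lambda:technical-1}. A secondary nuisance is that the $V^*\!\leftrightarrow\!V^{\pi^k}$ transfer naturally produces $\check n_{h+1}^k$-dependent terms, which must be shown not to dominate; this is where keeping the crude $H$-factors in Lemma~\ref{lemma:lambda:technical-1} is convenient.
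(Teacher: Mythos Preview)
Your proposal is correct and reaches the same bound, but the order of operations differs from the paper's proof in a way worth noting. The paper first concentrates the empirical variance of the \emph{varying} references to the averaged true variances $\tfrac{1}{n_h^k}\sum_i \mathbb{V}(P,\overline{V}_{h+1}^{\reff,\ell_i})$ (this is the $\chi_6,\chi_7,\chi_8$ decomposition, where the non-martingale piece $\chi_8$ is controlled by the monotonicity $\overline{V}^{\reff,\ell_i}\ge \overline{V}^{\REFF}$ together with the $\lambda$-indicator count), and only afterwards transfers to $V^{\pi^k}$ via $V^*$ at the true-variance level. You instead do the transfer to $V^*$ first, at the \emph{empirical}-variance level, and then concentrate the empirical variance of the fixed function $V^*$. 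Both routes rely on the same two ingredients --- the $SN_0$ counting for not-yet-frozen references (Corollary~\ref{app-coro:reference}) and the sandwich bounds of Lemmas~\ref{app-prop:Q-chain} and~\ref{lemma:lambda:technical-1} --- so the conceptual content is the same; your ordering buys a cleaner Stage~2 (a genuine martingale concentration for a fixed $V^*$, no $\chi_8$-type term), at the cost of doing the ``freeze-the-reference'' replacement by hand in Stage~1. Two minor points: the next states $s_{h+1}^{\ell_i}$ are not i.i.d.\ but form an adapted sequence, which is all Azuma needs; and your Stage~3 inherits the same looseness with $\check n$-denominators that the paper's own proof has, so your caveat there is appropriate rather than a gap.
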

The proof is provided in Appendix~\ref{app:lambda:3-1}.

\begin{lemma}[Lemma C.5 in \cite{Chi_Jin:Q_learning:2018}]\label{app-lemma:lambda:3-2}
With probability at least $1-\delta$, it holds that
\begin{align*}
\mathbb{V}(P_{s_h^k,a_h^k,b_h^k,h},V_{h+1}^{\pi^k})\leq \mathcal{O}(HT+H^3\iota).
\end{align*}
\end{lemma}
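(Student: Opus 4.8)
The inequality is meant in the cumulative sense, i.e.\ $\sum_{h=1}^{H}\sum_{k=1}^{K}\mathbb{V}\big(P_{s_h^k,a_h^k,b_h^k,h},V_{h+1}^{\pi^k}\big)\le\mathcal{O}(HT+H^3\iota)$, and it is the two‑player analogue of Lemma~C.5 of \cite{Chi_Jin:Q_learning:2018} (see also \cite{Osband_adversarial_MDP:2017}). The plan is to (i) control the \emph{conditional} per‑episode contribution via the law of total variance, then (ii) pass from conditional expectations to the realized sum by a martingale concentration argument.

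First I would fix an episode $k$ and observe that the CCE‑generated correlated policy $\pi^k$ is $\mathcal{F}_{k-1}$‑measurable, so conditionally on $\mathcal{F}_{k-1}$ it is a fixed (correlated) Markov policy for which the Bellman recursion $V_h^{\pi^k}(s)=(\Db_{\pi_h^k}(r_h+P_hV_{h+1}^{\pi^k}))(s)$ holds; the fact that $(a,b)$ is drawn from a correlated rather than a product distribution is immaterial, since we treat $(a,b)$ as a single action throughout. Setting $G_h^k=\sum_{h'=h}^{H}r_{h'}(s_{h'}^k,a_{h'}^k,b_{h'}^k)\in[0,H]$ and applying the standard telescoping identity for $\mathbb{E}^{\pi^k}[(G_h^k-V_h^{\pi^k}(s_h^k))^2\mid\mathcal{F}_{k-1}]$ (using that rewards are deterministic, so all randomness in $G_h^k$ given the current $(s_h^k,a_h^k,b_h^k)$ comes from the transition) yields
\[
\mathbb{E}^{\pi^k}\!\left[\sum_{h=1}^{H}\mathbb{V}\big(P_{s_h^k,a_h^k,b_h^k,h},V_{h+1}^{\pi^k}\big)\,\middle|\,\mathcal{F}_{k-1}\right]=\operatorname{Var}^{\pi^k}\!\big[G_1^k\mid\mathcal{F}_{k-1}\big]\le H^2 .
\]

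Next I would write $M_k:=\sum_{h=1}^{H}\mathbb{V}(P_{s_h^k,a_h^k,b_h^k,h},V_{h+1}^{\pi^k})$, so that $\sum_{k=1}^{K}M_k=\sum_{k=1}^{K}\mathbb{E}[M_k\mid\mathcal{F}_{k-1}]+\sum_{k=1}^{K}\big(M_k-\mathbb{E}[M_k\mid\mathcal{F}_{k-1}]\big)$. The first sum is at most $KH^2=HT$ by the display above. The second is a martingale difference sum with bounded increments ($\mathbb{V}(\cdot,V_{h+1}^{\pi^k})\le H^2$ gives $0\le M_k\le H^3$) and predictable quadratic variation $\sum_k\mathbb{E}[M_k^2\mid\mathcal{F}_{k-1}]\le H^3\sum_k\mathbb{E}[M_k\mid\mathcal{F}_{k-1}]\le H^4T$; a Freedman‑type inequality then bounds it by $\mathcal{O}(\sqrt{H^4T\iota}+H^3\iota)$, and since $\sqrt{H^4T\iota}=\sqrt{(H^2T)(H^2\iota)}\le\tfrac12(HT+H^3\iota)$ this is $\mathcal{O}(HT+H^3\iota)$. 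Adding the two pieces gives the claim with probability at least $1-\delta$.

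The only point that is genuinely new relative to the single‑agent argument — and hence the ``main obstacle,'' though a mild one — is verifying that the law of total variance is unaffected by the policy $\pi^k$ being (i) correlated across the two players and (ii) episode‑dependent. Point (i) is dispatched by the ``treat $(a,b)$ as one action'' remark; point (ii) is dispatched by conditioning on $\mathcal{F}_{k-1}$, under which $\pi^k$ is frozen. No structural property of the CCE oracle beyond measurability is needed, so apart from this bookkeeping the proof is identical to that of Lemma~C.5 in \cite{Chi_Jin:Q_learning:2018}.
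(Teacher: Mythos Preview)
Your proposal is correct and matches the paper's approach: the paper gives no self-contained proof here, simply citing Lemma~C.5 of \cite{Chi_Jin:Q_learning:2018}, and your argument is precisely the law-of-total-variance plus martingale-concentration proof behind that cited result, with the right bookkeeping (conditioning on $\mathcal{F}_{k-1}$ so that the correlated, episode-dependent $\pi^k$ is frozen). One small correction: your displayed identity should be an inequality ($\le$) rather than an equality, since randomization of $(a_h,b_h)$ under $\pi_h^k$ contributes additional variance to $\operatorname{Var}^{\pi^k}[G_1^k\mid\mathcal{F}_{k-1}]$ that is not captured by the transition-variance terms; this only helps and the $H^2$ bound (hence the final $\mathcal{O}(HT+H^3\iota)$) is unaffected.
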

Combining Lemma~\ref{app-lemma:lambda:3-1}, Lemma~\ref{app-lemma:lambda:3-2} and Lemma~\ref{lemma:supp:bonus_sum} (see \Cref{app:supportlemma}), we have
\begin{align}
&\sum_{h=1}^H\sum_{k=1}^{K}\sqrt{\frac{\overline{\nu}_h^{\reff,k}}{n_h^k}\iota} \nonumber \\
&\leq \sum_{h=1}^H\sum_{k=1}^{K}\sqrt{\frac{\mathbb{V}(P_{s_h^k,a_h^k,b_h^k,h},V_{h+1}^{\pi^k})}{n_h^k}\iota} \nonumber \\
&\quad+\sum_{h=1}^H\sum_{k=1}^{K}\sqrt{\left(\frac{4H\beta}{n_h^k}+\frac{12H^2\beta+18H^3SN_0}{(n_h^k)^2}+20H^2\frac{\iota^{\frac{1}{2}}}{(n_h^k)^{\frac{3}{2}}}\right)\iota} \nonumber \\
&\leq O\left(\sum_{s,a,b,h}\sqrt{N_h^{K+1}(s,a,b)\mathbb{V}(P_{s,a,b,h},V_{h+1}^{\pi^k})\iota}\right) \nonumber \\
&\quad+ O\left(\sum_{s,a,b,h}\sqrt{N_h^{K+1}(s,a,b)H\beta\iota}+(S^{\frac{3}{2}}ABH^{\frac{5}{2}}N_0^{\frac{1}{2}}+SABH^2\beta^{\frac{1}{2}})\iota^{\frac{1}{2}}\log T+ (SAB\iota)^{\frac{3}{4}}H^{\frac{7}{4}}T^{\frac{1}{4}}\right) \nonumber \\
&\leq O\left(\sqrt{SABH^2T\iota}+\sqrt{SABH^2\beta T\iota}+(S^{\frac{3}{2}}ABH^{\frac{5}{2}}N_0^{\frac{1}{2}}+SABH^2\beta^{\frac{1}{2}})\iota^{\frac{1}{2}}\log T+(SAB\iota)^{\frac{3}{4}}H^{\frac{7}{4}}T^{\frac{1}{4}}\right). \label{app-lemma:lambda:3-3}
\end{align}

{\bf Step ii:} We bound $\sum_{h=1}^H\sum_{k=1}^{K}\sqrt{\frac{\check{\overline{\nu}}_h^{k}}{\check{n}_h^k}\iota}$.

By Lemma~\ref{app-prop:Q-chain}, Lemma~\ref{app-lemma:reference} and Corollary~\ref{app-coro:reference}, we have
\begin{align*}
\check{\overline{\nu}}_h^k&\leq \frac{1}{\check{n}_h^k}\sum_{i=1}^{\check{n}_h^{k}}\left(\overline{V}_{h+1}^{\check{\ell}_i}-\overline{V}_{h+1}^{\reff,\check{\ell}_i}\right)^2(s_{h+1}^{\check{\ell}_i})   \\
&\leq \frac{1}{\check{n}_h^k}\sum_{i=1}^{\check{n}_h^{k}}\left(\overline{V}_{h+1}^{\check{\ell}_i}-\underline{V}_{h+1}^{\check{\ell}_i}\right)^2(s_{h+1}^{\check{\ell}_i}) +\frac{1}{\check{n}_h^k}\sum_{i=1}^{\check{n}_h^{k}}\left(\overline{V}_{h+1}^{\reff,\check{\ell}_i}-\underline{V}_{h+1}^{\reff,\check{\ell}_i}\right)^2(s_{h+1}^{\check{\ell}_i})  \\
&\leq \frac{2}{\check{n}_h^k}H^2SN_0+2\beta^2.
\end{align*}
Combining the above inequality with Lemma~\ref{lemma:supp:bonus_sum}, we obtain
\begin{align}
\sum_{h=1}^H\sum_{k=1}^K \sqrt{\frac{\check{\overline{\nu}}_h^k\iota}{\check{n}_h^k}}\leq \mathcal{O}\left(\sqrt{SABH^3\beta^2 T\iota}+SABH^3\sqrt{SN_0\iota}\log T\right). \label{app-lemma:lambda:3-4}
\end{align}

Combining (\ref{eqn:lambda-3-1}), (\ref{app-lemma:lambda:3-3}) and (\ref{app-lemma:lambda:3-4}), we obtain that with probability at least $1-O(T)\delta$, it holds that
\begin{align}
&\sum_{h=1}^H\sum_{k=1}^{K}(1+\frac{1}{H})^{h-1}\overline{\beta}_h^k\leq O\big(\sqrt{SABH^2T\iota}+ \sqrt{SABH^2\beta T\iota}+\sqrt{SABH^3\beta^2 T\iota} \nonumber \\
&\qquad \qquad +S^{\frac{3}{2}}ABH^3N_0^{\frac{1}{2}}\iota \log T +SABH^2\beta^{\frac{1}{2}}\iota^{\frac{1}{2}}\log T+(SAB\iota)^{\frac{3}{4}}H^{\frac{5}{2}}T^{\frac{1}{4}}\big).  \label{eqn:app-lemma:lambda-3}
\end{align}

\subsubsection{Proof of Lemma~\ref{lemma:lambda:technical-1}}\label{app:lambda-technical-1}
Fix an episode $k$. The proof is based on induction over $h=H,H-1,\ldots,1$. Note first that the claim clearly holds for $h=H$. Assume the inequalities hold at step $h+1$.

By the update rule of the action-value function, we have
\begin{align}
\overline{Q}_h^k(s,a,b)&\leq r_h(s,a,b)+\frac{1}{\check{n}}\sum_{i=1}^{\check{n}}\overline{V}_{h+1}^{\check{\ell}_i}(s_{h+1}^{\check{\ell}_i})+\gamma \nonumber \\
&=r_h(s,a,b)+\frac{1}{\check{n}}\sum_{i=1}^{\check{n}}\overline{V}_{h+1}^{k}(s_{h+1}^{\check{\ell}_i})+\gamma +\frac{1}{\check{n}}\sum_{i=1}^{\check{n}}\left(\overline{V}_{h+1}^{\check{\ell}_i}(s_{h+1}^{\check{\ell}_i})-\overline{V}_{h+1}^{k}(s_{h+1}^{\check{\ell}_i})\right) \nonumber \\
&\leq r_h(s,a,b)+P_{s,a,b,h}\overline{V}_{h+1}^k+\frac{1}{\check{n}}\sum_{i=1}^{\check{n}}\left(\overline{V}_{h+1}^{\check{\ell}_i}(s_{h+1}^{\check{\ell}_i})-\overline{V}_{h+1}^{k}(s_{h+1}^{\check{\ell}_i})\right) \label{eqn:lambda-tech-1} \\
&\leq r_h(s,a,b)+P_{s,a,b,h}V_{h+1}^{\pi^k}+(H-h+1)\left(\beta+\frac{HSN_0}{\check{n}}\right) \nonumber \\
&\qquad\qquad\qquad\qquad\qquad\qquad\qquad\qquad +\frac{1}{\check{n}}\sum_{i=1}^{\check{n}}\left(\overline{V}_{h+1}^{\check{\ell}_i}(s_{h+1}^{\check{\ell}_i})-\overline{V}_{h+1}^{k}(s_{h+1}^{\check{\ell}_i})\right) \label{eqn:lambda-tech-2} \\
&\leq Q_h^{\pi^k}+(H-h+1)\left(\beta+\frac{HSN_0}{\check{n}}\right)+\frac{1}{\check{n}}\sum_{i=1}^{\check{n}}\left(\overline{V}_{h+1}^{\check{\ell}_i}(s_{h+1}^{\check{\ell}_i})-\underline{V}_{h+1}^{\check{\ell}_i}(s_{h+1}^{\check{\ell}_i})\right) \label{eqn:lambda-tech-3} \\
&\leq Q_h^{\pi^k}(s,a,b)+(H-h+1)\left(\beta+\frac{HSN_0}{\check{n}}\right)+\frac{1}{\check{n}}\sum_{i=1}^{\check{n}}(H\lambda_{h+1}^{\check{\ell}_i}+\beta) \nonumber \\
&\leq Q_h^{\pi^k}(s,a,b)+(H-h)\left(\beta+\frac{HSN_0}{\check{n}}\right),
\end{align}
where (\ref{eqn:lambda-tech-1}) holds with probability at least $1-\delta$ by Azuma's inequality, (\ref{eqn:lambda-tech-2}) follows from the induction hypothesis, and (\ref{eqn:lambda-tech-3}) follows from Lemma~\ref{app-prop:Q-chain}.

Moreover, by the update rule of the value function, we have
\begin{align*}
\overline{V}_h^{k}(s)&=\Eb_{(a,b)\sim\pi_k}\overline{Q}_h^k(s,a,b) \\
&\leq \Eb_{(a,b)\sim\pi_k}Q_h^{\pi^k}(s,a,b)+(H-h)\left(\beta+\frac{HSN_0}{\check{n}}\right) \\
&\leq V_h^{\pi^k}(s)+(H-h)\left(\beta+\frac{HSN_0}{\check{n}}\right).
\end{align*}

The other direction for the pessimistic (action-)value function can be proved similarly. Finally, taking the union bound over all steps gives the desired result.

\subsubsection{Proof of Lemma~\ref{app-lemma:lambda:3-1}}\label{app:lambda:3-1}
We first provide bound on $\overline{\nu}_h^{\reff,k}-\mathbb{V}(P_{s_h^k,a_h^k,b_h^k,h},\overline{V}_{h+1}^{\reff,\ell_i})$. Recall (\ref{eqn:q-chain-6:+1}) that
\begin{align*}
\overline{\nu}^\reff-\frac{1}{n_h^k}\sum_{i=1}^{n_h^k} \mathbb{V}(P_{s_h^k,a_h^k,b_h^k,h},\overline{V}_{h+1}^{\reff,\ell_i})=-\frac{1}{n_h^k}(\chi_6+\chi_7+\chi_8),
\end{align*}
where
\begin{align*}
    &\chi_6=\sum_{i=1}^{n_h^k}\left((P_{s_h^k,a_h^k,b_h^k,h}(\overline{V}_{h+1}^{\reff,\ell_i})^2-(\overline{V}_{h+1}^{\reff,\ell_i}(s_{h+1}^{\ell_i}))^2\right), \\
    &\chi_7=\frac{1}{n_h^k}\left(\sum_{i=1}^{n_h^k} \overline{V}_{h+1}^{\reff,\ell_i}(s_{h+1}^{\ell_i})\right)^2-\frac{1}{n_h^k}\left(\sum_{i=1}^{n_h^k} P_{s_h^k,a_h^k,b_h^k,h} \overline{V}_{h+1}^{\reff,\ell_i}\right)^2 ,\\
    &\chi_8=\frac{1}{n_h^k}\left(\sum_{i=1}^{n_h^k} P_{s_h^k,a_h^k,b_h^k,h} \overline{V}_{h+1}^{\reff,\ell_i}\right)^2-\sum_{i=1}^{n_h^k}(P_{s_h^k,a_h^k,b_h^k,h}\overline{V}_{h+1}^{\reff,\ell_i})^2    .
\end{align*}
By Azuma's inequality, %(see Lemma~\ref{app-lemma:q-chain:1}), \yl{do you mean Lemma C.1 here?} \fst{no}
with probability at least $1-2\delta$, it holds that
\begin{align*}
&|\chi_6|\leq H^2\sqrt{2n_h^k\iota}, \quad |\chi_7|\leq 2H^2\sqrt{2n_h^k\iota}.
\end{align*}
Moreover, we have
\begin{align}
-\chi_8&=\sum_{i=1}^{n_h^k}\left(P_{s_h^k,a_h^k,b_h^k,h}\overline{V}_{h+1}^{\reff,\ell_i}\right)^2 -\frac{1}{n_h^k}\left(\sum_{i=1}^{n_h^k}P_{s_h^k,a_h^k,b_h^k,h}\overline{V}_{h+1}^{\reff,\ell_i}\right)^2 \nonumber \\
&\leq \sum_{i=1}^{n_h^k}\left(P_{s_h^k,a_h^k,b_h^k,h}\overline{V}_{h+1}^{\reff,\ell_i}\right)^2 -\frac{1}{n_h^k}\left(\sum_{i=1}^{n_h^k}P_{s_h^k,a_h^k,b_h^k,h}\overline{V}_{h+1}^{\REFF}\right)^2 
\label{eqn:lambda-3-1-1}  \\
&=\sum_{i=1}^{n_h^k}\left(\left(P_{s_h^k,a_h^k,b_h^k,h}\overline{V}_{h+1}^{\reff,\ell_i}\right)^2-\left(P_{s_h^k,a_h^k,b_h^k,h}\overline{V}_{h+1}^{\REFF}\right)^2\right) \nonumber \\
&\leq 2H^2\sum_{i=1}^{n_h^k}P_{s_h^k,a_h^k,b_h^k,h}\lambda_{h+1}^{\ell_i} \nonumber \\
&=2H^2\left(\sum_{i=1}^{n_h^k}\lambda_{h+1}^{\ell_i}(s_{h+1}^{\ell_i})+\sum_{i=1}^{n_h^k}(P_{s_h^k,a_h^k,b_h^k,h}-\lv_{s_{h+1}^{\ell_i}})\lambda_{h+1}^{\ell_i}\right) \nonumber \\
&\leq 2H^2SN_0+2H^2\sqrt{2n_h^k\iota}, \label{eqn:lambda-3-1-2}
\end{align}
where (\ref{eqn:lambda-3-1-1}) follows from the fact that $\overline{V}_{h+1}^{\reff,k}\geq \overline{V}_{h+1}^{\REFF}$ for any $k,h$, and (\ref{eqn:lambda-3-1-2}) holds with probability at least $1-\delta$ by Azuma's inequality.

We have
\begin{align}
\overline{\nu}_h^{\reff,k}-\frac{1}{n_h^k}\sum_{i=1}^{n_h^k}\mathbb{V}(P_{s_h^k,a_h^k,b_h^k,h},\overline{V}_{h+1}^{\reff,\ell_i})\leq \frac{2H^2SN_0}{n_h^k}+8H^2\sqrt{\frac{\iota}{n_h^k}}. \label{eqn:lambda-3-1-3}
\end{align}

Therefore, 
\begin{align}
&\overline{\nu}_h^{\reff,k}-\mathbb{V}(P_{s_h^k,a_h^k,b_h^k,h},V_{h+1}^{\pi^k}) \nonumber \\
&=\frac{1}{n_h^k}\sum_{i=1}^{n_h^k}\left(\mathbb{V}(P_{s_h^k,a_h^k,b_h^k,h},\overline{V}_{h+1}^{\reff,\ell_i})-\mathbb{V}(P_{s_h^k,a_h^k,b_h^k,h},V_{h+1}^{\pi^k})\right) + \left(\overline{\nu}_h^{\reff,k}-\frac{1}{n_h^k}\sum_{i=1}^{n_h^k}\mathbb{V}(P_{s_h^k,a_h^k,b_h^k,h},\overline{V}_{h+1}^{\reff,\ell_i})\right) \nonumber \\
&\leq \frac{1}{n_h^k}\sum_{i=1}^{n_h^k}\left(\mathbb{V}(P_{s_h^k,a_h^k,b_h^k,h},\overline{V}_{h+1}^{\reff,\ell_i})-\mathbb{V}(P_{s_h^k,a_h^k,b_h^k,h},V_{h+1}^{\pi^k})\right) +\frac{2H^2SN_0}{n_h^k}+8H^2\sqrt{\frac{\iota}{n_h^k}} \label{eqn:lambda-3-1-4} \\
&\leq \frac{4H}{n_h^k}\sum_{i=1}^{n_h^k}\left|P_{s_h^k,a_h^k,b_h^k,h}(\overline{V}_{h+1}^{\reff,\ell_i}-V_{h+1}^{\pi^k})\right|+\frac{2H^2SN_0}{n_h^k}+8H^2\sqrt{\frac{\iota}{n_h^k}} \nonumber \\
&= \frac{4H}{n_h^k}\sum_{i=1}^{n_h^k}\left|P_{s_h^k,a_h^k,b_h^k,h}(\overline{V}_{h+1}^{\reff,\ell_i}-V_{h+1}^{\pi^k}+V_{h+1}^*-V_{h+1}^*)-H\left(\beta+\frac{HSN_0}{\check{n}_h^k}\right)+H\left(\beta+\frac{HSN_0}{\check{n}_h^k}\right)\right| \nonumber \\
&\qquad\qquad\qquad\qquad\qquad\qquad +\frac{2H^2SN_0}{n_h^k}+8H^2\sqrt{\frac{\iota}{n_h^k}} \nonumber \\
&\leq \frac{4H}{n_h^k}\sum_{i=1}^{n_h^k}P_{s_h^k,a_h^k,b_h^k,h}(\overline{V}_{h+1}^{\reff,\ell_i}-V_{h+1}^*)+\frac{4H}{n_h^k}\sum_{i=1}^{n_h^k}P_{s_h^k,a_h^k,b_h^k,h}\left(V_{h+1}^{\pi^k}-V_{h+1}^*+H\left(\beta+\frac{HSN_0}{\check{n}_h^k}\right)\right) \nonumber \\
&\qquad\qquad\qquad\qquad\qquad\qquad +\frac{4H^2}{n_h^k}\left(\beta+\frac{HSN_0}{\check{n}_h^k}\right)+\frac{2H^2SN_0}{n_h^k}+8H^2\sqrt{\frac{\iota}{n_h^k}}  \label{eqn:lambda-3-1-5} \\
&\leq \frac{4H}{n_h^k}\sum_{i=1}^{n_h^k}(\overline{V}_{h+1}^{\reff,\ell_i}-V_{h+1}^*)(s_{h+1}^{\ell_i})+\frac{4H}{n_h^k}\sum_{i=1}^{n_h^k}\left((V_{h+1}^{\pi^k}-V_{h+1}^*)(s_{h+1}^{\ell_i})+H\left(\beta+\frac{HSN_0}{\check{n}_h^k}\right)\right) \nonumber \\
&\qquad\qquad\qquad\qquad\qquad\qquad +\frac{4H^2}{n_h^k}\left(\beta+\frac{HSN_0}{\check{n}_h^k}\right)+\frac{2H^2SN_0}{n_h^k}+20H^2\sqrt{\frac{\iota}{n_h^k}}  \label{eqn:lambda-3-1-6} \\
&\leq \left(4H\beta+\frac{4H^2SN_0}{n_h^k}\right)+\frac{8H^2}{n_h^k}\left(\beta+\frac{HSN_0}{\check{n}_h^k}\right) \nonumber \\
&\qquad\qquad\qquad\qquad\qquad\qquad +\frac{4H^2}{n_h^k}\left(\beta+\frac{HSN_0}{\check{n}_h^k}\right)+\frac{2H^2SN_0}{n_h^k}+20H^2\sqrt{\frac{\iota}{n_h^k}}  \label{eqn:lambda-3-1-7} \\
&=4H\beta+\frac{12H^2\beta}{n_h^k}+20H^2\sqrt{\frac{\iota}{n_h^k}}+\frac{6H^2SN_0}{n_h^k}+\frac{12H^3SN_0}{n_h^k\check{n}_h^k} \nonumber \\
&\leq 4H\beta+\frac{12H^2\beta+18H^3SN_0}{n_h^k}+20H^2\sqrt{\frac{\iota}{n_h^k}}, \nonumber 
\end{align}
where (\ref{eqn:lambda-3-1-4}) follows from (\ref{eqn:lambda-3-1-3}), (\ref{eqn:lambda-3-1-5}) follows from Lemma~\ref{app-prop:Q-chain} and Lemma~\ref{lemma:lambda:technical-1}, (\ref{eqn:lambda-3-1-6}) holds with probability at least $1-2\delta$ by Azuma's inequality, and (\ref{eqn:lambda-3-1-7}) follows from Lemma~\ref{app-prop:Q-chain} and Lemma~\ref{lemma:lambda:technical-1}.

\subsection{Term $T_4$}
The proof is similar to that for the term $\sum_{h=1}^H\sum_{k=1}^{K}(1+\frac{1}{H})^{h-1}\overline{\beta}_h^k$. In the following, we will present the key steps, and provide the proof whenever necessary.

By Lemma~\ref{lemma:supp:bonus_sum}, we have
\begin{align}
&\sum_{h=1}^H\sum_{k=1}^{K}(1+\frac{1}{H})^{h-1}\overline{\beta}_h^k \nonumber \\
&\leq 3\sum_{h=1}^H\sum_{k=1}^{K}\left(c_1\sqrt{\frac{\overline{\nu}_h^{\reff,k}}{n_h^k}\iota}+c_2\sqrt{\frac{\check{\overline{\nu}}_h^{k}}{\check{n}_h^k}\iota}+c_3\left(\frac{H\iota}{n_h^k}+\frac{H\iota}{\check{n}_h^k}+\frac{H\iota^{\frac{3}{4}}}{(n_h^k)^{\frac{3}{4}}}+\frac{H\iota^{\frac{3}{4}}}{(\check{n}_h^k)^{\frac{3}{4}}}\right)\right) \nonumber \\
&\leq O\left(\sum_{h=1}^H\sum_{k=1}^{K}\left(\sqrt{\frac{\overline{\nu}_h^{\reff,k}}{n_h^k}\iota}+\sqrt{\frac{\check{\overline{\nu}}_h^{k}}{\check{n}_h^k}\iota}\right)\right) + O(SABH^3\iota\log T +(SAB\iota)^{\frac{3}{4}}H^{\frac{5}{2}}T^{\frac{1}{4}}). \label{eqn:lambda-4-1}
\end{align}

{\bf Step i:} Bound term $\sum_{h=1}^H\sum_{k=1}^{K}\sqrt{\frac{\underline{\nu}_h^{\reff,k}}{n_h^k}\iota}$.

\begin{lemma}\label{app-lemma:lambda:4-1}
Conditioned on the successful event of Lemma~\ref{lemma:lambda:technical-1}, with probability at least $1-4\delta$, it holds that
\begin{align*}
\underline{\nu}_h^{\reff,k}-\mathbb{V}(P_{s_h^k,a_h^k,b_h^k,h},V_{h+1}^{\pi_k})\leq 4H\beta+\frac{12H^2\beta+18H^3SN_0}{n_h^k}+20H^2\sqrt{\frac{\iota}{n_h^k}}.
\end{align*}
\end{lemma}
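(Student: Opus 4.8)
The plan is to run the argument of Lemma~\ref{app-lemma:lambda:3-1} in its pessimistic mirror image: the right-hand side of Lemma~\ref{app-lemma:lambda:4-1} is literally the same as in the optimistic case, so every optimistic object is replaced by its pessimistic counterpart and every monotonicity is reversed. In particular $\underline{V}_{h+1}^{\reff,u}(s)$ is \emph{non-decreasing} in $u$ (it is initialised to $0$ and overwritten at most once), so $\underline{V}_{h+1}^{\reff,\ell_i}\le \underline{V}_{h+1}^{\REFF}$, and Lemma~\ref{app-prop:Q-chain} gives $\underline{V}_{h+1}^{\reff,\ell_i}\le V_{h+1}^*$ instead of $\overline{V}_{h+1}^{\reff,\ell_i}\ge V_{h+1}^*$. (I also use the corrected definition $\underline{\nu}_h^{\reff,k}=\underline{\sigma}_h^{\reff,k}/n_h^k-(\underline{\mu}_h^{\reff,k}/n_h^k)^2$, matching accumulator~(\ref{eqn:accu-5}); the numerator in Section~\ref{section:frequently_used_notations} is a typo.)

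First I would control $\underline{\nu}_h^{\reff,k}-\tfrac1{n_h^k}\sum_{i=1}^{n_h^k}\mathbb{V}\bigl(P_{s_h^k,a_h^k,b_h^k,h},\underline{V}_{h+1}^{\reff,\ell_i}\bigr)$ by the decomposition of~(\ref{eqn:q-chain-7:pess:+1}), writing it as $-\tfrac1{n_h^k}(\underline{\chi}_6+\underline{\chi}_7+\underline{\chi}_8)$, where $\underline{\chi}_6,\underline{\chi}_7$ are martingale sums bounded by $H^2\sqrt{2n_h^k\iota}$ and $2H^2\sqrt{2n_h^k\iota}$ respectively via Azuma. For $-\underline{\chi}_8$, the monotonicity $\underline{V}_{h+1}^{\reff,\ell_i}\le\underline{V}_{h+1}^{\REFF}$ and the elementary bound $(\underline{V}_{h+1}^{\REFF}-\underline{V}_{h+1}^{\reff,\ell_i})(s)\le H\lambda_{h+1}^{\ell_i}(s)$ yield $-\underline{\chi}_8\le 2H^2\sum_i P_{s_h^k,a_h^k,b_h^k,h}\lambda_{h+1}^{\ell_i}$, and splitting off the martingale $\sum_i(P-\lv_{s_{h+1}^{\ell_i}})\lambda_{h+1}^{\ell_i}$ together with $\sum_i\lambda_{h+1}^{\ell_i}(s_{h+1}^{\ell_i})\le SN_0$ gives $-\underline{\chi}_8\le 2H^2SN_0+2H^2\sqrt{2n_h^k\iota}$. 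This reproduces the pessimistic analogue of~(\ref{eqn:lambda-3-1-3}):
\[
\underline{\nu}_h^{\reff,k}-\frac1{n_h^k}\sum_{i=1}^{n_h^k}\mathbb{V}\bigl(P_{s_h^k,a_h^k,b_h^k,h},\underline{V}_{h+1}^{\reff,\ell_i}\bigr)\le \frac{2H^2SN_0}{n_h^k}+8H^2\sqrt{\frac{\iota}{n_h^k}}.
\]

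Second, I would pass from $\tfrac1{n_h^k}\sum_i\mathbb{V}(P,\underline{V}_{h+1}^{\reff,\ell_i})$ to $\mathbb{V}(P,V_{h+1}^{\pi^k})$ using $|\mathbb{V}(P,f)-\mathbb{V}(P,g)|\le 4H\,|P(f-g)|$ for $f,g\in[0,H]$, reducing the task to bounding $\tfrac{4H}{n_h^k}\sum_i|P_{s_h^k,a_h^k,b_h^k,h}(\underline{V}_{h+1}^{\reff,\ell_i}-V_{h+1}^{\pi^k})|$. Inside the absolute value I insert $\pm V_{h+1}^*$ and $\pm H(\beta+HSN_0/\check{n}_h^k)$ and apply the triangle inequality exactly as in the chain ending at~(\ref{eqn:lambda-3-1-7}): the piece $|P(V_{h+1}^*-\underline{V}_{h+1}^{\reff,\ell_i})|$ is bounded because $\underline{V}_{h+1}^{\reff,\ell_i}\le V_{h+1}^*$ with gap $\le H\lambda_{h+1}^{\ell_i}$; the piece $P(V_{h+1}^{\pi^k}-V_{h+1}^*)+H(\beta+HSN_0/\check{n}_h^k)\ge 0$, so its absolute value is removed and the empirical-versus-population discrepancy is an Azuma term; and the pessimistic half of Lemma~\ref{lemma:lambda:technical-1} supplies $|\underline{V}_{h+1}^k-V_{h+1}^{\pi^k}|\le (H-h)(\beta+HSN_0/\check{n}_h^k)$ to close the last gap. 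Adding the two parts and aggregating constants gives $4H\beta+(12H^2\beta+18H^3SN_0)/n_h^k+20H^2\sqrt{\iota/n_h^k}$, and a union bound over the $O(1)$ Azuma events yields probability at least $1-4\delta$.

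The only real obstacle is bookkeeping rather than a new idea: I must make sure the sign reversals are propagated consistently (pessimistic reference below $V^*$ and non-decreasing, versus optimistic above and non-increasing) so that the error terms still collapse to the stated constants $12$ and $18$. I expect the $\pm H(\beta+HSN_0/\check{n}_h^k)$ insertion step to be the delicate one, since it is exactly where Lemma~\ref{app-prop:Q-chain} and the pessimistic branch of Lemma~\ref{lemma:lambda:technical-1} must be combined in the correct order; everything else is a verbatim transcription of the optimistic proof.
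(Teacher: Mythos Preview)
Your proposal is correct and follows the same approach as the paper. The one place to watch is the bound on $-\underline{\chi}_8$: a literal mirror of the optimistic step (replacing $\underline{V}_{h+1}^{\reff,\ell_i}$ by $\underline{V}_{h+1}^{\REFF}$ in the \emph{subtracted} term) goes the wrong way because here $\underline{V}^{\reff,\ell_i}\le\underline{V}^{\REFF}$; the paper instead upper-bounds the \emph{first} term via $\sum_i(P\underline{V}^{\reff,\ell_i})^2\le\sum_i(P\underline{V}^{\REFF})^2=\tfrac{1}{n_h^k}\bigl(\sum_i P\underline{V}^{\REFF}\bigr)^2$ and then bounds the resulting difference of squared sums, which yields exactly the $2H^2\sum_i P\lambda_{h+1}^{\ell_i}$ you state. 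Everything after that is indeed the mirror image you describe.
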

The proof is provided in Appendix~\ref{sec:lambda:4-1}.

Combining Lemma~\ref{app-lemma:lambda:3-2}, Lemma~\ref{app-lemma:lambda:4-1} and Lemma~\ref{lemma:supp:bonus_sum}, we have
\begin{align}
&\sum_{h=1}^H\sum_{k=1}^{K}\sqrt{\frac{\underline{\nu}_h^{\reff,k}}{n_h^k}\iota} \nonumber \\
&\leq O\left(\sqrt{SABH^2T\iota}+\sqrt{SABH^2\beta T\iota}+(S^{\frac{3}{2}}ABH^{\frac{5}{2}}N_0^{\frac{1}{2}}+SABH^2\beta^{\frac{1}{2}})\iota^{\frac{1}{2}}\log T+(SAB\iota)^{\frac{3}{4}}H^{\frac{7}{4}}T^{\frac{1}{4}}\right). \label{app-lemma:lambda:4-3}
\end{align}

{\bf Step ii:} Bound $\sum_{h=1}^H\sum_{k=1}^{K}\sqrt{\frac{\check{\underline{\nu}}_h^{k}}{\check{n}_h^k}\iota}$.
By Lemma~\ref{app-prop:Q-chain}, Lemma~\ref{app-lemma:reference} and Corollary~\ref{app-coro:reference}, we have
\begin{align*}
\check{\underline{\nu}}_h^k&\leq \frac{1}{\check{n}_h^k}\sum_{i=1}^{\check{n}_h^{k}}\left(\underline{V}_{h+1}^{\check{\ell}_i}-\underline{V}_{h+1}^{\reff,\check{\ell}_i}\right)^2(s_{h+1}^{\check{\ell}_i})   \\
&\leq \frac{1}{\check{n}_h^k}\sum_{i=1}^{\check{n}_h^{k}}\left(\overline{V}_{h+1}^{\check{\ell}_i}-\underline{V}_{h+1}^{\check{\ell}_i}\right)^2(s_{h+1}^{\check{\ell}_i}) +\frac{1}{\check{n}_h^k}\sum_{i=1}^{\check{n}_h^{k}}\left(\overline{V}_{h+1}^{\reff,\check{\ell}_i}-\underline{V}_{h+1}^{\reff,\check{\ell}_i}\right)^2(s_{h+1}^{\check{\ell}_i})  \\
&\leq \frac{2}{\check{n}_h^k}H^2SN_0+2\beta^2.
\end{align*}
Combining the above inequality with Lemma~\ref{lemma:supp:bonus_sum}, we obtain
\begin{align}
\sum_{h=1}^H\sum_{k=1}^K \sqrt{\frac{\check{\underline{\nu}}_h^k\iota}{\check{n}_h^k}}\leq \mathcal{O}\left(\sqrt{SABH^3\beta^2 T\iota}+SABH^3\sqrt{SN_0\iota}\log T\right). \label{app-lemma:lambda:4-4}
\end{align}

Therefore, combining (\ref{eqn:lambda-4-1}), (\ref{app-lemma:lambda:4-3}) and (\ref{app-lemma:lambda:4-4}) gives that with probability at least $1-O(T)\delta$, it holds that
\begin{align}
&\sum_{h=1}^H\sum_{k=1}^{K}(1+\frac{1}{H})^{h-1}\overline{\beta}_h^k\leq O\big(\sqrt{SABH^2T\iota}+ \sqrt{SABH^2\beta T\iota}+\sqrt{SABH^3\beta^2 T\iota} \nonumber \\
&\qquad \qquad +S^{\frac{3}{2}}ABH^3N_0^{\frac{1}{2}}\iota \log T +SABH^2\beta^{\frac{1}{2}}\iota^{\frac{1}{2}}\log T+(SAB\iota)^{\frac{3}{4}}H^{\frac{5}{2}}T^{\frac{1}{4}}\big). \label{eqn:app-lemma:lambda-4}
\end{align}

\subsubsection{Proof of Lemma~\ref{app-lemma:lambda:4-1}}\label{sec:lambda:4-1}
Recall (\ref{eqn:q-chain-7:pess:+1}) that
\begin{align*}
\underline{\nu}^\reff-\frac{1}{n_h^k}\sum_{i=1}^{n_h^k} \mathbb{V}(P_{s_h^k,a_h^k,b_h^k,h},\underline{V}_{h+1}^{\reff,\ell_i})=-\frac{1}{n_h^k}(\underline{\chi}_6+\underline{\chi}_7+\underline{\chi}_8),
\end{align*}
where
\begin{align*}
    &\underline{\chi}_6=\sum_{i=1}^{n_h^k}\left((P_{s_h^k,a_h^k,b_h^k,h}(\underline{V}_{h+1}^{\reff,\ell_i})^2-(\underline{V}_{h+1}^{\reff,\ell_i}(s_{h+1}^{\ell_i}))^2\right), \\
    &\underline{\chi}_7=\frac{1}{n_h^k}\left(\sum_{i=1}^{n_h^k} \underline{V}_{h+1}^{\reff,\ell_i}(s_{h+1}^{\ell_i})\right)^2-\frac{1}{n_h^k}\left(\sum_{i=1}^{n_h^k} P_{s_h^k,a_h^k,b_h^k,h} \underline{V}_{h+1}^{\reff,\ell_i}\right)^2 ,\\
    &\underline{\chi}_8=\frac{1}{n_h^k}\left(\sum_{i=1}^{n_h^k} P_{s_h^k,a_h^k,b_h^k,h} \underline{V}_{h+1}^{\reff,\ell_i}\right)^2-\sum_{i=1}^{n_h^k}(P_{s_h^k,a_h^k,b_h^k,h}\underline{V}_{h+1}^{\reff,\ell_i})^2    .
\end{align*}
By Azuma's inequality, %(see Lemma~\ref{app-lemma:q-chain:1:pess}), \yl{do you mean Lemma C.3?} 
with probability at least $1-2\delta$, it holds that
\begin{align*}
&|\underline{\chi}_6|\leq H^2\sqrt{2n_h^k\iota}, \quad |\underline{\chi}_7|\leq 2H^2\sqrt{2n_h^k\iota}.
\end{align*}
The term $\underline{\chi}_8$ is bounded slightly differently from $\chi_8$ as follows:
\begin{align}
-\underline{\chi}_8&=\sum_{i=1}^{n_h^k}\left(P_{s_h^k,a_h^k,b_h^k,h}\underline{V}_{h+1}^{\reff,\ell_i}\right)^2 -\frac{1}{n_h^k}\left(\sum_{i=1}^{n_h^k}P_{s_h^k,a_h^k,b_h^k,h}\underline{V}_{h+1}^{\reff,\ell_i}\right)^2 \nonumber \\
&\leq \sum_{i=1}^{n_h^k}\left(P_{s_h^k,a_h^k,b_h^k,h}\underline{V}_{h+1}^{\REFF}\right)^2 -\frac{1}{n_h^k}\left(\sum_{i=1}^{n_h^k}P_{s_h^k,a_h^k,b_h^k,h}\underline{V}_{h+1}^{\reff,\ell_i}\right)^2 
\label{eqn:lambda-4-1-1}  \\
&=\frac{1}{n_h^k}\left(\sum_{i=1}^{n_h^k}P_{s_h^k,a_h^k,b_h^k,h}\underline{V}_{h+1}^{\REFF}\right)^2 -\frac{1}{n_h^k}\left(\sum_{i=1}^{n_h^k}P_{s_h^k,a_h^k,b_h^k,h}\underline{V}_{h+1}^{\reff,\ell_i}\right)^2 
\nonumber   \\
&\leq 2H^2\sum_{i=1}^{n_h^k}P_{s_h^k,a_h^k,b_h^k,h}\lambda_{h+1}^{\ell_i} \nonumber \\
&=2H^2\left(\sum_{i=1}^{n_h^k}\lambda_{h+1}^{\ell_i}(s_{h+1}^{\ell_i})+\sum_{i=1}^{n_h^k}(P_{s_h^k,a_h^k,b_h^k,h}-\lv_{s_{h+1}^{\ell_i}})\lambda_{h+1}^{\ell_i}\right) \nonumber \\
&\leq 2H^2SN_0+2H^2\sqrt{2n_h^k\iota}, \label{eqn:lambda-4-1-2}
\end{align}
where (\ref{eqn:lambda-4-1-1}) follows from the fact that $\underline{V}_{h+1}^{\reff,k}\leq \underline{V}_{h+1}^{\REFF}$ for any $k,h$, and (\ref{eqn:lambda-4-1-2}) holds with probability at least $1-\delta$ due to Azuma's inequality. Therefore,
\begin{align}
\underline{\nu}_h^{\reff,k}-\frac{1}{n_h^k}\sum_{i=1}^{n_h^k}\mathbb{V}(P_{s_h^k,a_h^k,b_h^k,h},\underline{V}_{h+1}^{\reff,\ell_i})\leq \frac{2H^2SN_0}{n_h^k}+8H^2\sqrt{\frac{\iota}{n_h^k}}. \label{eqn:lambda-4-1-3}
\end{align}
By a similar argument as in Appendix~\ref{app:lambda:3-1}, we can obtain the desired result
\begin{align}
&\underline{\nu}_h^{\reff,k}-\mathbb{V}(P_{s_h^k,a_h^k,b_h^k,h},V_{h+1}^{\pi^k}) \nonumber \\
&=\frac{1}{n_h^k}\sum_{i=1}^{n_h^k}\left(\mathbb{V}(P_{s_h^k,a_h^k,b_h^k,h},\underline{V}_{h+1}^{\reff,\ell_i})-\mathbb{V}(P_{s_h^k,a_h^k,b_h^k,h},V_{h+1}^{\pi^k})\right)  + \left(\underline{\nu}_h^{\reff,k}-\frac{1}{n_h^k}\sum_{i=1}^{n_h^k}\mathbb{V}(P_{s_h^k,a_h^k,b_h^k,h},\underline{V}_{h+1}^{\reff,\ell_i})\right) \nonumber \\
&\leq \frac{1}{n_h^k}\sum_{i=1}^{n_h^k}\left(\mathbb{V}(P_{s_h^k,a_h^k,b_h^k,h},\underline{V}_{h+1}^{\reff,\ell_i})-\mathbb{V}(P_{s_h^k,a_h^k,b_h^k,h},V_{h+1}^{\pi^k})\right) +\frac{2H^2SN_0}{n_h^k}+8H^2\sqrt{\frac{\iota}{n_h^k}} \label{eqn:lambda-4-1-4} \\
&\leq \frac{4H}{n_h^k}\sum_{i=1}^{n_h^k}\left|P_{s_h^k,a_h^k,b_h^k,h}(\underline{V}_{h+1}^{\reff,\ell_i}-V_{h+1}^{\pi^k})\right|+\frac{2H^2SN_0}{n_h^k}+8H^2\sqrt{\frac{\iota}{n_h^k}} \nonumber \\
&= \frac{4H}{n_h^k}\sum_{i=1}^{n_h^k}\left|P_{s_h^k,a_h^k,b_h^k,h}(\underline{V}_{h+1}^{\reff,\ell_i}-V_{h+1}^{\pi^k}+V_{h+1}^*-V_{h+1}^*)-H\left(\beta+\frac{HSN_0}{\check{n}_h^k}\right)+H\left(\beta+\frac{HSN_0}{\check{n}_h^k}\right)\right| \nonumber \\
&\qquad\qquad\qquad\qquad\qquad\qquad +\frac{2H^2SN_0}{n_h^k}+8H^2\sqrt{\frac{\iota}{n_h^k}} \nonumber \\
&\leq \frac{4H}{n_h^k}\sum_{i=1}^{n_h^k}P_{s_h^k,a_h^k,b_h^k,h}(V_{h+1}^*-\underline{V}_{h+1}^{\reff,\ell_i})+\frac{4H}{n_h^k}\sum_{i=1}^{n_h^k}P_{s_h^k,a_h^k,b_h^k,h}\left(V_{h+1}^*-V_{h+1}^{\pi^k}+H\left(\beta+\frac{HSN_0}{\check{n}_h^k}\right)\right) \nonumber \\
&\qquad\qquad\qquad\qquad\qquad\qquad +\frac{4H^2}{n_h^k}\left(\beta+\frac{HSN_0}{\check{n}_h^k}\right)+\frac{2H^2SN_0}{n_h^k}+8H^2\sqrt{\frac{\iota}{n_h^k}}  \label{eqn:lambda-4-1-5} \\
&\leq \frac{4H}{n_h^k}\sum_{i=1}^{n_h^k}(V_{h+1}^*-\underline{V}_{h+1}^{\reff,\ell_i})(s_{h+1}^{\ell_i})+\frac{4H}{n_h^k}\sum_{i=1}^{n_h^k}\left((V_{h+1}^*-V_{h+1}^{\pi^k})(s_{h+1}^{\ell_i})+H\left(\beta+\frac{HSN_0}{\check{n}_h^k}\right)\right) \nonumber \\
&\qquad\qquad\qquad\qquad\qquad\qquad +\frac{4H^2}{n_h^k}\left(\beta+\frac{HSN_0}{\check{n}_h^k}\right)+\frac{2H^2SN_0}{n_h^k}+20H^2\sqrt{\frac{\iota}{n_h^k}}  \label{eqn:lambda-4-1-6} \\
&\leq \left(4H\beta+\frac{4H^2SN_0}{n_h^k}\right)+\frac{8H^2}{n_h^k}\left(\beta+\frac{HSN_0}{\check{n}_h^k}\right) \nonumber \\
&\qquad\qquad\qquad\qquad\qquad\qquad +\frac{4H^2}{n_h^k}\left(\beta+\frac{HSN_0}{\check{n}_h^k}\right)+\frac{2H^2SN_0}{n_h^k}+20H^2\sqrt{\frac{\iota}{n_h^k}}  \label{eqn:lambda-4-1-7} \\
&=4H\beta+\frac{12H^2\beta}{n_h^k}+20H^2\sqrt{\frac{\iota}{n_h^k}}+\frac{6H^2SN_0}{n_h^k}+\frac{12H^3SN_0}{n_h^k\check{n}_h^k} \nonumber \\
&\leq 4H\beta+\frac{12H^2\beta+18H^3SN_0}{n_h^k}+20H^2\sqrt{\frac{\iota}{n_h^k}}, \nonumber 
\end{align}
where (\ref{eqn:lambda-4-1-4}) follows from (\ref{eqn:lambda-4-1-3}), (\ref{eqn:lambda-4-1-5}) follows from Lemma~\ref{app-prop:Q-chain} and Lemma~\ref{lemma:lambda:technical-1}, (\ref{eqn:lambda-4-1-6}) holds with probability at least $1-2\delta$ by Azuma's inequality, and (\ref{eqn:lambda-4-1-7}) follows from Lemma~\ref{app-prop:Q-chain} and Lemma~\ref{lemma:lambda:technical-1}.

\subsection{Summarizing Terms $T_1$-$T_4$ Together}\label{app-lemma:lambda-5}
Recall that $\beta=\frac{1}{\sqrt{H}}$, and $N_0=\frac{c_4SABH^5\iota}{\beta^2}=O(SABH^6\iota)$. By combining (\ref{eqn:lambda-0-1}), (\ref{eqn:app-lemma:lambda-1}), (\ref{eqn:app-lemma:lambda-2}), (\ref{eqn:app-lemma:lambda-3}) and (\ref{eqn:app-lemma:lambda-4}), we conclude that with probability at least $1-O(H^2T^4)\delta$, the following bound holds:
\begin{align}
&\sum_{h=1}^H\sum_{k=1}^{K}(1+\frac{1}{H})^{h-1}\Lambda_{h+1}^k  \nonumber \\
&\leq O(\log T)\cdot(H^2SN_0+H\sqrt{T\iota}) + O(H\sqrt{SABT\iota}) \nonumber \\
&\quad \qquad +O\big(\sqrt{SABH^2T\iota}+ \sqrt{SABH^2\beta T\iota}+\sqrt{SABH^3\beta^2 T\iota} \nonumber \\
&\qquad \qquad +S^{\frac{3}{2}}ABH^3N_0^{\frac{1}{2}}\iota \log T +SABH^2\beta^{\frac{1}{2}}\iota^{\frac{1}{2}}\log T+(SAB\iota)^{\frac{3}{4}}H^{\frac{5}{2}}T^{\frac{1}{4}}\big) \nonumber \\
&=O\left(\sqrt{SABH^2T\iota}+H\sqrt{T\iota}\log T+ (SAB\iota)^{\frac{3}{4}}H^{\frac{5}{2}}T^{\frac{1}{4}}\right) \nonumber \\
&\qquad \qquad + O\left(\sqrt{SABH^2\beta T\iota}+\sqrt{SABH^3\beta^2 T\iota}+SABH^2\beta^{\frac{1}{2}}\iota^{\frac{1}{2}}\log T\right) \nonumber \\
&\qquad \qquad + O\left((H^2S N_0+S^{\frac{3}{2}}ABH^3N_0^{\frac{1}{2}}\iota)\log T\right) \nonumber \\
&=O\left(\sqrt{SABH^2T\iota}+H\sqrt{T\iota}\log T+ S^2(AB)^{\frac{3}{2}}H^8\iota^{\frac{3}{2}}T^{\frac{1}{4}}\right).
\end{align}

\section{Proof of Lemma~\ref{app-lemma:certify-0} (Final Step)}\label{app:certify}
Our construction of the correlated policy is inspired by the ``certified policies'' in \cite{Bai_Yu;Neurips:2020}.

Based on the trajectory of the distributions $\{\pi_h^k\}_{h\in[H],k\in[K]}$ specified by Algorithm~\ref{alg:1}, we construct a correlated policy $\widehat{\pi}_h^k=\widehat{\mu}_h^k\times \widehat{\nu}_h^k$ for each $(h,k)\in[H]\times[K]$. The max-player's policies $\widehat{\mu}_{h}^k$ and $\widehat{\mu}_{h+1}^k[s,a,b]$ are defined in Algorithm~\ref{Alg:3}, and the min-player's policies can be defined similarly. Further, we define the final output policy $\pi^\out$ in Algorithm~\ref{alg:2}, which first uniformly samples an index $k$ from $[K]$, and then proceeds with $\widehat{\pi}_1^k$. We remark that based on Algorithm~\ref{Alg:3} and Algorithm~\ref{Alg:4}, the policies $\widehat{\mu}_h^k,\widehat{\nu}_h^k,\widehat{\mu}_{h+1}^k[s,a,b],\widehat{\nu}_{h+1}^k[s,a,b]$ do not depend on the history before step $h$. Therefore, the action-value functions are well-defined for the corresponding steps.

%\yl{Algorithm 4 is for what?} %\fst{used in proof}
\begin{algorithm}
\caption{Certified policy $\widehat{\mu}_h^k$ (max-player version)}
\begin{algorithmic}[1]\label{Alg:3}
\STATE{Initialize $k'\leftarrow k$.}
\FOR{step $h'\leftarrow h,h+1,\ldots,H$}
\STATE{Receive $s_{h'}$, and take action $a_{h'}\sim \mu_h^{k'}(\cdot|s_{h'})$}.
\STATE{Observe $b_{h'}$, and sample $j\leftarrow\mathrm{Unif}([N_{h'}^{k'}(s_{h'},a_{h'},b_{h'})])$}
\STATE{Set $k'\leftarrow \check{\ell}_{h',j}^{k'}$.}
\ENDFOR
\end{algorithmic}
\end{algorithm}

\begin{algorithm}
\caption{Policy $\widehat{\mu}_{h+1}^k[s,a,b]$ (max-player version)}
\begin{algorithmic}[1]\label{Alg:4}
\STATE{Sample $j\leftarrow \mathrm{Unif}([N_h^k(s,a,b)])$}
\STATE{$k'\leftarrow \check{\ell}_{h,j}^{k}$.}
\FOR{step $h'\leftarrow h+1,\ldots,H$}
\STATE{Receive $s_{h'}$, and take action $a_{h'}\sim \mu_h^{k'}(\cdot|s_{h'})$}.
\STATE{Observe $b_{h'}$, and sample $j\leftarrow\mathrm{Unif}([N_{h'}^{k'}(s_{h'},a_{h'},b_{h'})])$}
\STATE{Set $k'\leftarrow \check{\ell}_{h',j}^{k'}$.}
\ENDFOR
\end{algorithmic}
\end{algorithm}

In order to show Lemma~\ref{app-lemma:certify-0}, it suffices to show the following inequalities
\begin{align*}
&\overline{Q}_h^k(s,a,b)\geq Q_h^{\dagger,\widehat{\nu}_{h+1}^k[s,a,b]}(s,a,b), \quad \overline{V}_h^k(s)\geq V_h^{\dagger,\widehat{\nu}_h^k}(s), \\
&\underline{Q}_h^k(s,a,b)\geq Q_h^{\widehat{\mu}_{h+1}^k[s,a,b],\dagger}(s,a,b), \quad \underline{V}_h^k(s)\geq V_h^{\widehat{\mu}_h^k,\dagger}(s).
\end{align*}
due to the definition of output policy in Algorithm~\ref{Alg:2}. %In the sequel, we focus on showing the inequalities. \yl{this sentence doesn't seem to fit here}

Consider a fixed tuple $(s,a,b,h,k)$. Note that the result clearly holds for any $s,a,b$ that is in its first stage, due to our initialization of $\overline{Q}_h^k(s,a,b),\underline{Q}_h^k(s,a,b)$ and $\overline{V}_h^k(s),\underline{V}_h^k(s)$. In the following, we focus on the case where those values have been updated at least once before the $k$-th episode.

Our proof is based on induction on $k$. Note first that the claim clearly holds for $k=1$. For $k\geq 2$, assume the claim holds for all $u\in[1:k-1]$. If those values are not updated in the $k$-th episode, then the claim clearly holds.%\yl{for $k$th episode?} 
In the following, we consider the case where those values has just been updated. 

(I) We show $\overline{Q}_h^{k}(s,a,b)\geq Q_h^{\dagger,\widehat{\nu}_{h+1}^k[s,a,b]}(s,a,b)$.

Recall the update rule of the optimistic action-value function
\begin{align*}
&\overline{Q}_h(s,a,b)\leftarrow \min\left\{r_h(s,a,b)+\frac{\check{\overline{v}}}{\check{n}}+\gamma,r_h(s,a,b)+\frac{\overline{\mu}^{\reff}}{n}+\frac{\check{\overline{\mu}}}{\check{n}}+\overline{\beta},\overline{Q}_h^k(s,a,b)\right\}.
\end{align*}
Besides the last term, there are two non-trivial cases and we will show both of the first two terms are lower-bounded %\yl{lower-bounded?} 
by $Q_h^{\dagger,\widehat{\nu}_{h+1}^k[s,a,b]}(s,a,b)$.

For the first case, we have
\begin{align}
\overline{Q}_h^{k}(s,a,b)&=r_h(s,a,b)+\frac{1}{\check{n}_h^k}\sum_{i=1}^{\check{n}_h^k}\overline{V}_{h+1}^{\check{\ell}_i}(s_{h+1}^{\check{\ell}_i})+\gamma_h^k \nonumber \\
&\geq r_h(s,a,b)+\frac{1}{\check{n}_h^k}\sum_{i=1}^{\check{n}_h^k}\overline{V}_{h+1}^{\dagger,\widehat{\nu}_{h+1}^{\check{\ell}_i}}(s_{h+1}^{\check{\ell}_i})+\gamma_h^k \label{eqn:certify-1} \\
&\geq \frac{1}{\check{n}_h^k}\sum_{i=1}^{\check{n}_h^k}\overline{Q}_{h}^{\dagger,\widehat{\nu}_{h+1}^{\check{\ell}_i}}(s,a,b) \label{eqn:certify-2}\\
&\geq \sup_\mu \frac{1}{\check{n}_h^k}\sum_{i=1}^{\check{n}_h^k}\overline{Q}_{h}^{\mu,\widehat{\nu}_{h+1}^{\check{\ell}_i}}(s,a,b) \label{eqn:certify-3} \\
&\geq\overline{Q}_{h}^{\dagger,\widehat{\nu}_{h+1}^{k}[s,a,b]}(s,a,b), \label{eqn:certify-4}
\end{align}
where (\ref{eqn:certify-1}) follows from the induction hypothesis, (\ref{eqn:certify-2}) follows from the Azuma's inequality, (\ref{eqn:certify-3}) follows from the fact that taking the maximum out of the summation does not increase the sum, and (\ref{eqn:certify-4}) follows from the construction of policy $\widehat{\nu}_{h+1}^k[s,a,b]$ (obtained via the min-player's counterpart of Algorithm~\ref{Alg:4}).

For the second case,
\begin{align}
\overline{Q}_h^{k}(s,a,b)&=r_h(s,a,b)+\frac{1}{n_h^k}\sum_{i=1}^{n_h^k}\overline{V}_{h+1}^{\reff,\ell_i}(s_{h+1}^{\ell_i})+\frac{1}{\check{n}_h^k}\sum_{i=1}^{\check{n}_h^k}\left(\overline{V}_{h+1}^{\check{\ell}_i}-\overline{V}_{h+1}^{\reff,\check{\ell}_i}\right)(s_{h+1}^{\check{\ell}_i}) +\overline{\beta}_h^k \nonumber \\
&\geq r_h(s,a,b)+P_h\left(\frac{1}{\check{n}_h^k}\sum_{i=1}^{\check{n}_h^k}\overline{V}_{h+1}^{\check{\ell}_i}\right)(s,a,b) +\chi_1+\chi_2+\overline{\beta}_h^k \nonumber \\
&\geq r_h(s,a,b) + P_h\left(\frac{1}{\check{n}_h^k}\sum_{i=1}^{\check{n}_h^k}\overline{V}_{h+1}^{\dagger,\widehat{\nu}_{h+1}^{\check{\ell}_i}}\right)(s,a,b) \label{eqn:certify-5} \\
&=\frac{1}{\check{n}_h^k}\sum_{i=1}^{\check{n}_h^k}\overline{Q}_{h}^{\dagger,\widehat{\nu}_{h+1}^{\check{\ell}_i}}(s,a,b) \nonumber \\
&\geq \sup_\mu \frac{1}{\check{n}_h^k}\sum_{i=1}^{\check{n}_h^k}\overline{Q}_{h}^{\mu,\widehat{\nu}_{h+1}^{\check{\ell}_i}}(s,a,b) \label{eqn:certify-6} \\
&\geq\overline{Q}_{h}^{\dagger,\widehat{\nu}_{h+1}^{k}[s,a,b]}(s,a,b), \label{eqn:certify-7}
\end{align}
where 
\begin{align*}
&\chi_1(k,h)=\frac{1}{n}\sum_{i=1}^{n}\left(\overline{V}_{h}^{\reff,\ell_i}(s_{h+1}^{\ell_i})-\left(P_{h}\overline{V}_{h+1}^{\reff,\ell_i}\right)(s,a,b)\right), \\
&\overline{W}_{h+1}^\ell=\overline{V}_{h+1}^\ell-\overline{V}_{h+1}^{\reff,\ell} \\
&\chi_2(k,h)=\frac{1}{\check{n}}\sum_{i=1}^{\check{n}}\left(\overline{W}_{h+1}^{\check{\ell}_i}(s_{h+1}^{\check{\ell}_i})-\left(P_{h}\overline{W}_{h+1}^{\check{\ell}_i}\right)(s,a,b)\right).
\end{align*}
Here, (\ref{eqn:certify-5}) follows from the concentration result $\overline{\beta}\geq \chi_1+\chi_2$  (see (\ref{eqn:q-chain-9:+1})), (\ref{eqn:certify-6}) follows from the fact that taking the maximum out of summation does not increase the sum, and (\ref{eqn:certify-7}) follows from the construction of policy $\widehat{\nu}_{h+1}^k[s,a,b]$ (obtained via the min-player's counterpart of Algorithm~\ref{Alg:4}).

(II) We show $\overline{V}_h^{k+1}(s)\geq V_h^{\dagger,\widehat{\nu}_h^k}(s)$.

Note that
\begin{align*}
\overline{V}_h^{k}(s)&=(\Db_{\pi_h^k}\overline{Q}_h^{k})(s)\geq \sup_\mu (\Db_{\mu\times\nu_h^k}\overline{Q}_h^{k})(s) \\
&\geq \sup_\mu \Eb_{a\sim \mu,b\sim \nu_h^k}Q_h^{\dagger,\widehat{\nu}_{h+1}^k[s,a,b]}(s,a,b)=V_h^{\dagger,\check{\nu}_h^k}(s),
\end{align*}
where the first inequality follows from the property of the CCE oracle and the second inequality follows from the induction hypothesis.

The other side of bounds can be proved similarly for $\underline{Q}_h^k(s,a,b)$, $Q_h^{\check{\mu}_{h+1}^k[s,a,b],\dagger}(s,a,b)$, $\underline{V}_h^k(s)$, and $Q_h^{\check{\mu}_{h+1}^k,\dagger}(s)$.

\section{Supporting Lemmas}\label{app:supportlemma}
\begin{lemma}[Azuma-Hoefdding's inequality]\label{lemma:supp:azuma-hoeffding}
Suppose $\{X_k\}_{k\geq 0}$ is a martingale and $|X_k-X_{k-1}|\leq c_k$ almost surely. Then, for all positive integers $N$ and all positive $\epsilon$, it holds that
\begin{align*}
\Pb[|X_N-X_0|\geq \epsilon]\leq 2\exp\left(-\frac{\epsilon^2}{2\sum_{k=1}^N c_k^2}\right).
\end{align*}
\end{lemma}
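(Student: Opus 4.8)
The plan is to prove the inequality via the standard exponential-moment (Chernoff) method combined with a conditional moment-generating-function (MGF) bound, followed by a tensorization step that exploits the martingale structure. First I would reduce to a one-sided bound: it suffices to show $\Pb[X_N - X_0 \geq \epsilon] \leq \exp(-\epsilon^2/(2\sum_{k=1}^N c_k^2))$, since $\{-X_k\}_{k\geq 0}$ is also a martingale with the same increment bounds $|(-X_k) - (-X_{k-1})| \leq c_k$, and a union bound over the two tails yields the factor of $2$. Writing $\mathcal{F}_k$ for the underlying filtration and $D_k = X_k - X_{k-1}$, the martingale property gives $\Eb[D_k \mid \mathcal{F}_{k-1}] = 0$ and by hypothesis $|D_k| \leq c_k$ almost surely.

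The core estimate is Hoeffding's lemma in conditional form: for any random variable $D$ with $\Eb[D \mid \mathcal{F}_{k-1}] = 0$ and $|D| \leq c$, and any $\lambda > 0$, one has $\Eb[e^{\lambda D} \mid \mathcal{F}_{k-1}] \leq e^{\lambda^2 c^2 / 2}$. I would prove this by convexity: since $x \mapsto e^{\lambda x}$ is convex, for $x \in [-c, c]$ we can bound $e^{\lambda x} \leq \tfrac{c - x}{2c}e^{-\lambda c} + \tfrac{c + x}{2c}e^{\lambda c}$; taking the conditional expectation and using $\Eb[D\mid\mathcal{F}_{k-1}]=0$ annihilates the term linear in $x$, leaving the bound $\cosh(\lambda c)$, which is at most $e^{\lambda^2 c^2/2}$ by comparing Taylor series coefficient by coefficient.

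With Hoeffding's lemma in hand, I would apply the Chernoff bound $\Pb[X_N - X_0 \geq \epsilon] \leq e^{-\lambda \epsilon}\, \Eb[e^{\lambda(X_N - X_0)}]$ and peel off the increments one at a time using the tower property: conditioning on $\mathcal{F}_{N-1}$ and applying the lemma to $D_N$ gives $\Eb[e^{\lambda(X_N - X_0)}] = \Eb\big[e^{\lambda(X_{N-1}-X_0)}\, \Eb[e^{\lambda D_N} \mid \mathcal{F}_{N-1}]\big] \leq e^{\lambda^2 c_N^2/2}\, \Eb[e^{\lambda(X_{N-1}-X_0)}]$. Iterating this step down to $k=1$ yields $\Eb[e^{\lambda(X_N - X_0)}] \leq \exp\big(\tfrac{\lambda^2}{2}\sum_{k=1}^N c_k^2\big)$, whence $\Pb[X_N - X_0 \geq \epsilon] \leq \exp\big(-\lambda\epsilon + \tfrac{\lambda^2}{2}\sum_{k=1}^N c_k^2\big)$. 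Optimizing the free parameter by setting $\lambda = \epsilon / \sum_{k=1}^N c_k^2$ gives the one-sided bound, and combining it with the symmetric tail for $\{-X_k\}$ completes the proof.

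The main obstacle — such as it is — is Hoeffding's lemma, namely the conditional MGF estimate; everything else is a routine application of the Chernoff method and the tower property. The one subtlety to watch is that the MGF bound must be applied \emph{conditionally}, since the increments $D_k$ are not independent but only form a martingale difference sequence; consequently the peeling step must nest the conditional expectations from the outermost index inward, rather than factoring the expectation of a product of independent terms.
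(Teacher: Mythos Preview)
Your proof is correct and follows the standard textbook approach to Azuma--Hoeffding: reduce to a one-sided tail, apply the Chernoff method, use Hoeffding's lemma to bound the conditional MGF of each increment, peel off increments via the tower property, and optimize over $\lambda$. The paper itself does not prove this lemma; it is listed in the supporting-lemmas section as a known result and merely stated without argument, so there is nothing to compare against beyond noting that your derivation is the classical one.
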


\begin{lemma}[Lemma 10 in \cite{Zihan_Zhang:RL:tabular}]\label{lemma:supp:variance-1}
Let $\{M_n\}_{n\geq 0}$ be martingale such that $M_0=0$ and $|M_n-M_{n-1}|\leq c$ for some $c>0$ and any $n\geq 1$. Let $\mathrm{Var}_n=\sum_{k=1}^n \Eb[(M_k-M_{k-1})^2|\mathcal{F}_{k-1}]$ for $n\geq 0$, where $\mathcal{F}_k=\sigma(M_1,M_2,\ldots,M_k)$. Then for any positive integer $n$, and any $\epsilon,p>0$, we have
\begin{align*}
\Pb\left[|M_n|\geq 2\sqrt{\mathrm{Var}_n \log\frac{1}{p}}+2\sqrt{\epsilon\log\frac{1}{p}}+2c\log\frac{1}{p}\right]\leq \left(\frac{2nc^2}{\epsilon}+2\right)p.
\end{align*}
\end{lemma}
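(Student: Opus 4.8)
The statement is a Freedman-type (Bernstein-for-martingales) deviation bound in which the realized quadratic variation $\mathrm{Var}_n$ enters the threshold itself, and the parameter $\epsilon$ plays the role of the resolution of a peeling argument over the range of $\mathrm{Var}_n$. The plan is to combine a standard fixed-variance Freedman inequality with a stratification over the possible values of $\mathrm{Var}_n$.

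First I would establish the base estimate: for any deterministic $\sigma^2>0$ and $t>0$,
\begin{align*}
\Pb\left[M_n\ge t,\ \mathrm{Var}_n\le \sigma^2\right]\le \exp\left(-\frac{t^2}{2(\sigma^2+ct)}\right).
\end{align*}
This follows from the exponential-supermartingale method. Writing $X_k=M_k-M_{k-1}$, so that $\Eb[X_k\mid\mathcal{F}_{k-1}]=0$, $|X_k|\le c$, and $\Eb[X_k^2\mid\mathcal{F}_{k-1}]=V_k$ with $\mathrm{Var}_n=\sum_{k=1}^n V_k$, the elementary inequality $e^y\le 1+y+\tfrac{y^2}{2}\cdot\tfrac{1}{1-y/3}$ for $0\le y<3$ yields, for $0<\lambda<1/c$, the conditional moment-generating-function bound $\Eb[e^{\lambda X_k}\mid\mathcal{F}_{k-1}]\le \exp\big(\tfrac{\lambda^2 V_k}{2(1-\lambda c/3)}\big)$. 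Hence $Z_n=\exp\big(\lambda M_n-\tfrac{\lambda^2}{2(1-\lambda c/3)}\mathrm{Var}_n\big)$ is a supermartingale with $\Eb[Z_n]\le 1$. Since each $V_k$ is predictable, $\mathrm{Var}_n$ is $\mathcal{F}_{n-1}$-measurable, so on $\{\mathrm{Var}_n\le\sigma^2\}$ we have $Z_n\ge \exp\big(\lambda M_n-\tfrac{\lambda^2\sigma^2}{2(1-\lambda c/3)}\big)$; Markov's inequality followed by the usual Bernstein optimization over $\lambda$ gives the displayed bound (the constant $ct$ in the denominator is a convenient weakening of the sharp $ct/3$).

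Second, I would peel over the random variance. Because $V_k\le c^2$ we have $\mathrm{Var}_n\le nc^2$, so it suffices to partition $[0,nc^2]$ into the bins $A_j=\{j\epsilon\le \mathrm{Var}_n<(j+1)\epsilon\}$ for $j=0,1,\dots,\lfloor nc^2/\epsilon\rfloor$, of which there are at most $nc^2/\epsilon+1$. Write $L=\log(1/p)$. On $A_j$ one has $\mathrm{Var}_n\ge j\epsilon$, so the threshold satisfies
\begin{align*}
2\sqrt{\mathrm{Var}_n\,L}+2\sqrt{\epsilon L}+2cL\ \ge\ 2\sqrt{\epsilon L}\,(\sqrt{j}+1)+2cL\ \ge\ 2\sqrt{(j+1)\epsilon L}+2cL\ =:\ \tau_j,
\end{align*}
where the middle step uses the elementary bound $\sqrt{j}+1\ge\sqrt{j+1}$. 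Thus on $A_j$ the event in question is contained in $\{M_n\ge\tau_j,\ \mathrm{Var}_n\le(j+1)\epsilon\}$, to which the base bound applies with $\sigma^2=(j+1)\epsilon$. Expanding $\tau_j^2$ and comparing against $2L(\sigma^2+c\tau_j)$ gives $\tau_j^2-2L(\sigma^2+c\tau_j)=2\sigma^2 L+4cL\sqrt{\sigma^2 L}\ge 0$, so the base bound yields $\Pb[M_n\ge\tau_j,\ \mathrm{Var}_n\le(j+1)\epsilon]\le e^{-L}=p$. Summing over the at most $nc^2/\epsilon+1$ bins bounds $\Pb[M_n\ge\text{threshold}]$ by $(nc^2/\epsilon+1)p$; applying the identical argument to the martingale $-M_n$ and adding the two contributions produces the two-sided bound $\big(\tfrac{2nc^2}{\epsilon}+2\big)p$, as claimed.

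The routine parts are the exponential-supermartingale computation and the Bernstein optimization, both standard. The main obstacle — and the only genuinely delicate point — is arranging the peeling so that the extra $2\sqrt{\epsilon L}$ term in the threshold exactly compensates for replacing the realized $\mathrm{Var}_n$ by the bin's upper edge $(j+1)\epsilon$. This is precisely what $\sqrt{j}+1\ge\sqrt{j+1}$ provides, and it is what lets a \emph{linear} grid of width $\epsilon$ suffice, keeping the union-bound inflation at the advertised $O(nc^2/\epsilon)$ rather than forcing a coarser logarithmic grid with a worse constant.
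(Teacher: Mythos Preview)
The paper does not prove this lemma at all: it is listed in the supporting-lemmas appendix with the attribution ``Lemma~10 in \cite{Zihan_Zhang:RL:tabular}'' and is simply quoted without proof. So there is no paper proof to compare against.

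Your argument is correct and is the standard way this kind of self-normalized Freedman bound is established: fix-variance Freedman plus a linear peeling of the realized $\mathrm{Var}_n$ into bins of width $\epsilon$, with the extra $2\sqrt{\epsilon\log(1/p)}$ term in the threshold absorbing the slack between $\mathrm{Var}_n$ and the bin's upper edge via $\sqrt{j}+1\ge\sqrt{j+1}$. The verification $\tau_j^2-2L(\sigma^2+c\tau_j)=2\sigma^2L+4cL\sqrt{\sigma^2L}\ge 0$ is clean and gives exactly the required per-bin bound $p$; the union over at most $nc^2/\epsilon+1$ bins and the symmetry argument for $-M_n$ then deliver the stated $(2nc^2/\epsilon+2)p$. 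One minor quibble: the elementary inequality you invoke for the moment-generating-function step is stated only for $y\ge 0$, whereas $\lambda X_k$ can be negative; the usual fix is to use that $g(y)=2(e^y-1-y)/y^2$ is nondecreasing on all of $\Rb$, so $e^{\lambda X_k}\le 1+\lambda X_k+\tfrac{1}{2}(\lambda X_k)^2 g(\lambda c)$ holds for $\lambda X_k\in[-\lambda c,\lambda c]$. This does not affect the rest of the argument.
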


\begin{lemma}[Variant of Lemma 11 in \cite{Zihan_Zhang:RL:tabular}]\label{lemma:supp:bonus_sum}
For any $\alpha\in(0,1)$ and non-negative weights $\{w_h(s,a)\}_{s\in\Sc,a\in\Ac,b\in\Bc,h\in[H]}$, it holds that
\begin{align*}
&\sum_{k=1}^{K}\sum_{h=1}^{H}\frac{w_h(s_h^k,a_h^k,b_h^k)}{(n_h^k)^\alpha}\leq \frac{2^\alpha}{1-\alpha}\sum_{s,a,b,h}w_h(s,a,b)(N_h^{K+1}(s,a,b))^{1-\alpha}, \\
&\sum_{k=1}^{K}\sum_{h=1}^{H}\frac{w_h(s_h^k,a_h^k,b_h^k)}{(\check{n}_h^k)^\alpha}\leq \frac{2^{2\alpha}H^\alpha}{1-\alpha}\sum_{s,a,b,h}w_h(s,a,b)(N_h^{K+1}(s,a,b))^{1-\alpha}.
\end{align*}
In the case $\alpha=1$, it holds that
\begin{align*}
&\sum_{k=1}^{K}\sum_{h=1}^{H}\frac{w_h(s_h^k,a_h^k,b_h^k)}{n_h^k}\leq 2\sum_{s,a,b,h}w_h(s,a,b)\log(N_h^{K+1}(s,a,b)), \\
&\sum_{k=1}^{K}\sum_{h=1}^{H}\frac{w_h(s_h^k,a_h^k,b_h^k)}{\check{n}_h^k}\leq 4H\sum_{s,a,b,h}w_h(s,a,b)\log(N_h^{K+1}(s,a,b)).
\end{align*}
\end{lemma}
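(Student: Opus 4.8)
The plan is to establish \Cref{lemma:supp:bonus_sum} by the pigeon-hole / integral-comparison scheme of Lemma~11 in \cite{Zihan_Zhang:RL:tabular}; the passage from single-agent RL to the two-player game only enlarges the visitation tuple from $(s,a,h)$ to $(s,a,b,h)$, which is immaterial for a purely combinatorial counting bound. The first move is to regroup the double sum over $(k,h)$ by the tuple that is visited: since each summand depends on $k$ only through $(s_h^k,a_h^k,b_h^k,h)$ and the associated counter, and all weights are nonnegative,
\[
\sum_{k=1}^{K}\sum_{h=1}^{H}\frac{w_h(s_h^k,a_h^k,b_h^k)}{(n_h^k)^\alpha}=\sum_{s,a,b,h}w_h(s,a,b)\!\!\sum_{k:(s_h^k,a_h^k,b_h^k)=(s,a,b)}\!\!\frac{1}{(n_h^k)^\alpha},
\]
and analogously for the $\check n_h^k$ sums. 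It then suffices to bound, for each fixed tuple, the inner sum over its visiting episodes by the corresponding per-tuple right-hand side, and to sum the results against $w_h(s,a,b)$.

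The common device is to partition the visiting episodes of a fixed tuple into its consecutive stages, whose lengths $e_1,e_2,\dots$ obey $e_1=H$ and $e_{j+1}=\lfloor(1+\tfrac1H)e_j\rfloor$, and to note that the counter entering the summand at an episode lying in stage $j$ is the value recorded at the end of stage $j-1$, namely the running total $S_{j-1}:=\sum_{i<j}e_i$ for $n_h^k$ and the previous stage length $e_{j-1}$ for $\check n_h^k$. Hence the per-tuple sum for the total count is at most $\sum_j e_j\,S_{j-1}^{-\alpha}$; since consecutive partial sums satisfy $S_j\le(2+\tfrac1H)S_{j-1}$ (a consequence of $e_j\le(1+\tfrac1H)S_{j-1}$), one may replace $S_{j-1}^{-\alpha}$ by a constant multiple of $S_j^{-\alpha}$ and recognize a right-Riemann sum for the decreasing integrand $x^{-\alpha}$, so that $\sum_j e_j S_{j-1}^{-\alpha}\le\tfrac{2^\alpha}{1-\alpha}\int_0^{N}x^{-\alpha}\,dx=\tfrac{2^\alpha}{1-\alpha}N^{1-\alpha}$ with $N:=N_h^{K+1}(s,a,b)$, which is the first inequality. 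For $\alpha=1$ the same telescoping against $\int_0^N x^{-1}\,dx$ gives $\sum_j e_j S_{j-1}^{-1}\le 2\log N$ (the summand vanishing when $N\le1$), i.e.\ the claimed $2\log(N_h^{K+1})$.

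The stage-count inequalities are where I expect the real work, since the factor $H^\alpha$ must be produced from the stage structure with the exact constant $2^{2\alpha}$. The idea is a termwise comparison of the stage-count sum $\sum_j e_j\,e_{j-1}^{-\alpha}$ with the total-count sum $\sum_j e_j\,S_{j-1}^{-\alpha}$ already controlled above: because the geometric growth rate $1+\tfrac1H$ forces $S_{j-1}=\sum_{i<j}e_i\le(H+1)\,e_{j-1}$, each stage-count summand is at most $(H+1)^\alpha$ times the matching total-count summand, whence the stage-count sum is bounded by $(H+1)^\alpha\cdot\tfrac{2^\alpha}{1-\alpha}N^{1-\alpha}$, and $2^\alpha(H+1)^\alpha\le(4H)^\alpha=2^{2\alpha}H^\alpha$ for $H\ge1$ delivers the stated constant; the $\alpha=1$ version follows identically, the factor $(H+1)$ turning $2\log N$ into $4H\log N$. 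The delicate points I anticipate are verifying the geometric bound $S_{j-1}\le(H+1)e_{j-1}$ in the presence of the floors in the recursion (which perturb the exact ratios and must be absorbed into universal constants), and pinning down the off-by-one in the counter convention so that the comparison constants collapse cleanly to $2^\alpha$ and $2^{2\alpha}$ rather than merely to some $O(1)$ and $O(H^\alpha)$ factors.
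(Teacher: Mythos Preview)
The paper does not give its own proof of this lemma; it is listed among the supporting lemmas in \Cref{app:supportlemma} and attributed as a variant of Lemma~11 in \cite{Zihan_Zhang:RL:tabular}, with no argument supplied. Your reconstruction follows exactly that standard argument---regrouping by the visited tuple $(s,a,b,h)$, using the stage-wise constancy of $n_h^k$ and $\check n_h^k$, and bounding the resulting stage sums $\sum_j e_j S_{j-1}^{-\alpha}$ and $\sum_j e_j e_{j-1}^{-\alpha}$ via integral comparison together with the geometric relation $S_{j-1}\le(H+1)e_{j-1}$---and is correct. Your own caveats about absorbing the floors in the recursion and the off-by-one bookkeeping are well placed but do not threaten the scheme: landing on the exact constants $2^\alpha/(1-\alpha)$ and $2^{2\alpha}H^\alpha/(1-\alpha)$ requires a touch of care, but the paper only ever invokes this lemma up to universal constants (see e.g.\ \eqref{eqn:lambda-3-1}, \eqref{app-lemma:lambda:3-3}, \eqref{app-lemma:lambda:3-4}), so even the $O(1)$ and $O(H^\alpha)$ versions you can certainly deliver would suffice for every downstream use.
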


\end{document}